%% file: main.tex
\documentclass[11pt,letterpaper]{article}

\usepackage[margin=1in]{geometry}        
\usepackage{setspace}                     
\usepackage{parskip}                      
\usepackage[T1]{fontenc}                  
\usepackage[utf8]{inputenc}               
\usepackage{lmodern} 
\usepackage{graphicx}
\usepackage{natbib}
\usepackage{authblk}
\usepackage{enumerate}
\usepackage{enumitem}
\usepackage{tikz}
\usetikzlibrary{decorations.pathreplacing}
\usetikzlibrary{math}

\usepackage{amsmath,amssymb,amsthm,mathtools,bbm}

\numberwithin{equation}{section}

\theoremstyle{plain}
\newtheorem{theorem}{Theorem}[section]
\newtheorem{lemma}[theorem]{Lemma}
\newtheorem{proposition}[theorem]{Proposition}

\theoremstyle{definition}
\newtheorem{definition}{Definition}

\theoremstyle{remark}

\makeatletter
\newtheorem*{rep@theorem}{\rep@title}
\newcommand{\newreptheorem}[2]{%
\newenvironment{rep#1}[1]{%
 \def\rep@title{#2 \ref{##1} (extended version)}%
 \begin{rep@theorem}}%
 {\end{rep@theorem}}}
\makeatother

\newreptheorem{theorem}{Theorem}

\usepackage[ruled, linesnumbered]{algorithm2e}
\usepackage{xcolor}
\definecolor{lblue}{RGB}{40,103,178}
\definecolor{cred}{RGB}{177,4,14}
\definecolor{sgreen}{RGB}{46,139,87}

\usepackage[
  colorlinks,
  linkcolor=blue,
  citecolor=blue,
  urlcolor=blue,
  breaklinks
]{hyperref}

\def\R{\mathbb{R}}

\newcommand{\bZ}{\mathbf{Z}}

\newcommand{\bW}{\mathbf{W}}
\newcommand{\bxi}{\boldsymbol{\xi}}
\newcommand{\dfucb}{\textsc{DF-UCB}\,}
\newcommand{\dflcb}{\textsc{DF-LCB}\,}

\newcommand{\EE}[1]{\mathbb{E}\left[{#1}\right]}
\newcommand{\EEst}[2]{\mathbb{E}\left[{#1}\  \middle| \ {#2}\right]}
\newcommand{\Ep}[2]{\mathbb{E}_{{#1}}\left[{#2}\right]}

\newcommand{\PP}[1]{\mathbb{P}\left\{{#1}\right\}}
\newcommand{\PPst}[2]{\mathbb{P}\left\{{#1}\  \middle| \ {#2}\right\}}
\newcommand{\Ppst}[3]{\mathbb{P}_{{#1}}\left\{{#2}\  \middle| \ {#3}\right\}}
\newcommand{\Pp}[2]{\mathbb{P}_{{#1}}\left\{{#2}\right\}}

\newcommand{\One}[1]{{\mathbbm{1}}\left\{{#1}\right\}}
\newcommand{\one}[1]{{\mathbbm{1}}_{{#1}}}
\newcommand{\iidsim}{\stackrel{\textnormal{iid}}{\sim}}

\newcommand\independent{\protect\mathpalette{\protect\independenT}{\perp}}
\def\independenT#1#2{\mathrel{\rlap{$#1#2$}\mkern2mu{#1#2}}}

\newcommand{\dtv}{\mathrm{d}_{\mathrm{TV}}}
\newcommand{\dw}{\mathrm{d}_{\mathrm{W}}}
\newcommand{\Ph}{\widehat{P}}
\newcommand{\Qh}{\widehat{Q}}

\usepackage[nameinlink,noabbrev]{cleveref}
\crefname{assumption}{assumption}{assumptions}

\newcommand{\papertitle}{Distribution-free two-sample testing with blurred total variation distance}
\newcommand{\paperauthorA}{Rohan Hore}
\newcommand{\paperauthorB}{Rina Foygel Barber}
\newcommand{\affilOne}{Department of Statistics and Data Science, Carnegie Mellon University}
\newcommand{\affilTwo}{Department of Statistics, University of Chicago}

\newcommand{\corrEmail}{rhore@andrew.cmu.edu}
\newcommand{\keywordslist}{ Distribution testing, two-sample testing, distribution-free, total variation, TV distance, blurred TV distance.}

\title{\papertitle}

\author[1]{\paperauthorA\thanks{Corresponding author: \corrEmail}}
\author[2]{\paperauthorB}
\affil[1]{\affilOne}\affil[2]{\affilTwo}

\date{\today}

\newcommand{\paperabstract}[1]{%
  \begin{abstract}
    Two-sample testing, where we aim to determine whether two distributions are equal or not equal based on samples from each one, is challenging if we cannot place assumptions on the properties of the two distributions. In particular, certifying equality of distributions, or even providing a tight upper bound on the total variation (TV) distance between the distributions, is impossible to achieve in a distribution-free regime. In this work, we examine the blurred TV distance, a relaxation of TV distance that enables us to perform inference without assumptions on the distributions. We provide theoretical guarantees for distribution-free upper and lower bounds on the blurred TV distance, and examine its properties in high dimensions.
  \end{abstract}
}
\newcommand{\paperkeywords}[1]{%
  \vspace{0.5em}
  \noindent\textbf{Keywords:} #1
}

\usepackage[nameinlink,noabbrev]{cleveref}

\begin{document}

\maketitle

\paperabstract{%

}

\paperkeywords{\keywordslist}

\section{Introduction}\label{sec:intro}
Suppose we observe two independent samples $X_1,\dots,X_n\iidsim P$ and $Y_1,\dots,Y_m\iidsim Q$, where $P$ and $Q$ are distributions on $\R^d$. A fundamental problem in classical statistics and modern machine learning is the \emph{two-sample testing problem},
\[
H_0: P=Q \quad \text{vs.} \quad H_1: P\neq Q,
\]
which seeks to determine whether the two distributions are identical. This problem naturally arises in numerous real-world applications, spanning fields like genomics \citep{stegle2010robust}, finance \citep{horvath2013estimation}, and generative modeling \citep{bischoff2024practical}. Despite its long history, constructing a \emph{powerful} non-parametric test that is not overly conservative remains a non-trivial task. This is particularly challenging in higher dimensions, but even for small $d$ the problem is non-trivial if we wish to avoid placing assumptions on the distributions.

A convenient way to reformulate the problem is by considering some measure of distance $D(P,Q)$, such as total variation (TV) distance, Kullback--Leibler (KL) divergence, or Wasserstein distance. If $D$ is \emph{proper}, meaning $D(P,Q)=0$ if and only if $P=Q$, then we may test for equality of distributions by constructing high-probability upper and lower bounds on $D(P,Q)$ based on the combined data $\mathcal{D}_{n,m} := (X_1,\ldots,X_n,Y_1,\ldots,Y_m)$.
Ideally, we would wish to derive such bounds without imposing assumptions on the underlying distributions, so that they remain valid uniformly over all $P,Q\in\mathcal{P}_d$, where $\mathcal{P}_d$ denotes the set of distributions supported on $\R^d$. This leads to the following notion of distribution-free confidence bounds.

\begin{definition}[Distribution-free confidence bounds]\label{def:DF_confidence_bounds}
For any $\alpha\in(0,1)$, a statistic $\hat{L}_\alpha$ is said to form a distribution-free lower confidence bound (\dflcb) on $D$ at level $1-\alpha$ if, for all $n,m\in\mathbb{N}$ and all $P,Q\in\mathcal{P}_d$,
\[\Pp{P^n\times Q^m\times\textnormal{Unif}[0,1]}{D(P,Q)\ge \hat{L}_\alpha(\mathcal{D}_{n,m},\zeta)} \ge 1-\alpha.\]
Similarly, $\hat{U}_\alpha$ is said to form a distribution-free upper confidence bound (\dfucb) on $D$ at level $1-\alpha$ if, for all $n,m\in\mathbb{N}$ and all $P,Q\in\mathcal{P}_d$, 
\[\Pp{P^n\times Q^m\times\textnormal{Unif}[0,1]}{D(P,Q)\le \hat{U}_\alpha(\mathcal{D}_{n,m},\zeta)} \ge 1-\alpha.\]
\end{definition}
In these definitions, the statistics $\hat{L}_\alpha$ and $\hat{U}_\alpha$ are each a function of the observed data $\mathcal{D}_{n,m}$, in addition to a random seed $\zeta\sim\textnormal{Unif}[0,1]$ that allows for randomization in the construction of the statistics, if desired.

Relating these confidence bounds to the original testing problem, note that a large value of the \dflcb provides evidence against the null $H_0:P=Q$. In contrast, a small value of \dfucb supports the claim that $D(P,Q)$ is not large.

\paragraph{Total variation distance.} The total variation (TV) distance, $\dtv(P,Q)=\sup_{A\subseteq\R^d}|P(A)-Q(A)|$, is a canonical choice for the measure of distance $D$. TV-based analyses are central in classical statistics because of their fundamental connection to hypothesis testing and robustness of inference procedures \citep{lehmann2005testing}. More recently, TV has re-emerged in modern machine learning, particularly in the analysis of diffusion-based generative models, where it is used as a strong metric to study distributional convergence and stability properties of the learned dynamics \citep{liang2025low,li2024d}.
However, TV distance is difficult to estimate from finite samples and is highly sensitive to fine-scale differences between $P$ and $Q$, making any inference on it statistically challenging in nonparametric regimes. We next elaborate on this difficulty and show that constructing a meaningful \dfucb for $\dtv$ is, in general, theoretically infeasible. 

\subsection{Non-trivial \dfucb on total variation distance is impossible}\label{sec:DF_UCB_original_TV}
Although the definitions of a \dfucb and \dflcb appear symmetric (see Definition~\ref{def:DF_confidence_bounds}), the two problems are fundamentally different. In particular, for TV distance, we will now see that a non-trivial \dfucb is often impossible without additional assumptions. Indeed, when $P$ and $Q$ are continuous distributions, finite samples of sizes $n$ and $m$ are almost surely distinct, and thus one cannot rule out the possibility that $P$ and $Q$ are mutually singular (i.e., the two distributions have nonoverlapping supports, and thus $\dtv(P,Q)=1$). As a consequence, in the absence of structural assumptions on the distributions, any distribution-free upper confidence bound on $\dtv(P,Q)$ must be trivial, in the following sense:
\begin{theorem}\label{thm:hardness}
Fix $\alpha\in[0,1]$, any $d\geq 1$, and any $n,m\geq 1$. Let $\hat{U}_\alpha$ be any (possibly randomized) distribution-free upper confidence bound for $\dtv(\cdot,\cdot)$. Then, for any pair of distributions $P,Q\in \mathcal{P}_d$ satisfying $\textnormal{atom}(P)\cap\textnormal{atom}(Q)=\varnothing$,
\[
\Pp{P^n\times Q^m\times\textnormal{Unif}[0,1]}
{\hat{U}_\alpha(\mathcal{D}_{n,m},\zeta)=1}\;\ge\; 1-\alpha .
\]
\end{theorem}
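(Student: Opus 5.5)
We construct mutually singular distributions $P',Q'$ such that the joint law of the data under $P'^n\times Q'^m$ is arbitrarily close in total variation to the law under $P^n\times Q^m$. Since $\dtv(P',Q')=1$, validity of $\hat U_\alpha$ at $(P',Q')$ forces $\hat U_\alpha\ge 1$ with probability at least $1-\alpha$ there. This event then transfers to $(P,Q)$ up to an arbitrarily small error. The event $\{\hat U_\alpha\geq 1\}$ is the relevant one, and it can be identified with $\{\hat U_\alpha = 1\}$ since TV takes values in $[0,1]$. If one prefers not to assume $\hat U_\alpha$ is truncated at $1$, the statement should be read as $\hat U_\alpha\ge1$; we will note this.

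\textbf{Construction.} Decompose $P=P_a+P_c$ and $Q=Q_a+Q_c$ into atomic and non-atomic parts. Because the atom sets are disjoint, the atomic parts are already mutually singular with each other. For the non-atomic parts, take a fine partition of $\R^d$ and split each cell into a ``$P$-portion'' and a ``$Q$-portion'' whose probabilities are arranged randomly, or alternately, so that the marginals are nearly preserved.

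A cleaner alternative is a randomization argument. Draw a random set $S$, for instance a union of small cubes from a grid of side $h$, each included independently with probability $1/2$, translated by a uniform shift, with atoms of $P$ forced into $S$ and atoms of $Q$ forced out. Set $P'=P\mid_S$ reweighted and $Q'=Q\mid_{S^c}$ reweighted. Mixing over $S$ is the key device: the distribution of the data under the mixture $\mathbb{E}_S[P_S'^n\times Q_S'^m]$ should be close to $P^n\times Q^m$. The intuition is that, with high probability, the $n+m$ continuous sample points land in distinct cubes, and conditionally on that, inclusion or exclusion of each cube is independent.

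A slicker route is to compute directly that the mixture's density relative to $P^n\times Q^m$ tends to $1$ as $h\to0$. For each $S$, validity gives $\mathbb{P}_{P_S'^n\times Q_S'^m}\{\hat U\ge1\}\ge1-\alpha$, and averaging over $S$ keeps this bound for the mixture. Then
\[
\mathbb{P}_{P^n\times Q^m}\{\hat U\ge1\}\ \ge\ 1-\alpha-\dtv\bigl(\text{mixture},P^n\times Q^m\bigr),
\]
and letting $h\to0$ finishes the proof.

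\textbf{Main obstacle.} The hard step is bounding the TV distance between the mixture and the product law. This requires controlling several effects together: the normalizing constants $P(S)$ and $Q(S^c)$, which fluctuate around $1/2$ and concentrate as $h\to0$ because the non-atomic mass spreads over many cubes; the collision events where two continuous samples share a cube, whose probability vanishes for non-atomic measures; and the atoms, which are handled exactly by the forced inclusion.

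To keep the computation manageable, I would first replace the reweighting by a cleaner device, coupling through a Poissonization or conditioning argument. I would also first reduce to compactly supported $P,Q$ by truncation, at the cost of a further arbitrarily small TV error. If handling the atoms together with the non-atomic parts proves messy, I would instead choose $S$ deterministic but fine and unbalanced, for example with $P_c(S)\approx Q_c(S^c)\approx 1$. This fails in general, since $P_c$ and $Q_c$ may coincide, so the randomized version is the one I would commit to.
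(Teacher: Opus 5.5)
Your construction fails as written whenever $P$ or $Q$ has both a nonzero atomic part and a nonzero non-atomic part. You force the atoms of $P$ into $S$ and set $P'_S=P(\cdot\cap S)/P(S)$. Writing $P=P_a+P_c$, the normalizing constant then concentrates at $P_a(\R^d)+\tfrac12P_c(\R^d)$, not at $1/2$. So in $P'_S$, and hence in the mixture, every atom of $P$ gets asymptotically $2/(1+P_a(\R^d))>1$ times its true mass.

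For example, take $d=1$, $P=\tfrac12\delta_0+\tfrac12\,\textnormal{Unif}[0,1]$, $Q=\textnormal{Unif}[0,1]$ and $n=m=1$. Then $P(S)\to 3/4$ in probability as $h\to0$, so the mixture gives $\{0\}$ mass tending to $2/3$ instead of $1/2$. Its TV distance to $P\times Q$ therefore stays at least $1/6-o(1)$, and your argument yields only $1-\alpha-1/6+o(1)$.

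The repair is to renormalize only the non-atomic part. Set $P'_S=P_a+P_c(\R^d)\,P_c(\cdot\cap S_0)/P_c(S_0)$, where $S_0$ is the random union of cubes, and define $Q'_S$ analogously on $S_0^c$. These stay mutually singular, since the atom sets are disjoint and $P_c,Q_c$ charge no countable set. Conditioning on which sample points are atoms then reduces to the non-atomic case. There, the step you call the main obstacle still has to be carried out:
\begin{itemize}
\item $\sup_C P_c(C)\to0$ over cubes of side $h$, which holds for finite non-atomic Borel measures by a tightness--compactness argument;
\item a vanishing bound on the probability that two non-atomic sample points share a cube;
\item a lower bound on the density of the mixture $M$ with respect to $R=P^n\times Q^m$ on the no-collision event (a lower bound suffices, since $\dtv(M,R)=\int(1-\mathsf{d}M/\mathsf{d}R)_+\,\mathsf{d}R$ when $M\ll R$);
\item control of the event $P_c(S_0)=0$.
\end{itemize}
All of this is doable, but your proposal only gestures at it.

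The paper avoids the whole apparatus with one observation. Only $P$ needs to be replaced, and a finitely supported distribution whose support avoids $\textnormal{atom}(Q)$ is automatically singular to $Q$, even when $P$ and $Q$ have the same non-atomic part. Draw $\tilde X_1,\dots,\tilde X_N\iidsim P$ and use the pair $(\Ph_N,Q)$. Since $\textnormal{atom}(Q)$ is countable and contains no atom of $P$, we have $P(\textnormal{atom}(Q))=0$. So almost surely $Q(\{\tilde X_1,\dots,\tilde X_N\})=0$ and $\dtv(\Ph_N,Q)=1$. Validity at $(\Ph_N,Q)$ then gives $\hat U_\alpha\ge 1$ with probability at least $1-\alpha$, conditionally on the $\tilde X_i$.

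Averaging over the $\tilde X_i$, the $X$-sample becomes $n$ draws with replacement from $N$ i.i.d.\ points from $P$. Given that the resampled indices are distinct, this is exactly $P^n$, and a repeat has probability at most $n(n-1)/(2N)$. Letting $N\to\infty$ finishes the proof.

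In your language, this is the same mixture device with the random set taken to be $\{\tilde X_1,\dots,\tilde X_N\}$, $P'=\Ph_N$ and $Q'=Q$. It needs no partition, truncation, renormalization, or concentration, and atoms cost nothing.
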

\noindent
(Here, $\textnormal{atom}(P)=\{x\in\mathbb{R}^d: P(\{x\})>0\}$ denotes the set of atoms of $P$.) In other words, any valid \dfucb is inevitably as large as the following completely trivial solution: simply return $\hat{U}_\alpha=1$ with probability $1-\alpha$, and $\hat{U}_\alpha=0$ otherwise, independently of the data.

Although here we state this negative result in the specific context of constructing a \dfucb, the statistical difficulty of working with total variation distance has been well documented in the literature through a range of different frameworks. In particular, when $P$ and $Q$ are supported on a finite alphabet of size $K$, consistent estimation of the TV distance requires sample sizes satisfying $n,m \gg K$, and even distinguishing between $\dtv(P,Q)=0$ and $\dtv(P,Q)\geq c$ requires $n,m\gtrsim \sqrt{K}$ \citep{jiao2018minimax,valiant2017estimating}. Consequently, even in discrete settings, a large support size means that constructing tight, finite-sample confidence bounds for TV distance remains challenging without strong assumptions; this challenge is even more severe for distributions that are continuous. In particular, we emphasize that this challenge is not simply due to high dimensionality: the result of Theorem~\ref{thm:hardness} holds even in dimension $d=1$, for arbitrarily large sample sizes $n,m$.

In contrast, constructing a \dflcb for $\dtv$ is often comparatively straightforward. For any measurable set $A \subseteq \mathbb{R}^d$, it holds that $\dtv(P,Q) \ge |P(A) - Q(A)|$. Given samples $X_1,\ldots,X_n \iidsim P$ and $Y_1,\ldots,Y_m \iidsim Q$, the empirical probabilities consistently estimate $P(A)$ and $Q(A)$, respectively. For any fixed $A$, we can therefore derive tight, distribution-free lower bounds on $|P(A)-Q(A)|$---and hence on $\dtv(P,Q)$. Unlike in the case of upper confidence bounds, such lower bounds can be non-trivial for suitably chosen sets $A$ and distributions $P,Q$.

This asymmetry between upper and lower confidence bounds motivates the central idea of this work: to introduce a principled relaxation of TV distance that permits non-trivial, assumption-free guarantees for providing an upper as well as a lower bound, while retaining the interpretability of the classical TV metric.

\subsection{The blurred total variation distance}

The statistical difficulty of working with the classical TV distance has motivated a substantial body of work on alternative discrepancies that trade metric strength for improved estimability. Prominent examples include kernel-based distances such as the maximum mean discrepancy (MMD) \citep{gretton2012kernel,schrab2023mmd}, as well as optimal transport--based distances such as the Wasserstein distance \citep{ramdas2017wasserstein,wang2021two}. While these metrics admit distribution-free estimation and are often computationally tractable, they depart from TV in their natural interpretation: in particular, they do not directly capture the worst-case discrepancy between distributions, and thus do not characterize the fundamental limits of testing statistical hypotheses.

In this paper, we investigate a different relaxation, the \emph{blurred total variation} (blurred TV) distance, which compares smoothed versions of the underlying distributions, and can be viewed as an approximation to the TV distance that retains its worst-case notion of discrepancy between distributions.
We begin by defining the class of kernels used to induce smoothing. Let $\mathcal{K}_d$ denote the collection of all probability densities on $\R^d$, which we refer to as \emph{kernels}. Given a kernel $\psi\in\mathcal{K}_d$ and a bandwidth parameter $h>0$, define the scaled kernel $\psi_h(u) = h^{-d}\psi(u/h)$. A canonical and widely used choice of kernel is the Gaussian density:
\begin{equation}\label{kernel:gaussian}
\psi(u) = \frac{1}{(2\pi)^{d/2}} \exp\bigl(-\|u\|^2_2/2\bigr).
\end{equation}
\begin{definition}[Blurred total variation distance]\label{def:blurred_tv}
Let $\psi\in\mathcal{K}_d$ and $h>0$. The \emph{blurred total variation distance} between $P,Q\in\mathcal{P}_d$ is defined as
\begin{equation}\label{eqn:smoothed_TV}
\dtv^h(P,Q) := \dtv(P\ast\psi_h,\, Q\ast\psi_h).
\end{equation}
\end{definition}
Here $\ast$ denotes the convolution operation: we can interpret $P\ast\psi_h$ as the distribution of $X+h\xi$, where $X\sim P$ and (independently) $\xi\sim \psi$.
In words, the blurred TV distance corresponds to comparing distributions in total variation after passing them through a common smoothing operation. Intuitively, this measure is likely similar to TV distance if the bandwidth $h>0$ of the blur is small. For convenience, we will also interpret $\dtv^0(P,Q)$ as $\dtv(P,Q)$.

\paragraph{Prior work on blurred TV.} The blurred TV distance, and related definitions, have been studied previously in several contexts. In information theory, total variation after smoothing arises naturally in the study of contraction properties of additive-noise channels \citep{du2017strong}. \citet{lugosi2022generalization} uses blurred TV to obtain generalization bounds on learning algorithms. More generally, \citet{kanamori2024robust} introduces smoothed total variation as a class of integral probability metrics. The work of \citet{polyanskiy2015dissipation} establishes a central limit theorem in blurred TV, while \citet{goldfeld2020convergence,goldfeld2020limit} study convergence of the empirical measure in the blurred TV metric and its limiting distribution, particularly for the Gaussian kernel~\eqref{kernel:gaussian}. 

\subsection{Our contributions}
In this work, we position blurred TV as a principled relaxation of the classical TV distance for distribution-free inference, and demonstrate that meaningful and practically usable finite-sample confidence bounds for blurred TV are possible, even though classical TV suffers from a fundamental hardness (Theorem~\ref{thm:hardness}). Our main contributions are:
\begin{itemize}
    \item We construct distribution-free lower and upper confidence bounds (\dflcb\ and \dfucb) for the blurred TV distance. These bounds are designed to satisfy \textbf{assumption-free, finite-sample validity}: the proposed confidence bounds are fully distribution-free and valid uniformly over all distributions $P,Q\in\mathcal{P}_d$, without requiring smoothness, moment, or other structural assumptions, with guarantees that hold for arbitrary sample sizes $n,m$ and dimension $d$.
    \item In the course of developing these confidence bounds, we establish several structural and analytical properties of the blurred TV distance that further justify its role as a principled relaxation of classical TV. 
    \item We study the behavior of blurred TV in high dimensions and show that, although distribution-free inference can lose power when the dimension $d$ grows too quickly relative to the sample sizes $n$ and $m$, the blurred TV distance depends primarily on the intrinsic dimension of the data. This suggests that meaningful distribution-free inference remains possible even in high dimensions when the distributions are supported on a low-dimensional subspace or manifold.
\end{itemize}

\subsection{Organization of the paper}
The remainder of the paper is organized as follows. First, in Section~\ref{sec:prop_blurred_TV} we establish some key properties of both the oracle and the blurred TV measure. In Section~\ref{sec:confidence_bounds}, we present the distribution-free confidence bounds on the blurred TV, and in Section~\ref{sec:highD_blurred_TV}, we study the behavior of these bounds in high dimensions. We conclude with a short discussion in Section~\ref{sec:discussion}. All proofs are deferred to the appendix.

\section{Properties of blurred TV distance}\label{sec:prop_blurred_TV}
In this section, we establish several fundamental properties of the blurred TV distance that motivate its choice as a principled relaxation of the classical TV distance. Together, these features position blurred TV as a natural and tractable surrogate for TV in nonparametric settings.

\subsection{The role of the bandwidth}\label{sec:prop_oracle_blurred_TV}
We begin by establishing several properties of the blurred TV distance that clarify its relationship to the classical total variation metric, and in particular, the role of the bandwidth parameter $h$ in governing this transition.
\begin{proposition}\label{prop:blurred_tv&tv_properties}
Fix distributions $P,Q\in \mathcal{P}_d$ and a kernel $\psi\in \mathcal{K}_d$. Then:
\begin{enumerate}[label=(\roman*)]
    \item $\dtv^h(P,Q)\le \dtv(P,Q)$ for all $h\geq 0$;
    \item $\lim_{h\to 0}\dtv^h(P,Q) = \dtv(P,Q)$;
    \item $\lim_{h\to\infty} \dtv^h(P,Q) = 0$;
    \item the map $h\mapsto \dtv^h(P,Q)$ is continuous on $h\in[0,\infty)$.
\end{enumerate}
\end{proposition}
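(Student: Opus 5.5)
The plan is to work throughout with the signed measure $\mu = P - Q$, which has total variation $\|\mu\| = 2\dtv(P,Q)$. For $h>0$ we have $\dtv^h(P,Q) = \tfrac12\|\mu\ast\psi_h\|_{L^1}$, where $\mu\ast\psi_h$ is the $L^1$ function $x\mapsto\int\psi_h(x-y)\,d\mu(y)$.

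\textbf{Part (i)} is a data-processing argument. Convolution with $\psi_h$ is the Markov kernel $x\mapsto$ law of $x+h\xi$, and Markov kernels contract TV. Concretely,
\[
\|\mu\ast\psi_h\|_{L^1}\le \int\!\!\int \psi_h(x-y)\,dx\,d|\mu|(y)=\|\mu\|.
\]

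\textbf{Part (iv)} will be done before (ii), splitting into $h>0$ and $h=0$.

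For $h_0>0$, write $\mu\ast\psi_h(x)=\int \psi_h(x-y)\,d\mu(y)$. The same Fubini bound gives
\[
\|\mu\ast\psi_h-\mu\ast\psi_{h_0}\|_{L^1}\le \|\mu\|\,\|\psi_h-\psi_{h_0}\|_{L^1}.
\]
So it suffices to show that $h\mapsto\psi_h$ is continuous in $L^1$ at $h_0>0$. This is standard: approximate $\psi$ in $L^1$ by a continuous compactly supported $g$. Note that $\|\psi_h-g_h\|_{L^1}=\|\psi-g\|_{L^1}$ by scale invariance. Then $g_h\to g_{h_0}$ in $L^1$ by dominated convergence.

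At $h=0$, continuity is exactly statement (ii), so (iv) reduces to (ii) together with the case above.

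\textbf{Part (ii)} is the main obstacle, since the measures may be singular. By (i), $\limsup_{h\to0}\dtv^h\le\dtv$, so it remains to show $\liminf_{h\to0}\dtv^h\ge\dtv$. I would use the dual representation
\[
\dtv(P,Q)=\sup\Bigl\{\tfrac12\Bigl|\int f\,d\mu\Bigr| : f\in C_c(\R^d),\ \|f\|_\infty\le1\Bigr\}.
\]
This holds because bounded continuous compactly supported functions are dense enough: use a Hahn decomposition and Lusin's theorem with the finite measure $|\mu|$.

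Fix such an $f$. Then
\[
\int f\,d(\mu\ast\psi_h)=\int (f\ast\tilde\psi_h)\,d\mu,\qquad \tilde\psi(u)=\psi(-u),
\]
and $\|f\ast\tilde\psi_h\|_\infty\le1$. Moreover $f\ast\tilde\psi_h(y)=\mathbb{E}f(y+h\xi)\to f(y)$ pointwise, by bounded convergence and continuity of $f$. Dominated convergence with respect to $|\mu|$ then gives $\int f\ast\tilde\psi_h\,d\mu\to\int f\,d\mu$.

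Combining, $2\liminf_h\dtv^h\ge\liminf_h\bigl|\int f\,d(\mu\ast\psi_h)\bigr|=\bigl|\int f\,d\mu\bigr|$. Taking the supremum over $f$ yields (ii).

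\textbf{Part (iii).} Again use $\|\mu\ast\psi_h\|_{L^1}$. Since $\mu(\R^d)=0$, we can write
\[
\mu\ast\psi_h(x)=\int\bigl(\psi_h(x-y)-\psi_h(x)\bigr)\,d\mu(y).
\]
Hence
\[
\|\mu\ast\psi_h\|_{L^1}\le\int \|\psi_h(\cdot-y)-\psi_h\|_{L^1}\,d|\mu|(y)=\int \|\psi(\cdot-y/h)-\psi\|_{L^1}\,d|\mu|(y),
\]
using scale invariance of the $L^1$ norm. Continuity of translation in $L^1$ gives pointwise convergence of the integrand to $0$ as $h\to\infty$. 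The integrand is bounded by $2$, so dominated convergence with respect to the finite measure $|\mu|$ yields $\dtv^h\to0$.
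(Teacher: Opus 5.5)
Your proof is correct. For (i)–(iii) it is essentially the paper's argument, written for the signed measure $\mu=P-Q$ rather than via suprema over test functions:
\begin{itemize}
\item In (i), both proofs use the same data-processing step.
\item In (ii), both approximate TV by continuous test functions and then apply dominated convergence. You justify the $C_c$ representation through Hahn decomposition and Lusin's theorem, where the paper cites it.
\item In (iii), your subtraction of $\psi_h(x)\,\mu(\R^d)=0$ does the job of the paper's triangle inequality through $\psi$. Both reduce to $L^1$-continuity of translations of $\psi$ (the paper's Lemma~\ref{lem:dtv_pointmass}) plus dominated convergence, yours against $|\mu|$ and the paper's against $P+Q$.
\end{itemize}

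The genuine difference is (iv) at $h_0>0$. The paper rescales to $g(X/h+\xi)$ and compares $\delta_{X/h}\ast\psi$ with $\delta_{X/h_k}\ast\psi$. These are translates differing by the random amount $X(1/h-1/h_k)$, so the paper needs the translation lemma again, together with dominated convergence over the laws of $X$ and $Y$.

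You instead put the perturbation on the kernel. Using $\|\mu\ast(\psi_h-\psi_{h_0})\|_{L^1}\le\|\mu\|\,\|\psi_h-\psi_{h_0}\|_{L^1}$, you reduce to continuity of dilations $h\mapsto\psi_h$ in $L^1$. Your route gives the explicit bound $|\dtv^h(P,Q)-\dtv^{h_0}(P,Q)|\le \dtv(P,Q)\,\|\psi_{h/h_0}-\psi\|_{L^1}$. This bound is uniform over all $P,Q$ and depends only on the ratio $h/h_0$, so the maps $h\mapsto\dtv^h(P,Q)$ are equicontinuous in $\log h$. The paper's argument is only qualitative and depends on the distributions.

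In exchange, the paper's route reuses a single tool, translation continuity in Lemma~\ref{lem:dtv_pointmass}, which it needs anyway for Lemma~\ref{lem:dtvh_via_coupling} and the convergence of the empirical measure.
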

In particular, parts (ii) and (iii) illustrate a fundamental trade-off in the choice of $h$: although a larger bandwidth $h$ facilitates statistically tractable, distribution-free inference due to greater smoothing, a smaller choice of bandwidth $h$ retains finer resolution with respect to the original distributions. We will return to this point in our results and experiments below.

Based on Proposition~\ref{prop:blurred_tv&tv_properties}, one might ask whether $\dtv^h(P,Q)$ decreases monotonically as the bandwidth $h$ increases, reflecting the increasing degree of smoothing induced by convolution. In particular, the commonly used Gaussian kernel satisfies this type of monotonicity property:
\begin{proposition}\label{prop:gaussian_monotonicity}
Suppose $\psi$ is the Gaussian kernel defined in \eqref{kernel:gaussian}. Then, for any two distributions $P, Q\in \mathcal{P}_d$, the blurred TV distance $\dtv^h(P,Q)$ is monotonically decreasing in the bandwidth $h$.
\end{proposition}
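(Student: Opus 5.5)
The plan is to use the semigroup property of Gaussian convolution together with the data processing inequality for TV distance. The key observation is that if $\psi$ is the standard Gaussian density, then $\psi_h$ is the density of $\mathcal{N}(0,h^2 I_d)$. For $0\le h_1<h_2$, we therefore have
\[
\psi_{h_2} = \psi_{h_1}\ast \phi_{h_1,h_2},
\]
where $\phi_{h_1,h_2}$ denotes the density of $\mathcal{N}(0,(h_2^2-h_1^2)I_d)$. This holds because sums of independent Gaussians are Gaussian with added variances: if $\xi,\xi'\iidsim\psi$, then $h_1\xi+\sqrt{h_2^2-h_1^2}\,\xi'$ has the same law as $h_2\xi$. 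When $h_1=0$, interpret $P\ast\psi_0=P$; this is consistent with the convention $\dtv^0=\dtv$.

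Associativity of convolution then gives
\[
P\ast\psi_{h_2}=(P\ast\psi_{h_1})\ast\phi_{h_1,h_2},\qquad Q\ast\psi_{h_2}=(Q\ast\psi_{h_1})\ast\phi_{h_1,h_2}.
\]
So the blurred distributions at bandwidth $h_2$ are obtained from those at bandwidth $h_1$ by passing both through the same Markov kernel $x\mapsto \mathcal{N}(x,(h_2^2-h_1^2)I_d)$. By the data processing inequality for TV distance, applying a common Markov kernel cannot increase TV distance. Hence $\dtv^{h_2}(P,Q)\le\dtv^{h_1}(P,Q)$, which is the claimed monotonicity.

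For self-containedness, I would verify data processing directly. Let $\mu=P\ast\psi_{h_1}$, $\nu=Q\ast\psi_{h_1}$ and $\phi=\phi_{h_1,h_2}$. For any measurable set $A$,
\[
(\mu\ast\phi)(A)-(\nu\ast\phi)(A)=\int \bigl(\mu(A-z)-\nu(A-z)\bigr)\phi(z)\,dz .
\]
The absolute value of the integrand is at most $\dtv(\mu,\nu)$, and $\phi$ integrates to one. Taking the supremum over $A$ gives the bound. Part (i) of Proposition~\ref{prop:blurred_tv&tv_properties} is in fact the special case $h_1=0$ of this argument.

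There is no real obstacle here; the only point needing care is the Gaussian semigroup identity. This identity is exactly what fails for general kernels, which is why the statement is restricted to the Gaussian case.
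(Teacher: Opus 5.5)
Your proposal is correct and follows essentially the same route as the paper: both use the Gaussian semigroup identity $\psi_{h_2}=\psi_{h_1}\ast\psi_{h_3}$ with $h_3=\sqrt{h_2^2-h_1^2}$, then observe that smoothing by the common kernel $\psi_{h_3}$ cannot increase TV. The paper cites Proposition~\ref{prop:blurred_tv&tv_properties}(i) for that last step, whereas you re-derive this data processing inequality directly.
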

However, for other choices of $\psi$, the blurred TV distance is not necessarily monotone in $h$ in general. Figure~\ref{fig:monotonicity_blurred_tv} illustrates this phenomenon with an example.

\begin{figure}[t]
    \centering
    \includegraphics[scale=0.75]{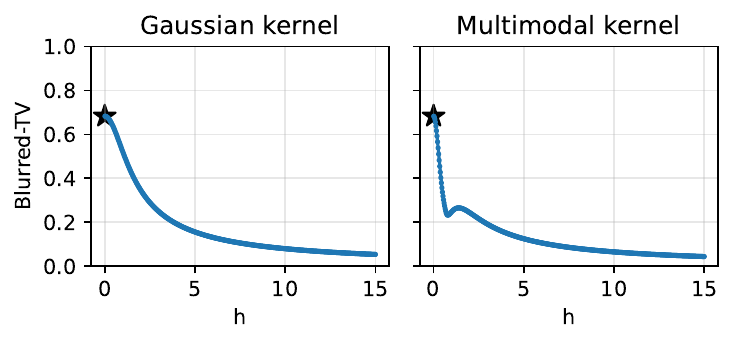}
    \caption{Near-monotonic behavior of blurred TV with bandwidth $h$. Here $P=\mathcal{N}(1,1)$, $Q=\mathcal{N}(-1,1)$, with $\dtv(P,Q) = \Phi(1)-\Phi(-1) \approx 0.683$, marked with a $\star$ in the figure. $\psi$ is either the Gaussian kernel (left), or a multimodal kernel, given by a density of the mixture distribution $\tfrac13\,\mathcal{N}(-4,1)+\tfrac13\,\mathcal{N}(0,1)+\tfrac13\,\mathcal{N}(4,1)$ (right).
    } 
    \label{fig:monotonicity_blurred_tv}
\end{figure}

\subsection{Convergence of the empirical blurred TV}\label{sec:in_blurred_TV_convergence}
In practice, we do not have access to the true distributions $P$ and $Q$, and instead can only work with the empirical distributions $\Ph_n=\frac{1}{n}\sum_{i=1}^n \delta_{X_i}$ and $\Qh_m=\frac{1}{m}\sum_{j=1}^m\delta_{Y_j}$ of the available data $X_1,\dots,X_n$ and of $Y_1,\dots,Y_m$, respectively. The appeal of blurred TV is that (unlike TV) it permits approximation via the empirical distributions, without assumptions on the original distributions. Here we develop some of these properties to build towards our main results, constructing the \dflcb and \dfucb later on.

First, we verify that the empirical measure $\Ph_n$ converges to $P$ in blurred TV, as $n\to\infty$.
\begin{theorem}\label{thm:blurred_TV_convergence_asymptotic}
     For any distribution $P\in \mathcal{P}_d$, any kernel $\psi\in\mathcal{K}_d$, and any fixed $h>0$,
     \[\lim_{n\to\infty} \EE{\dtv^h(\Ph_n,P)} = 0.\]
 \end{theorem}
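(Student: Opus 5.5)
The plan is to write the blurred TV between the empirical measure and $P$ as an $L^1$ distance between densities and control it by an approximation argument. Let $p_h = P\ast\psi_h$ and $\hat p_{n,h} = \Ph_n\ast\psi_h$, so $\hat p_{n,h}(x)=\frac1n\sum_i \psi_h(x-X_i)$ and $\dtv^h(\Ph_n,P)=\frac12\int|\hat p_{n,h}-p_h|$. This is the $L^1$ error of the kernel density estimator at a \emph{fixed} bandwidth $h$, measured against its own mean $p_h=\EE{\hat p_{n,h}}$. There is no bias term, so only the stochastic fluctuation has to vanish.

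The first step reduces to a nice kernel. Since $\psi\in L^1$, choose a continuous, compactly supported density $\phi$ with $\|\psi-\phi\|_{L^1}\le\epsilon$; scaling preserves $L^1$ distance, so $\|\psi_h-\phi_h\|_{L^1}\le\epsilon$. Convolving a probability measure with a kernel does not increase $L^1$ norm, so
\[
\|P\ast\psi_h - P\ast\phi_h\|_{L^1}\le\epsilon,\qquad \|\Ph_n\ast\psi_h-\Ph_n\ast\phi_h\|_{L^1}\le\epsilon
\]
deterministically. By the triangle inequality, $\dtv^h(\Ph_n,P)\le \epsilon + \frac12\int|\Ph_n\ast\phi_h-P\ast\phi_h|$. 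It therefore suffices to prove the result for $\phi$ bounded and compactly supported (say on a ball of radius $R$), and then let $\epsilon\to0$.

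The second step is a tightness truncation. Choose a compact set $K$ (a large ball) with $P(K^c)\le\epsilon$. Let $K_h$ be the $hR$-enlargement of $K$. The part of $\hat p_{n,h}$ coming from points $X_i\notin K$ has total mass $\frac1n\#\{i:X_i\notin K\}$, whose expectation is at most $\epsilon$; the same bound holds for $p_h$. Outside $K_h$ only these contributions survive, so
\[
\EE{\int_{K_h^c}|\hat p_{n,h}-p_h|}\le 2\epsilon.
\]
On the bounded set $K_h$, for each fixed $x$ we have $\hat p_{n,h}(x)-p_h(x)$ equal to an average of i.i.d.\ mean-zero terms bounded by $\|\phi\|_\infty h^{-d}$. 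Hence $\EE{|\hat p_{n,h}(x)-p_h(x)|}\le \|\phi\|_\infty h^{-d}/\sqrt n$. Fubini together with Cauchy--Schwarz gives
\[
\EE{\int_{K_h}|\hat p_{n,h}-p_h|}\le \mathrm{Leb}(K_h)\,\|\phi\|_\infty h^{-d}/\sqrt n,
\]
which tends to $0$ as $n\to\infty$. Combining the pieces yields $\limsup_n \EE{\dtv^h(\Ph_n,P)}\le C\epsilon$ for every $\epsilon>0$, which proves the claim.

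The main obstacle is that $P$ is arbitrary, with no moments and possibly atoms. A naive variance bound $\int\sqrt{\mathrm{Var}(\hat p_{n,h}(x))}\,dx$ over all of $\R^d$ need not be finite. The kernel approximation combined with truncation to a compact set via tightness is exactly what handles this, and I expect the bookkeeping of the tail term to need the most care. In particular, the contribution from points outside $K$ must be controlled in expectation by $P(K^c)$ rather than pointwise.
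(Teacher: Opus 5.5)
Your proof is correct, but it takes a different route from the paper's main argument. The paper truncates $P$ by radially projecting onto a ball of radius $R$, at a cost of $2P(\|X\|_2>R)$ in expectation. It then controls the bounded part through the $1$-Wasserstein distance: it imports a known rate for $\EE{\dw(\Qh_n,Q)}$ when $Q$ is compactly supported. It converts this into blurred TV via Lemma~\ref{lem:dtvh_via_coupling}, namely $\dtv^h(P_0,P_1)\le\epsilon+\dw(P_0,P_1)/r(\epsilon,h,\psi)$. That lemma rests on uniform continuity of $x\mapsto\delta_x\ast\psi_h$ in TV (Lemma~\ref{lem:dtv_pointmass}, proved by approximating $\psi$ with a continuous density).

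You instead approximate the kernel directly in $L^1$ by a bounded, compactly supported density. You then bound the fixed-bandwidth KDE error by a pointwise variance bound integrated over a bounded set. This is essentially the argument of the paper's quantitative appendix result, Theorem~\ref{thm:blurred_TV_convergence_with_rate}, extended from bounded kernels to arbitrary $\psi\in\mathcal{K}_d$ by your first step.

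Your route is elementary and self-contained: it uses only $\EE{|Z|}\le\mathrm{Var}(Z)^{1/2}$, Fubini, and density of $C_c$ in $L^1$, with no empirical-Wasserstein rates. It also gives an explicit $n^{-1/2}$ rate at each fixed truncation level. The paper's route instead yields a structural fact of independent interest: blurred TV is dominated by $\epsilon$ plus a multiple of $\dw$, so any $W_1$-convergence transfers. Its translation-continuity lemma is also shared with the proof of Proposition~\ref{prop:blurred_tv&tv_properties}.

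One simplification you could borrow from the appendix is to write $\frac12\int|\Ph_n\ast\phi_h-P\ast\phi_h|=\int(P\ast\phi_h-\Ph_n\ast\phi_h)_+$. Then the contribution off any set $A$ is bounded deterministically by $(P\ast\phi_h)(A^c)$. Compact support of $\phi$ becomes unnecessary (boundedness suffices), and the tail bookkeeping you flagged as delicate disappears.
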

It is also possible to give more precise, quantitative results, establishing a specific rate of convergence under mild assumptions on $P,Q$ and on $\psi$. For instance, \citet[Proposition~2]{goldfeld2020convergence} establishes this type of result for Gaussian kernels; see also Appendix~\ref{app:additional_results} for a quantitative result for general $\psi$.

The above theorem implies that, with respect to the blurred TV metric, the empirical distributions $\Ph_n$ and $\Qh_m$ offer accurate approximations to $P$ and $Q$. In particular, this implies that $\dtv^h(P,Q)$ should be possible to estimate via the corresponding empirical quantity, $\dtv^h(\Ph_n,\Qh_m)$. Our next result makes this intuition precise by establishing bounds on the expected value of this empirical estimator. Before proceeding, we first define the split empirical measures
\[
\Ph_{n/2}^{(1)} := \frac{1}{\lfloor n/2\rfloor}\sum_{i=1}^{\lfloor n/2\rfloor}\delta_{X_i}, \qquad \Ph_{n/2}^{(2)} := \frac{1}{\lceil n/2\rceil}\sum_{i=\lfloor n/2\rfloor+1}^{n}\delta_{X_i},
\]
based on two (approximately) equal halves of the sample $X_1,\dots,X_n$ of data from $P$ (where we implicitly assume $n\geq 2$ in order for this split to be defined). The analogous quantities $\Qh_{m/2}^{(1)}$ and $\Qh_{m/2}^{(2)}$ are defined similarly using the data $Y_1,\dots,Y_m$ (when $m\geq 2$).

\begin{proposition}\label{prop:E_inequality}
Fix any $d\geq1$, any  distributions $P,Q\in \mathcal{P}_d$, any kernel $\psi\in\mathcal{K}_d$, and any bandwidth $h>0$. Then, for all $n,m\geq 1$,
\[
\EE{\dtv^h(\Ph_n,\Qh_m)} \ge \dtv^h(P,Q) \ge \EE{\dtv^h(\Ph_n,\Qh_m)} - \Delta_{n,m,h},
\]
where
\[
\Delta_{n,m,h}:=\EE{\dtv^h(\Ph^{(1)}_{n/2},\Ph^{(2)}_{n/2})}+\EE{\dtv^h(\Qh^{(1)}_{m/2},\Qh^{(2)}_{m/2})},
\]
or we simply take $\Delta_{n,m,h}=1$ if $\min\{n,m\}=1$.
\end{proposition}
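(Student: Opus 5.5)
The plan is to prove the two inequalities separately, using only convexity of total variation and the triangle inequality, after rewriting things in a convenient form. For a (possibly random) distribution $\mu$, write $\mu_h := \mu\ast\psi_h$. Convolution with $\psi_h$ is linear in the measure. Hence $\EE{(\Ph_n)_h} = P_h$, in the sense that the mixture of $(\Ph_n)_h$ over the data equals $P_h$: the density of $(\Ph_n)_h$ is $\frac1n\sum_i \psi_h(\cdot - X_i)$, and its expectation is the density of $P\ast\psi_h$. Similarly, $\EE{(\Qh_m)_h} = Q_h$.

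\emph{Lower bound on the expectation.} I would write $\dtv(\mu,\nu)=\frac12\int|f_\mu-f_\nu|$, where $f_\mu, f_\nu$ are densities. Since $(\Ph_n)_h$ and $(\Qh_m)_h$ have densities that are jointly measurable in the data and the evaluation point, Jensen's inequality applied pointwise inside the integral gives
\[
\dtv^h(P,Q)=\frac12\int\left|\EE{f_{(\Ph_n)_h}(z)-f_{(\Qh_m)_h}(z)}\right|dz\le \frac12\int\EE{\left|f_{(\Ph_n)_h}(z)-f_{(\Qh_m)_h}(z)\right|}dz.
\]
By Tonelli's theorem, the right-hand side equals $\EE{\dtv^h(\Ph_n,\Qh_m)}$. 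This holds for every $n,m\ge1$.

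\emph{Upper bound on $\dtv^h(P,Q)$.} If $\min\{n,m\}=1$, there is nothing to prove, since $\dtv^h\le 1$ and $\EE{\dtv^h(\Ph_n,\Qh_m)}\le 1$. Otherwise, write the full empirical measure as a mixture of its two halves:
\[
\Ph_n=\lambda\Ph^{(1)}_{n/2}+(1-\lambda)\Ph^{(2)}_{n/2},\qquad \lambda=\lfloor n/2\rfloor/n,
\]
and similarly $\Qh_m=\lambda'\Qh^{(1)}_{m/2}+(1-\lambda')\Qh^{(2)}_{m/2}$ with $\lambda'=\lfloor m/2\rfloor/m$. Introduce an independent copy $\Ph'_n$ built from fresh samples from $P$. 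By the triangle inequality,
\[
\dtv^h(\Ph_n,\Qh_m)\ge \dtv^h(\Ph'_n,\Qh_m)-\dtv^h(\Ph_n,\Ph'_n).
\]
Taking expectations, the first term has expectation at least $\dtv^h(P,Q)$ by the Jensen step just proved, but that inequality points the wrong way for this purpose. I would therefore instead run the Jensen argument conditionally. Conditional on $\Qh_m$, averaging over $\Ph_n$ gives $\EE{\dtv^h(\Ph_n,\Qh_m)\mid \Qh_m}\ge \dtv^h(P,\Qh_m)$. The needed direction is an upper bound on $\dtv^h(P,Q)$ in terms of the expectation, which means I must control the gap $\EE{\dtv^h(\Ph_n,\Qh_m)}-\dtv^h(P,Q)$ from above. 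By the triangle inequality,
\[
\EE{\dtv^h(\Ph_n,\Qh_m)}\le \dtv^h(P,Q)+\EE{\dtv^h(\Ph_n,P)}+\EE{\dtv^h(\Qh_m,Q)},
\]
so it suffices to show $\EE{\dtv^h(\Ph_n,P)}\le \EE{\dtv^h(\Ph^{(1)}_{n/2},\Ph^{(2)}_{n/2})}$, and likewise for $Q$.

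\emph{The key step.} This comparison is the main obstacle. My plan is as follows. Since $\Ph_n$ is the mixture above, joint convexity of total variation gives
\[
\dtv^h(\Ph_n,P)\le \lambda\,\dtv^h(\Ph^{(1)}_{n/2},P)+(1-\lambda)\,\dtv^h(\Ph^{(2)}_{n/2},P).
\]
For each half, replace $P$ by the expectation of the \emph{other}, independent half: $P_h=\EE{(\Ph^{(2)}_{n/2})_h}$. Then the conditional Jensen argument, applied with $\Ph^{(1)}_{n/2}$ fixed, gives
\[
\dtv^h(\Ph^{(1)}_{n/2},P)\le \EEst{\dtv^h(\Ph^{(1)}_{n/2},\Ph^{(2)}_{n/2})}{\Ph^{(1)}_{n/2}},
\]
and symmetrically for the second half. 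Taking expectations and combining the two halves with weights $\lambda$ and $1-\lambda$ yields exactly $\EE{\dtv^h(\Ph^{(1)}_{n/2},\Ph^{(2)}_{n/2})}$. The same argument applies to $Q$. Rearranging then gives the stated inequality with $\Delta_{n,m,h}$. The only care needed is the measurability and Tonelli justification, which is routine because all the densities involved are explicit finite mixtures of translates of $\psi_h$.
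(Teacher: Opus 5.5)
Your proof is correct. The first inequality is the paper's argument written with densities. The paper instead picks an $\epsilon$-optimal set $A$ and uses $\EE{(\Ph_n\ast\psi_h)(A)}=(P\ast\psi_h)(A)$ together with Jensen, which avoids the Tonelli and measurability bookkeeping. Your reduction of the second inequality to $\EE{\dtv^h(\Ph_n,P)}\le\EE{\dtv^h(\Ph^{(1)}_{n/2},\Ph^{(2)}_{n/2})}$ is also the paper's.

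The genuine difference is how you prove this key comparison. The paper uses a single symmetrization step. It draws a ghost half-sample $X'_1,\dots,X'_{\lfloor n/2\rfloor}\iidsim P$ with empirical measure $\Ph^{\prime(1)}_{n/2}$. Conditional on $\Ph_n$, the second half is a without-replacement subsample, so $\EEst{(\Ph^{(2)}_{n/2}\ast\psi_h)(A)}{\Ph_n}=(\Ph_n\ast\psi_h)(A)$, while the ghost half has conditional mean $(P\ast\psi_h)(A)$. Jensen then gives $\dtv^h(\Ph_n,P)\le\EEst{\dtv^h(\Ph^{\prime(1)}_{n/2},\Ph^{(2)}_{n/2})}{\Ph_n}$, and this pair has the same law as the original split.

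You instead write $\Ph_n$ as the $\lambda$-mixture of its two halves. Joint convexity of TV, which commutes with convolution by linearity, bounds $\dtv^h(\Ph_n,P)$ by $\lambda\,\dtv^h(\Ph^{(1)}_{n/2},P)+(1-\lambda)\,\dtv^h(\Ph^{(2)}_{n/2},P)$. You then compare each half to $P$ by averaging over the other, independent half.

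What each route buys:
\begin{itemize}
\item Your route needs only independence of the two halves: no exchangeability, no conditional law given $\Ph_n$, no ghost sample. This makes it slightly more elementary.
\item It also makes the unequal half sizes for odd $n$ transparent, since the weights $\lambda$ and $1-\lambda$ simply recombine into the single expectation $\EE{\dtv^h(\Ph^{(1)}_{n/2},\Ph^{(2)}_{n/2})}$.
\item The paper's route is a one-line coupling that reuses its first-part Jensen argument verbatim.
\end{itemize}

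One presentational point: the passage introducing an independent copy $\Ph'_n$ and conditioning on $\Qh_m$ is a false start that your final argument never uses. It should be removed.
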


The gap $\Delta_{n,m,h}$ quantifies the intrinsic variability within the $X$ and $Y$ samples and admits a natural, data-driven estimate via sample splitting. By Theorem~\ref{thm:blurred_TV_convergence_asymptotic}, $\Delta_{n,m,h}$ converges to zero as $n,m\to\infty$ (when $h>0$ is held constant). Consequently, this suggests that the empirical blurred TV distance $\dtv^h(\Ph_n,\Qh_m)$ provides an accurate approximation of the oracle blurred TV measure $\dtv^h(P,Q)$---but thus far, we have only examined its expected value.
In the next section, we show that the empirical blurred TV distance can be used to construct both distribution-free lower and upper confidence bounds for $\dtv^h(P,Q)$.

\section{\dfucb\ and \dflcb\ for blurred TV}\label{sec:confidence_bounds}
In this section, we construct distribution-free confidence bounds for the blurred TV distance.

\begin{theorem}\label{thm:naive_confidence_bounds}
Fix any $d\geq 1$, $n,m\geq 1$, $\psi\in\mathcal{K}_d$, $h>0$, and $\alpha\in(0,1)$. Define
\[\hat{U}_\alpha(\mathcal{D}_{n,m}):= \dtv^h(\Ph_n,\Qh_m)
+ \epsilon_{n,m,\alpha}\]
and\[
\hat{L}_\alpha(\mathcal{D}_{n,m}):= \max\Bigl\{\dtv^h(\Ph_n,\Qh_m)
- \dtv^h(\Ph^{(1)}_{n/2},\Ph^{(2)}_{n/2})
- \dtv^h(\Qh^{(1)}_{m/2},\Qh^{(2)}_{m/2})
- 3\,\epsilon_{n-1,m-1,\alpha},\,0\Bigr\},\]
where $\epsilon_{n,m,\alpha}:= \left(\frac{\log(1/\alpha)}{2}\left(\frac{1}{n}+\frac{1}{m}\right)\right)^{1/2}$.
Then $\hat{U}_\alpha(\mathcal{D}_{n,m})$ is a \dfucb for $\dtv^h(\cdot,\cdot)$ at level $1-\alpha$, and\footnote{Here and throughout the paper, the \dflcb can be interpreted as $0$ if $n=1$ or $m=1$ (i.e., if the split empirical measures are not defined).} $\hat{L}_\alpha(\mathcal{D}_{n,m})$ is a \dflcb for $\dtv^h(\cdot,\cdot)$ at  level $1-\alpha$.
\end{theorem}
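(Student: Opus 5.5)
The plan is to combine Proposition~\ref{prop:E_inequality}, which controls the expectation of the empirical blurred TV, with McDiarmid's bounded-differences inequality, which controls its fluctuation around that expectation.

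\textbf{Bounded differences.} View $F(\mathcal{D}_{n,m}) := \dtv^h(\Ph_n,\Qh_m)$ as a function of the $n+m$ independent inputs. Replacing a single $X_i$ by $X_i'$ changes $\Ph_n\ast\psi_h$ by $\frac1n(\psi_h(\cdot-X_i)-\psi_h(\cdot-X_i'))$. The TV norm of this difference is at most $1/n$. Since $\dtv$ is a metric, the triangle inequality gives $|F-F'|\le 1/n$. In the same way, changing one $Y_j$ changes $F$ by at most $1/m$. McDiarmid's inequality then gives
\[
\PP{F-\EE{F}\le -t}\le \exp\!\left(-\frac{2t^2}{n\cdot n^{-2}+m\cdot m^{-2}}\right)=\exp\!\left(-\frac{2t^2}{1/n+1/m}\right).
\]
Choosing $t=\epsilon_{n,m,\alpha}$ makes the right-hand side equal to $\alpha$.

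\textbf{Upper bound.} By the first inequality of Proposition~\ref{prop:E_inequality}, $\dtv^h(P,Q)\le \EE{F}$. With probability at least $1-\alpha$, $\EE{F}\le F+\epsilon_{n,m,\alpha}=\hat U_\alpha$, so $\hat U_\alpha$ is a \dfucb.

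\textbf{Lower bound.} Take $n,m\ge2$; otherwise $\hat L_\alpha=0$ and validity is trivial. Define
\[
G:=F-\dtv^h(\Ph^{(1)}_{n/2},\Ph^{(2)}_{n/2})-\dtv^h(\Qh^{(1)}_{m/2},\Qh^{(2)}_{m/2}).
\]
The second inequality of Proposition~\ref{prop:E_inequality} gives $\dtv^h(P,Q)\ge\EE{F}-\Delta_{n,m,h}=\EE{G}$. It therefore suffices to show $\PP{G-\EE{G}\ge 3\epsilon_{n-1,m-1,\alpha}}\le\alpha$. On that event $\hat L_\alpha\le\max\{\EE{G},0\}\le\dtv^h(P,Q)$.

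\textbf{Bounded differences for $G$.} Changing a single $X_i$ moves $F$ by at most $1/n$. It also moves the split term by at most $1/\lfloor n/2\rfloor$, because $X_i$ lies in only one half. So the coordinate sensitivity of $G$ is $c_i\le 1/n+1/\lfloor n/2\rfloor$. Since $\lfloor n/2\rfloor\ge (n-1)/2$, this is at most $1/n+2/(n-1)\le 3/(n-1)$. Hence
\[
\sum_{i=1}^n c_i^2\le \frac{9n}{(n-1)^2}\le \frac{9}{n-1}\cdot\frac{n}{n-1}.
\]

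\textbf{Main obstacle.} This is matching the constant $3$ with the shifted index $n-1$. The crude bound above gives $\sum_i c_i^2$ of order $9/(n-1)$ only up to the factor $n/(n-1)$. I would first try to tighten it by splitting into the two halves: indices in the smaller half have $c_i\le 1/n+1/\lfloor n/2\rfloor$, and indices in the larger half have $c_i\le 1/n+1/\lceil n/2\rceil$. The target is $\sum_i c_i^2\le 9/(n-1)$.

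\textbf{Checking the target.} For even $n$, every $c_i=3/n$, so $\sum_i c_i^2=9/n$, which is fine. For odd $n=2k+1$, the sum is
\[
k\left(\tfrac1n+\tfrac1k\right)^2+(k+1)\left(\tfrac1n+\tfrac1{k+1}\right)^2 .
\]
Expanding, this equals
\[
\frac{2k+1}{n^2}+\frac{4}{n}+\frac1k+\frac1{k+1}=\frac{5}{n}+\frac{2k+1}{k(k+1)}.
\]
We want this to be at most $\frac{9}{2k}$. After multiplying through, the inequality reduces to $2k^2+3k+9\ge 0$, which holds. The $Y$ coordinates are handled the same way.

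\textbf{Conclusion.} We get $\sum c^2\le 9\left(\frac{1}{n-1}+\frac{1}{m-1}\right)$. McDiarmid with $t=3\epsilon_{n-1,m-1,\alpha}$ then gives the upper tail bound $\exp\bigl(-2\cdot9\epsilon^2_{n-1,m-1,\alpha}/(9(\tfrac1{n-1}+\tfrac1{m-1}))\bigr)=\alpha$. This completes the \dflcb claim.
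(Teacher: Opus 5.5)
Your proposal is correct and follows the paper's proof essentially step for step. You use Proposition~\ref{prop:E_inequality} to sandwich $\dtv^h(P,Q)$ between $\EE{G}$ and $\EE{F}$, then apply McDiarmid with coordinate sensitivities $\frac1n$, $\frac1m$ for $F$ and $\frac1n+\frac1{\lfloor n/2\rfloor}$, $\frac1n+\frac1{\lceil n/2\rceil}$ (and the $m$ analogues) for $G$, and you explicitly carry out the ``straightforward calculation'' $\sum_i c_i^2\le \frac{9}{n-1}+\frac{9}{m-1}$ that the paper leaves implicit. One harmless arithmetic slip: for odd $n=2k+1$, clearing denominators in $\frac{5}{2k+1}+\frac{2k+1}{k(k+1)}\le\frac{9}{2k}$ gives $18k^2+18k+2\le 18k^2+27k+9$, i.e.\ $9k+7\ge 0$, not $2k^2+3k+9\ge 0$; the needed inequality holds either way.
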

The validity of these bounds follows from the fact that the blurred TV can be bounded via the expected value of empirical blurred TV, as in Proposition~\ref{prop:E_inequality}; concentration is then established via McDiarmid's inequality.

Although this result offers clean and useful bounds on the blurred TV, several aspects can be improved for practical use. We summarize the main issues below and indicate how they are addressed.
\begin{itemize}
    \item The empirical blurred TV $\dtv^h(\Ph_n,\Qh_m)$ is hard to compute exactly---numerical integration is possible for smooth kernels, but is computationally expensive. In Section~\ref{sec:monte_carlo_approximation}, we introduce a Monte Carlo approximation that is easy to compute and preserves distribution-free validity.
    \item The above bounds are valid for a fixed bandwidth $h>0$. In practice, it is often desirable to select $h$ in a data-dependent way. This motivates the refined confidence bounds in Section~\ref{sec:uniform_confidence_bounds} that are uniformly valid over a range of bandwidths, which we obtain via a monotonization approach.
    \item For any $P,Q$, $\dtv^h(P,Q)\to 0$ as $h$ increases. However, for fixed $n,m$, the error terms in the bounds above, $\epsilon_{n,m,\alpha}$, do not decay with $h$. To improve interpretability, we construct bounds in Section~\ref{sec:confidence_bounds_bandwidth_adaptive} that scale with $h$, to allow for a \dfucb  that is less conservative as $h\to\infty$.
\end{itemize}

Each of the three following subsections addresses the above three aims individually. However, we emphasize that all of these issues can be handled simultaneously; we defer the full construction to Appendix~\ref{app:additional_results} and here present only the individual results, for clarity.
Since the \dfucb and \dflcb are handled in a symmetric manner, for clarity we present only the \dfucb results in the remainder of this section, and give the corresponding \dflcb results in the appendix.

\subsection{Monte Carlo approximation of empirical blurred TV}\label{sec:monte_carlo_approximation}
We introduce a Monte Carlo approximation of the empirical blurred TV that enables practical implementation of the confidence bounds in Theorem~\ref{thm:naive_confidence_bounds} while preserving distribution-free validity.

First, observe that the empirical blurred TV admits the representation
\[
\dtv^h(\Ph_n,\Qh_m)
=\frac{1}{2}\Ep{\substack{W \sim \frac{1}{2}\Ph_n + \frac{1}{2}\Qh_m \\ \xi \sim \psi}}{\left|
\frac{\mathsf{d}(\Ph_n \ast \psi_h) - \mathsf{d}(\Qh_m \ast \psi_h)}
{\mathsf{d}\bigl(\frac{1}{2}\Ph_n \ast \psi_h + \frac{1}{2}\Qh_m \ast \psi_h\bigr)}
\bigl(W + h\xi\bigr)\right|}.
\]
Motivated by this identity, we define the Monte Carlo estimator
\[
\widehat{\dtv^h}(\Ph_n,\Qh_m;B)
:=\frac{1}{2B}\sum_{k=1}^B\left|
\frac{\mathsf{d}(\Ph_n \ast \psi_h) - \mathsf{d}(\Qh_m \ast \psi_h)}
{\mathsf{d}\bigl(\frac{1}{2}\Ph_n \ast \psi_h + \frac{1}{2}\Qh_m \ast \psi_h\bigr)}
\bigl(W_k + h\xi_k\bigr)\right|,
\]
where $W_1,\ldots,W_B \iidsim \frac{1}{2}\Ph_n + \frac{1}{2}\Qh_m$ and $\xi_1,\ldots,\xi_B \iidsim \psi$. Here $B$ denotes the Monte Carlo sample size. 
The \dfucb in Theorem~\ref{thm:naive_confidence_bounds} can be updated with only an additional $O(1/\sqrt{B})$ term.

\begin{theorem}\label{thm:monte_carlo_ucb}
Fix any $d\geq 1$, $n,m\geq 1$, $\psi\in\mathcal{K}_d$, $h>0$, and $\alpha\in(0,1)$. Define
\[
\hat{U}_\alpha(\mathcal{D}_{n,m},B):= \widehat{\dtv^h}(\Ph_n,\Qh_m;B)+ \epsilon_{n,m,\alpha/2}+ \epsilon_{B,\alpha/2},
\]
where $\epsilon_{B,\alpha}:=\left(\frac{\log(1/\alpha)}{2B}\right)^{1/2}$. Then $\hat{U}_\alpha(\mathcal{D}_{n,m},B)$ is a \dfucb for $\dtv^h(\cdot,\cdot)$ at confidence level $1-\alpha$.
\end{theorem}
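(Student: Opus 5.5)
We split the failure probability into two halves, one for the sampling randomness of the data and one for the Monte Carlo randomness, and combine them with a union bound. The data part is handled exactly as in Theorem~\ref{thm:naive_confidence_bounds}: by Proposition~\ref{prop:E_inequality}, $\dtv^h(P,Q)\le \EE{\dtv^h(\Ph_n,\Qh_m)}$. The map $\mathcal{D}_{n,m}\mapsto \dtv^h(\Ph_n,\Qh_m)$ has bounded differences: replacing one $X_i$ changes $\Ph_n\ast\psi_h$ by a signed measure of total variation at most $1/n$, hence changes the blurred TV by at most $1/n$. Likewise, replacing one $Y_j$ changes it by at most $1/m$. McDiarmid's inequality then gives
\[
\PP{\dtv^h(\Ph_n,\Qh_m)\le \EE{\dtv^h(\Ph_n,\Qh_m)}-\epsilon_{n,m,\alpha/2}}\le \alpha/2,
\]
since $\exp\bigl(-2t^2/(n\cdot n^{-2}+m\cdot m^{-2})\bigr)=\alpha/2$ at $t=\epsilon_{n,m,\alpha/2}$. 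So with probability at least $1-\alpha/2$, $\dtv^h(P,Q)\le \dtv^h(\Ph_n,\Qh_m)+\epsilon_{n,m,\alpha/2}$.

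For the Monte Carlo part we condition on $\mathcal{D}_{n,m}$. Write $p=\mathsf{d}(\Ph_n\ast\psi_h)$, $q=\mathsf{d}(\Qh_m\ast\psi_h)$ and $r=(p+q)/2$; these are densities with respect to Lebesgue measure because $h>0$. The point $Z_k=W_k+h\xi_k$ has density $r$, and the summands $\tfrac12|p-q|/r\,(Z_k)$ are i.i.d. given the data. We then check two facts:
\begin{itemize}
\item The summands have conditional mean $\tfrac12\int|p-q|=\dtv^h(\Ph_n,\Qh_m)$. This is the identity stated before the theorem.
\item The summands lie in $[0,1]$, because $|p-q|\le p+q=2r$.
\end{itemize}
Hoeffding's inequality, applied conditionally and then integrated over the data, gives
\[
\PP{\widehat{\dtv^h}(\Ph_n,\Qh_m;B)\le \dtv^h(\Ph_n,\Qh_m)-\epsilon_{B,\alpha/2}}\le \alpha/2,
\]
since $\exp(-2Bt^2)=\alpha/2$ at $t=\epsilon_{B,\alpha/2}$. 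The set $\{r=0\}$ has $r$-probability zero, so the ratio is almost surely well defined.

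A union bound over the two events then gives $\dtv^h(P,Q)\le \hat U_\alpha(\mathcal{D}_{n,m},B)$ with probability at least $1-\alpha$, uniformly in $P,Q$. The only step needing care is the conditional Hoeffding argument: we must confirm that the Monte Carlo draws are independent of the data given $\mathcal{D}_{n,m}$, which holds since the randomness is generated from the seed $\zeta$. We must also confirm the bounded-difference constant $1/n$ for blurred TV, which follows from the triangle inequality for $\dtv$ together with the contraction $\dtv(\mu\ast\psi_h,\nu\ast\psi_h)\le\dtv(\mu,\nu)$.
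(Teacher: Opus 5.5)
Your proof is correct and follows the paper's argument. The paper likewise applies Theorem~\ref{thm:naive_confidence_bounds} at level $1-\alpha/2$ to control the sampling error. For the Monte Carlo error it uses McDiarmid conditional on the data, based on the ratio being at most $2$; for an average of i.i.d.\ $[0,1]$-valued terms this is exactly your Hoeffding bound. It then takes a union bound.
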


\subsection{Uniform validity of confidence bounds across bandwidths}\label{sec:uniform_confidence_bounds}
We now construct a \dfucb and a \dflcb for $\dtv^h(P,Q)$ that are valid simultaneously for all bandwidths $h>0$. As a warm-up, consider a finite collection
\[
\mathcal{H}_L=\{h_0,h_1,\ldots,h_L\}, \qquad 0< h_0\le h_1\le\cdots\le h_L .
\]
Uniform validity over $\mathcal{H}_L$ follows by a union bound applied to Theorem~\ref{thm:naive_confidence_bounds}. In particular, $\hat{L}_{\alpha/L}(\mathcal{D}_{n,m})$ and $\hat{U}_{\alpha/L}(\mathcal{D}_{n,m})$ provide a \dflcb and a \dfucb for $\dtv^h(P,Q)$, uniformly for all $h\in\mathcal{H}_L$.

While simple, this approach becomes overly conservative when $L$ is large. We therefore propose an alternative construction leveraging monotonicity. Define the \emph{monotonized blurred TV} distances
\begin{equation}\label{eqn:monotonized_blurred_TV}
\dtv^{h,\mathrm{up}}(P,Q):= \sup_{h_1\in[h,\infty)} \dtv^{h_1}(P,Q),
\qquad\dtv^{h,\mathrm{lo}}(P,Q):= \inf_{h_1\in[0,h]} \dtv^{h_1}(P,Q).
\end{equation}
These quantities respectively provide upper and lower envelopes of $\dtv^h(P,Q)$. In particular,
\[
\dtv^{h,\mathrm{lo}}(P,Q)\;\le\;\dtv^h(P,Q)\;\le\;
\dtv^{h,\mathrm{up}}(P,Q)\;\le\;\dtv(P,Q),\qquad \text{for all } h>0,
\]
where the last inequality follows from Proposition~\ref{prop:blurred_tv&tv_properties}. Thus, $\dtv^{h,\mathrm{up}}$ positions itself as a uniform upper envelope that remains closer to the classical TV distance. Moreover, as illustrated in Figure~\ref{fig:monotonicity_blurred_tv}, $\dtv^h(P,Q)$ is typically monotone or nearly monotone in $h$, so the envelopes closely track the original blurred TV. This observation enables distribution-free confidence bounds that are uniformly valid over $h\in(0,\infty)$.

\begin{theorem}\label{thm:ucb_uniform}
Fix any $d\geq 1$, $n,m\geq 1$, $\psi\in\mathcal{K}_d$, and $\alpha\in(0,1)$. For all $h>0$, define \[\hat{U}^{h,\mathrm{up}}_\alpha(\mathcal{D}_{n,m}):= \dtv^{h,\mathrm{up}}(\Ph_n,\Qh_m)+ \epsilon_{n,m,\alpha/(n\wedge m)} + \frac{1}{n\wedge m}.\]
Then $\hat{U}^{h,\mathrm{up}}_\alpha(\mathcal{D}_{n,m})$ is \dfucb for $\dtv^h(\cdot,\cdot)$, uniformly for all $h\in(0,\infty)$, at confidence level $1-\alpha$ (that is, $\PP{\dtv^h(P,Q)\le \hat{U}^{h,\mathrm{up}}_\alpha(\mathcal{D}_{n,m})\ \forall \ h\in(0,\infty)} \ge 1-\alpha$).
\end{theorem}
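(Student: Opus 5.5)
The plan is to reduce the uniform statement to finitely many fixed bandwidths, chosen deterministically from $P$ and $Q$, and then combine a union bound over Theorem~\ref{thm:naive_confidence_bounds} with monotonicity of the upper envelope. Write $N:=n\wedge m$ and $f(h):=\dtv^{h,\mathrm{up}}(P,Q)$. By construction $f$ is nonincreasing in $h$, takes values in $[0,1]$, and satisfies $f(h)\ge\dtv^h(P,Q)$.

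\textbf{Regularity of $f$.} By Proposition~\ref{prop:blurred_tv&tv_properties}(iii)--(iv), the map $h\mapsto\dtv^h(P,Q)$ is continuous and tends to $0$ as $h\to\infty$. I will first show three consequences.
\begin{enumerate}[label=(\alph*)]
\item The supremum defining $f(h)$ is attained whenever $f(h)>0$. A maximizing sequence cannot escape to infinity, so by continuity a maximizer $h'\in[h,\infty)$ exists.
\item $f$ is continuous on $(0,\infty)$.
\item $f(h)\to0$ as $h\to\infty$.
\end{enumerate}
I expect this step to be the main technical obstacle, though it is routine. For continuity, note that $f(h)=\max\{\max_{[h,h+\delta]}\dtv^{\cdot}(P,Q),\,f(h+\delta)\}$. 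Then the standard argument for running maxima of a continuous function vanishing at infinity applies.

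\textbf{Grid construction.} For $k=1,\dots,N$ with $k/N\le f(0^+)$, set $h_k:=\sup\{h>0:f(h)\ge k/N\}$. By (b) and (c), $h_k$ is finite and $f(h_k)=k/N$. By (a), pick $h_k'\ge h_k$ with $\dtv^{h'_k}(P,Q)=f(h_k)=k/N$. These at most $N$ bandwidths depend only on $(P,Q)$, not on the data. Theorem~\ref{thm:naive_confidence_bounds} at level $\alpha/N$, together with a union bound, gives an event of probability at least $1-\alpha$ on which
\[\dtv^{h'_k}(P,Q)\le\dtv^{h'_k}(\Ph_n,\Qh_m)+\epsilon_{n,m,\alpha/N}\quad\text{for all } k.\]

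\textbf{Deterministic sandwich on that event.} Fix any $h>0$.
\begin{itemize}
\item If $f(h)<1/N$, then $\dtv^h(P,Q)\le f(h)<1/N\le\hat U^{h,\mathrm{up}}_\alpha$, and we are done.
\item Otherwise take $k=\lfloor Nf(h)\rfloor\ge1$. Since $f(h)\ge k/N$, the definition of $h_k$ gives $h\le h_k\le h'_k$, and $f(h)<(k+1)/N$. Hence
\[\dtv^h(P,Q)\le f(h)\le \tfrac kN+\tfrac1N=\dtv^{h'_k}(P,Q)+\tfrac1N\le \dtv^{h'_k}(\Ph_n,\Qh_m)+\epsilon_{n,m,\alpha/N}+\tfrac1N.\]
Because $h'_k\ge h$, the last right-hand side is at most $\dtv^{h,\mathrm{up}}(\Ph_n,\Qh_m)+\epsilon_{n,m,\alpha/N}+\tfrac1N=\hat U^{h,\mathrm{up}}_\alpha(\mathcal{D}_{n,m})$.
\end{itemize}
Since the event does not depend on $h$, this yields the simultaneous guarantee.
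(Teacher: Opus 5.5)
Your proof is correct and is essentially the paper's. The paper defines $h_i=\sup\{h\ge0:\dtv^h(P,Q)\ge i/(n\wedge m)\}$ directly from $h\mapsto\dtv^h(P,Q)$, applies Theorem~\ref{thm:naive_confidence_bounds} at level $\alpha/(n\wedge m)$ with a union bound, and sandwiches exactly as you do. Its $h_i$ equals your $h_k$ and already satisfies $\dtv^{h_i}(P,Q)=i/(n\wedge m)$, so the envelope regularity (a)--(c) and the separate choice of $h_k'$ are unnecessary.

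One small fix: index the grid only by those $k$ for which $\{h>0:f(h)\ge k/N\}$ is nonempty. If $k/N=f(0^+)=\dtv(P,Q)$ and this value is not attained at any positive bandwidth, that set is empty and $h_k$ is undefined. This is harmless, because every $k=\lfloor Nf(h)\rfloor$ used in your sandwich meets the condition.
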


\subsection{Bandwidth-adaptive confidence bounds at a fixed $h$}\label{sec:confidence_bounds_bandwidth_adaptive}
The confidence bounds in Theorem~\ref{thm:naive_confidence_bounds} are based on a bounded-differences argument that does not adapt to the variability of the empirical blurred TV. We now introduce variance-adaptive confidence bounds at a fixed bandwidth $h>0$. The key ingredient is a data-dependent variance proxy that captures the sensitivity of the kernel to translations.

For a kernel $\psi$, define its \emph{shift modulus} by $\omega_\psi(v):=\dtv(\delta_v\ast\psi,\psi)$ for $v\in \R^d$.
For commonly used kernels, including the Gaussian kernel, $\omega_\psi(v)$ grows at most linearly for small $\|v\|_2$, and is uniformly bounded, $\omega_\psi(v)\leq 1$. Using this quantity, we define
\begin{equation}\label{eqn:hat_sigma}
\hat\Sigma_{n,m}^{h}:= \frac{1}{n\binom{n}{2}}\sum_{1\le i<j\le n}\omega_\psi\!\left(\frac{X_i-X_j}{h}\right)^2
+\frac{1}{m\binom{m}{2}}\sum_{1\le i<j\le m}\omega_\psi\!\left(\frac{Y_i-Y_j}{h}\right)^2 .
\end{equation}
$\hat\Sigma_{n,m}^{h}$ captures the inherent variability of empirical blurred TV terms. By construction, $\hat\Sigma_{n,m}^{h}\leq \frac{1}{n}+\frac{1}{m}$ for any $h>0$, but $\hat\Sigma_{n,m}^{h}$ may be substantially smaller if $h$ is large.
\begin{theorem}\label{thm:ucb_bandwidth_adaptive}
Fix any $d\geq 1$, $n,m\geq 1$, $\psi\in\mathcal{K}_d$, $h>0$, and $\alpha\in(0,1)$. Define 
\[
\hat{U}_{\alpha}(\mathcal{D}_{n,m},\hat\Sigma_{n,m}^{h}):=\dtv^h(\Ph_n,\Qh_m)+ C_\alpha\left\{\bigl(\hat\Sigma_{n,m}^{h}\bigr)^{1/2}  + \frac{1}{n\wedge m}\right\},
\]
where $C_\alpha$ depends only on $\alpha$ and is defined in the proof.
Then $\hat{U}_{\alpha}(\mathcal{D}_{n,m},\hat\Sigma_{n,m}^{h})$
is a \dfucb\ for $\dtv^h(\cdot,\cdot)$ at confidence level $1-\alpha$.
\end{theorem}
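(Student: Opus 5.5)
We will follow the structure of the proof of Theorem~\ref{thm:naive_confidence_bounds}, replacing McDiarmid's inequality by a variance-sensitive (self-bounding / Efron--Stein type) concentration inequality. By Proposition~\ref{prop:E_inequality}, $\dtv^h(P,Q)\le \EE{\dtv^h(\Ph_n,\Qh_m)}$, so it suffices to show that, with probability at least $1-\alpha$,
\[
\EE{\dtv^h(\Ph_n,\Qh_m)} - \dtv^h(\Ph_n,\Qh_m) \le C_\alpha\Bigl\{\bigl(\hat\Sigma^h_{n,m}\bigr)^{1/2} + \tfrac{1}{n\wedge m}\Bigr\}.
\]
The function $F(\mathcal{D}_{n,m}):=\dtv^h(\Ph_n,\Qh_m)$ satisfies a refined bounded-difference property. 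Replacing $X_i$ by $X_i'$ changes $\Ph_n\ast\psi_h$ by $\frac1n(\delta_{X_i}\ast\psi_h-\delta_{X_i'}\ast\psi_h)$. Hence, by the triangle inequality for TV, $|F-F^{(i)}|\le \frac1n\,\omega_\psi\bigl((X_i-X_i')/h\bigr)$, using translation and scale invariance of TV. The same holds for the $Y$'s with $\frac1m$.

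The key step is a lower-tail bound for $F$ in terms of the \emph{empirical} variance proxy. I would use the Efron--Stein-based exponential inequality of Boucheron--Lugosi--Massart for the lower tail: with $V^+:=\EE{\sum_i (F-F^{(i)})_+^2\mid \mathcal{D}}$, the fact that each increment is bounded by $1/(n\wedge m)$ gives
\[
\PP{\EE{F}-F \ge t} \le \exp\bigl(-t^2/(c\,\EE{V})\bigr) + (\text{Bernstein correction in } t/(n\wedge m)).
\]
Here $V$ is the variance proxy built from $\sum_i\frac1{n^2}\EE{\omega_\psi((X_i-X_i')/h)^2}$ and its $Y$ analogue. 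The population analogue of $\hat\Sigma^h_{n,m}$ is
\[
\Sigma:=\tfrac1n\EE{\omega_\psi((X-X')/h)^2}+\tfrac1m\EE{\omega_\psi((Y-Y')/h)^2},
\]
which dominates $\EE{V}$ up to constants. This yields, with probability $\ge1-\alpha/2$,
\[
\EE{F}-F\le c_1\sqrt{\Sigma\log(2/\alpha)}+c_2\tfrac{\log(2/\alpha)}{n\wedge m}.
\]

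Next we replace $\Sigma$ by $\hat\Sigma^h_{n,m}$. Observe that $\hat\Sigma^h_{n,m}$ is a sum of two U-statistics of order two, with kernel values in $[0,1]$, scaled by $1/n$ and $1/m$. Its expectation is exactly $\Sigma$. It is also self-bounding in the sense that each coordinate change moves it by at most $\frac{2}{n^2}$ (or $\frac{2}{m^2}$). The corresponding square-root lower-tail bound (e.g.\ Maurer's inequality for self-bounding functions, or Bernstein for U-statistics via Hoeffding's decomposition) gives, with probability $\ge1-\alpha/2$,
\[
\sqrt{\Sigma}\le\sqrt{\hat\Sigma^h_{n,m}}+c_3\sqrt{\log(2/\alpha)}\cdot\frac{1}{n\wedge m}.
\]
A union bound then combines the two events, and $C_\alpha$ absorbs $c_1,c_2,c_3$ and the logarithms. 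The factor $1/(n\wedge m)$ arises because the normalization makes $\Sigma\cdot (n\wedge m)$ comparable to a mean of $[0,1]$-valued variables.

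I expect the main obstacle to be the first concentration step. The variance proxy in the Efron--Stein inequality involves $\EE{\omega_\psi((X_i-X_i')/h)^2\mid X_i}$, a conditional quantity rather than the pairwise average. Matching it to $\Sigma$, and obtaining a clean sub-Gaussian lower tail rather than only a variance bound, requires care. I would first try the "$V^+$ bounded by an expected quantity" version of the entropy method, and fall back on Bernstein-type bounds for functions with bounded differences and controlled sum of squared conditional ranges.
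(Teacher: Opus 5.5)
Your plan has the same architecture as the paper's proof. It uses Proposition~\ref{prop:E_inequality}; the refined bounded difference $|F(\bZ)-F(\bZ^{(i)})|\le \frac1n\omega_\psi((X_i-X_i')/h)$; a Bernstein-type lower tail for $F$ with variance proxy $\tilde\Sigma:=\frac1n\mathbb{E}[\omega_\psi((X-X')/h)^2]+\frac1m\mathbb{E}[\omega_\psi((Y-Y')/h)^2]$ (your $\Sigma$); a square-root bound $\tilde\Sigma^{1/2}\le(\hat\Sigma^h_{n,m})^{1/2}+O(\sqrt{\log(1/\alpha)}/(n\wedge m))$; and a union bound. The differences lie only in which concentration inequalities you invoke.

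For the first step, the paper applies \citet[Theorem~1]{maurer2019bernstein}. Its variance proxy is $\sum_i\sup_{\bZ_{-i}}\mathbb{E}_{Z_i,Z_i'}[(F(\bZ)-F(\bZ^{(i)}))^2]$, and it gives $\PP{F\le\mathbb{E}F-t}\le\exp\bigl(-t^2/(2(\tilde\Sigma+t/(3(n\wedge m))))\bigr)$. There both $Z_i$ and $Z_i'$ are integrated out, and your bound on $|F(\bZ)-F(\bZ^{(i)})|$ does not depend on $\bZ_{-i}$. So this proxy is at most $\tilde\Sigma$ deterministically, and the obstacle you anticipate (the $X_i$-conditional quantity in the Efron--Stein proxy) never arises. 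Your fallback is therefore exactly the paper's route, provided it is phrased with conditional \emph{variances} rather than conditional ranges. Without support assumptions the ranges are of order $1/n$, which gives back the non-adaptive McDiarmid margin. The entropy-method route can also be completed: bound the Efron--Stein proxy by a sum of independent terms in $[0,(n\wedge m)^{-2}]$ with mean at most $\tilde\Sigma$, then use the exponential Efron--Stein inequality. That is more work for the same result.

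For the second step, the paper uses the empirical Bernstein inequality for U-statistics of \citet{peel2010empirical}. Your self-bounding route is a valid alternative, but the property that matters is not the per-coordinate bound $2/n^2$ you cite. What matters is that $Z:=\frac{(n\wedge m)^2}{2}\hat\Sigma^h_{n,m}$ satisfies $0\le Z-Z_{(i)}\le 1$ and $\sum_i(Z-Z_{(i)})=2Z$, where $Z_{(i)}$ drops all pairs involving the $i$th observation. The lower-tail inequality for self-bounding functions then gives $\mathbb{E}Z-Z\lesssim\sqrt{\mathbb{E}Z\log(2/\alpha)}+\log(2/\alpha)$. Rescaling yields $\tilde\Sigma^{1/2}\le(\hat\Sigma^h_{n,m})^{1/2}+c\sqrt{\log(2/\alpha)}/(n\wedge m)$, with less bookkeeping than the paper's fourth-order variance estimators.
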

We see that this improves on the original \dfucb of Theorem~\ref{thm:naive_confidence_bounds}, in terms of the magnitude of the terms added to the estimate $\dtv^h(\Ph_n,\Qh_m)$: this new result adds a margin of error that is $\mathcal{O}(\bigl(\hat\Sigma_{n,m}^{h}\bigr)^{1/2} + \frac{1}{n\wedge m})$, rather than $\mathcal{O}(\frac{1}{\sqrt{n\wedge m}})$ as in Theorem~\ref{thm:naive_confidence_bounds}, which is an improvement as $h\to\infty$.

\subsection{Numerical experiment}\label{sec:simulation}
We illustrate the practical performance\footnote{The code to reproduce all the experiments is available at \href{https://colab.research.google.com/drive/1rWJD1NashERaQFkB3yQpdkPep5e2XLed?usp=sharing}{this Colab notebook}.} of our confidence bounds in a simple Gaussian setting, with $P=\mathcal{N}(\beta,1)$ and $Q=\mathcal{N}(-\beta,1)$. Figure~\ref{fig:confidence_bounds} displays the Monte Carlo confidence bounds from Theorem~\ref{thm:monte_carlo_ucb} (see Appendix~\ref{app:proof_of_monte_carlo_confidence_bounds} for the full statement that includes both the \dfucb and the \dflcb), together with the empirical and oracle blurred TV, for different values of $\beta$.

\begin{figure}[t]
    \centering
    \includegraphics[width=0.87\linewidth]{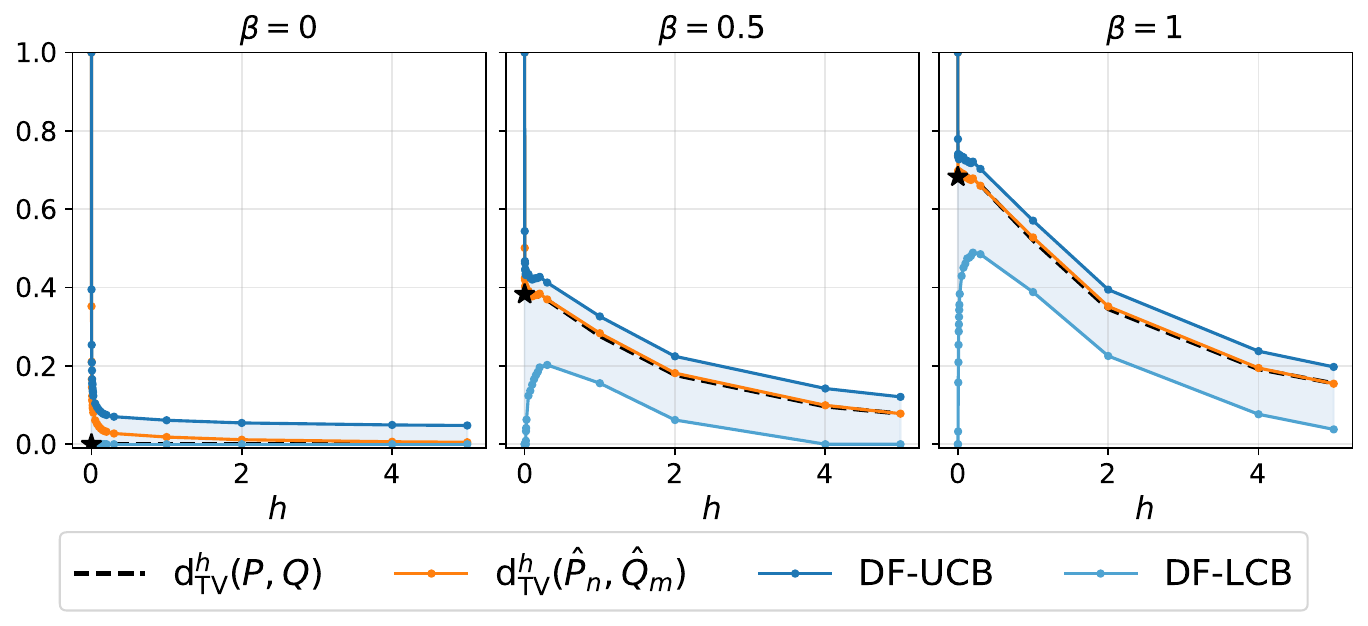}
    \caption{Monte Carlo based confidence bounds on $\dtv^h(P,Q)$. In each plot, $\dtv(P,Q)$ is marked by a $\star$ symbol. See Section~\ref{sec:simulation} for simulation details.}
    \label{fig:confidence_bounds}
\end{figure}

The empirical blurred TV $\dtv^h(\Ph_n,\Qh_m)$ closely tracks the oracle quantity $\dtv^h(P,Q)$, confirming its accuracy as an estimator. Although classical TV admits no non-trivial distribution-free upper bound (Theorem~\ref{thm:hardness}), the \dfucb for blurred TV remains informative, staying well below $1$ even at small bandwidths $h>0$.

\section{Blurred TV in high dimensions}\label{sec:highD_blurred_TV}
The distribution-free lower and upper confidence bounds for blurred TV established in the previous section are valid for any fixed dimension $d$. However, as $d$ increases, samples from $P$ and $Q$ become increasingly diffuse, and consequently the empirical blurred TV tends to approach $1$ even for moderate choices of the bandwidth, turning the resulting \dfucb essentially uninformative. This raises a natural question: is this behavior an artifact of the empirical estimator we take, or does it reflect an intrinsic limitation of inference on the blurred TV in high dimensions? 

In this section, we show that the latter is the case.  Throughout this section, we consider only the Gaussian kernel.

\subsection{A curse of dimensionality}\label{sec:curse_of_dimensionality}

We first develop a result that highlights the challenges of inference on blurred TV in high dimensions. 
For any $C\geq 1$, let $\mathcal{P}_d(C)\subseteq\mathcal{P}_d$ be the set of all distributions that are supported on the unit ball $\mathbb{B}_d = \{x\in\R^d:\|x\|_2\leq 1\}$ and have a density bounded by $C(\textnormal{Vol}(\mathbb{B}_d))^{-1}$. (In particular, for $C=1$, the set $\mathcal{P}_d(1)$ contains only the uniform distribution on $\mathbb{B}_d$.)
\begin{theorem}\label{thm:hardness_lowdim_highdim}
    Fix any dimension $d\geq 1$ and sample sizes $n,m\geq 1$.  Let $\psi$ be the Gaussian kernel~\eqref{kernel:gaussian}, and let $\hat{U}_\alpha$ be any DF-UCB on the blurred TV distance $\dtv^h$, for some $h>0$. 
    
    Fix any $\epsilon,\delta\in(0,1)$ and $C\geq 1$. Then there exist constants $c_0,c_1,c_2$ depending only on $C,\epsilon,\delta$, such that if
    \begin{equation}\label{eqn:h_upper_bound_prop:hardness_lowdim_highdim}h \leq c_0 \cdot \begin{cases} \frac{1}{(n\wedge m)^{2/d}\sqrt{d}}, & \textnormal{ if }d\leq c_1\log (n\wedge m),\\
    \frac{1}{\sqrt{\log (n\wedge m)}}, &\textnormal{ if }   d>c_1 \log (n\wedge m) ,\end{cases}\end{equation}
    and $n,m\geq c_2$, then for any $P,Q\in\mathcal{P}_d(C)$,
    \[\PP{\hat{U}_\alpha(\mathcal{D}_{n,m},\zeta) \geq 1-\epsilon} \geq 1-\alpha - \delta.\]
\end{theorem}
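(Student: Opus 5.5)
The plan is to reduce to a hardness argument in the spirit of Theorem~\ref{thm:hardness}. We replace $(P,Q)$ by a random pair of \emph{discrete} distributions $(P'_S,Q'_T)$ with three properties: (a) the data law under $(P'_S,Q'_T)$, averaged over the randomness of $S,T$, is within total variation $\delta$ of $P^n\times Q^m$; (b) with high probability over $S,T$, the blurred TV satisfies $\dtv^h(P'_S,Q'_T)\ge 1-\epsilon$; (c) validity of $\hat U_\alpha$, applied to each fixed pair $(P'_S,Q'_T)$, gives $\PP{\hat U_\alpha\ge \dtv^h(P'_S,Q'_T)}\ge 1-\alpha$. Averaging (c) over $S,T$, intersecting with the event in (b), and transferring to $P^n\times Q^m$ via (a) then gives $\PP{\hat U_\alpha\ge 1-\epsilon}\ge 1-\alpha-O(\delta)$ under the true $P,Q$. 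We finish by rescaling $\delta$ and $\epsilon$.

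\textbf{Construction and step (a).} Draw $Z_1,\dots,Z_N\iidsim P$ and $W_1,\dots,W_N\iidsim Q$, independently, with $N\asymp (n\vee m)^2/\delta$. Let $P'_S$ and $Q'_T$ be the uniform distributions on $S=\{Z_i\}$ and $T=\{W_j\}$. Sampling $n$ points iid from $P'_S$, conditionally on $S$, amounts to sampling indices with replacement. On the event that the indices are distinct, which has probability at least $1-n^2/(2N)$ by the birthday bound, the resulting points are exactly iid $P$ after marginalizing out $S$. The same holds for $Q'_T$. This gives (a) with total variation error at most $\delta$.

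\textbf{Step (b), separation.} Let $r:=\min_{i,j}\|Z_i-W_j\|_2$. Because $P$ and $Q$ have densities bounded by $C/\textnormal{Vol}(\mathbb{B}_d)$, we have $\PP{\|Z_i-W_j\|\le \rho}\le C\rho^d$. A union bound over the $N^2$ cross pairs then shows that $r\ge \rho_0:=(\delta/(CN^2))^{1/d}$ with probability at least $1-\delta$. We use the set $A=\{x:\ \text{the nearest point of }S\cup T\text{ to }x\text{ lies in }S\}$ as a test set, so that $\dtv^h(P'_S,Q'_T)\ge P'_S\ast\psi_h(A)-Q'_S\ast\psi_h(A)$ with the obvious meaning (second term computed for $Q'_T$). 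A point $Z_i+h\xi$ with $\xi\sim\psi$ lands closer to $W_j$ than to $Z_i$ only if the projection of $\xi$ onto the unit vector in the direction $W_j-Z_i$ exceeds $\|Z_i-W_j\|/(2h)$. This has Gaussian tail probability at most $e^{-r^2/(8h^2)}$, and the bound is dimension-free. A union bound over $j$ gives $P'_S\ast\psi_h(A^c)\le N e^{-r^2/(8h^2)}$, and symmetrically $Q'_T\ast\psi_h(A)\le Ne^{-r^2/(8h^2)}$. Hence $\dtv^h(P'_S,Q'_T)\ge 1-2Ne^{-r^2/(8h^2)}\ge 1-\epsilon$ as soon as $h\le \rho_0/\sqrt{8\log(2N/\epsilon)}$.

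\textbf{Main obstacle: matching this to \eqref{eqn:h_upper_bound_prop:hardness_lowdim_highdim}.} When $d>c_1\log(n\wedge m)$, $\rho_0$ is bounded below by a constant and $\log N\asymp\log(n\wedge m)$, so the condition reads $h\lesssim 1/\sqrt{\log(n\wedge m)}$, which is exactly the second case. When $d$ is small, the naive argument gives $h\lesssim N^{-2/d}/\sqrt{\log N}\approx n^{-4/d}/\sqrt{\log n}$. This is weaker than the target $n^{-2/d}/\sqrt d$, and there are two mismatches to remove. The first is the exponent. I would try to improve it by requiring separation only from the atoms that actually receive data. This can be done by coupling, so that only about $n\cdot N$ (rather than $N^2$) cross pairs matter, or by taking $N\asymp n/\delta$ and replacing the birthday bound with a Poissonization-type argument. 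The second is the factor $\sqrt{\log N}$, which should become $\sqrt d$. When $d\le c_1\log n$, the Gaussian bound $e^{-r^2/8h^2}$ is too crude, and I would instead use the tail of $\|\xi\|$, which concentrates at $\sqrt d$: the blobs of radius $h\sqrt d$ are nearly disjoint once $r\gtrsim h\sqrt d$. Since $d\lesssim\log n$, this yields the $\sqrt d$ scaling. Getting these constants to line up simultaneously is the step I expect to require the most care; the overall architecture of (a)–(c) is routine.
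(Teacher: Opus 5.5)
Your reduction (a)--(c) is the right architecture; it is essentially the paper's Lemma~\ref{lem:apply_sample_resample}. But the way you instantiate it does not prove the theorem, and the step you flag as the main obstacle is exactly where the key idea is missing. You discretize \emph{both} $P$ and $Q$ with $N\asymp (n\vee m)^2/\delta$ atoms and then require worst-case separation of all $N^2$ cross pairs. As a result, every bound you get depends on $n\vee m$: both $\rho_0=(\delta/(CN^2))^{1/d}$ and the union-bound factor $\log N$ scale with $\log(n\vee m)$. So your claim that $\log N\asymp\log(n\wedge m)$ in the high-dimensional case is false, and that case already fails when $m\gg n$, which the theorem allows. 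In the low-dimensional case you only reach $h\lesssim N^{-2/d}/\sqrt{\log N}$, and your proposed repairs do not close the gap. Taking $N\asymp n/\delta$ breaks step (a): with $N\ll n^2$, repeated draws occur with non-vanishing probability, so the resampled data are far in total variation from $P^n$, and Poissonization does not change this. ``Separation only from the atoms that receive data'' is also not available, because $\dtv^h(P'_S,Q'_T)$ depends on all atoms.

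The missing idea is to discretize only the smaller sample (WLOG $n\le m$, with $N=\lceil n(n-1)/\delta\rceil$) and leave $Q$ continuous, using its bounded density instead of any separation event. Since $q\le C/\mathrm{Vol}(\mathbb{B}_d)$, the density of $Q\ast\psi_h$ obeys the same bound. Take $A=\bigcup_{i\le N}\mathbb{B}_d(X_i,h\,t_{d,\epsilon})$, where $t_{d,\epsilon}$ is the $(1-\epsilon/2)$-quantile of $\chi_d$ (so $t_{d,\epsilon}\lesssim_\epsilon\sqrt d$). Then $(\Ph_N\ast\psi_h)(A)\ge 1-\epsilon/2$ and $(Q\ast\psi_h)(A)\le CN(h\,t_{d,\epsilon})^d$, deterministically in the $X_i$. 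Only $N$ balls, not $N^2$ pairs, are charged against the density. This yields $h\lesssim N^{-1/d}/\sqrt d\asymp n^{-2/d}/\sqrt d$ for every $d$, with no $\log N$ and no dependence on $m$. Your instinct to use the $\chi_d$ tail is right, but it has to be paired with this volume bound. For $d>c_1\log n$ the paper instead takes $A=\{x:\max_i X_i^\top x\ge 1/2\}$. Most $\|X_i\|_2^2\ge 3/4$ by the bounded density of $P$, so $(\Ph_N\ast\psi_h)(A)$ is close to $1$. Meanwhile, spherical concentration and a Gaussian tail give $(Q\ast\psi_h)(A)\le 2CNe^{-d/32}+Ne^{-1/(32h^2)}$, which is small once $d\gtrsim\log N$ and $h\lesssim 1/\sqrt{\log N}$, i.e.\ in terms of $\log n$ only. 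Your two-sided construction could be salvaged only by treating the $Q$-side this way, e.g.\ by bounding $\mathbb{E}_T[(Q'_T\ast\psi_h)(A)]=(Q\ast\psi_h)(A)$, which is the one-sided argument in disguise.
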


Thus, for small $h$, a meaningful DF-UCB is impossible: even if $P=Q$, any valid DF-UCB will be close to $1$ with $\approx 1-\alpha$ probability (similar to the result of Theorem~\ref{thm:hardness}).
On the other hand, if $h$ is sufficiently large, then providing a DF-UCB is a trivial problem for any choice of kernel $\psi$. For the Gaussian kernel, for any $P,Q$ supported on $\mathbb{B}_d$, 
$\dtv^h(P,Q) \leq \frac{\sqrt{2/\pi}}{h}$.
Consequently, for $h\gg 1$, $\dtv^h(P,Q)$ (for bounded $P,Q$) is trivial to bound, since it must necessarily be close to zero.

\begin{figure}[t]
    \centering
    \fbox{\input{highdim_illustration_tikz}}
    \caption{A visualization of the results of Section~\ref{sec:curse_of_dimensionality}, for distributions $P,Q$ with bounded density on the unit ball.}
    \label{fig:highdim_illustration_tikz}
\end{figure}
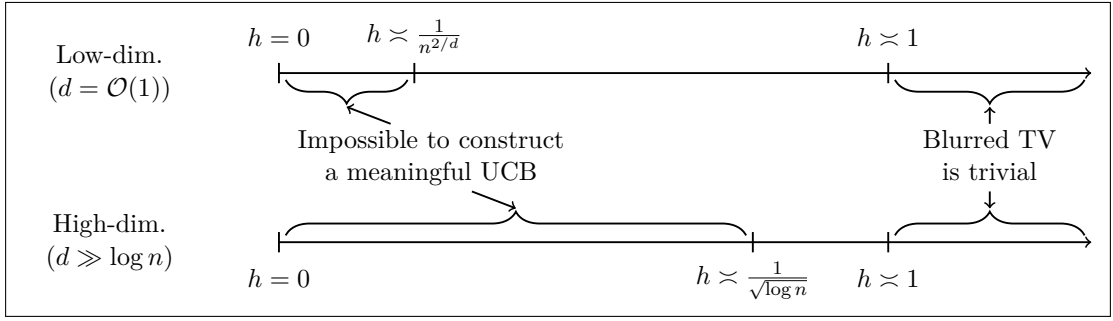

These findings are illustrated in Figure~\ref{fig:highdim_illustration_tikz}. To summarize, if $d=\mathcal{O}(1)$, then the problem of a DF-UCB for blurred TV distance is meaningful across a broad range of bandwidths $h$; there is a wide span of $h$ values in between the regime where the problem is impossible ($h\ll 1/n^{2/d}$) or trivial ($h\gg 1$). However, as soon as $d\gg \log n$, the range of bandwidths where the problem is meaningful becomes much narrower. 

It seems, therefore, that the blurred TV distance suffers from a curse of dimensionality if we wish to perform distribution-free inference. However, there is an important caveat to this conclusion: we have been assuming that the distributions $P,Q$ generating the data have bounded density, which means that $P,Q$ behave similarly to uniform distributions over the unit ball, and do not have latent low-dimensional structure. We turn to this aspect of the question next.

\subsection{Dependence of blurred TV on the effective dimension}
In this section, we show that the blurred TV distance depends only on the \emph{effective dimension} of the underlying distributions, rather than on the ambient dimension $d$. To emphasize the role of dimension, we will write $\psi_d$ for the density of $\mathcal{N}(\boldsymbol{0},I_d)$, and denote blurred TV computed with kernel $\psi_d$ by $\dtv^{h,\psi_d}$.

Suppose $P,Q\in\mathcal{P}_d$ are supported on some $k$-dimensional linear subspace in $\R^d$. The following result shows that the behavior of blurred TV depends on the ``true'' dimension $k$ of these distributions, rather than the ambient dimension $d$.

\begin{theorem}\label{thm:blurred_tv_depends_on_eff_dimension}
Let $\psi_d$ be the Gaussian kernel~\eqref{kernel:gaussian}. For any $h>0$, and any $P,Q\in\mathcal{P}_d$ supported on a $k$-dimensional subspace whose span is determined by an orthonormal matrix $A\in\R^{d\times k}$,
\[
\dtv^{h,\psi_d}(P,Q)\;=\;\dtv^{h,\psi_k}\bigl(A\circ P, A \circ Q\bigr),
\]
where $A\circ P\in\mathcal{P}_k$ denotes the distribution of $A^\top X$ when $X\sim P$, and similarly for $A\circ Q$.
\end{theorem}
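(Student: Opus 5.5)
The plan is to complete $A$ to an orthonormal basis and then factor the $d$-dimensional Gaussian blur into a $k$-dimensional part along the subspace and an independent $(d-k)$-dimensional part orthogonal to it. Choose $B\in\R^{d\times(d-k)}$ so that $O=[A\ B]$ is a $d\times d$ orthogonal matrix. Total variation distance is invariant under any bijective measurable map applied to both arguments, and in particular under $z\mapsto O^\top z$. So the first step is to show
\[
\dtv^{h,\psi_d}(P,Q)=\dtv\bigl(\mathcal{L}(O^\top(X+h\xi)),\,\mathcal{L}(O^\top(Y+h\xi'))\bigr),
\]
where $X\sim P$, $Y\sim Q$, and $\xi,\xi'\sim\mathcal{N}(\boldsymbol{0},I_d)$ are independent of $X$ and $Y$ respectively.

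Next we compute these laws. Since $P$ is supported on the column span of $A$, we have $B^\top X=0$ almost surely. Hence
\[
O^\top(X+h\xi)=\bigl(A^\top X+hA^\top\xi,\ hB^\top\xi\bigr).
\]
By rotation invariance of the standard Gaussian, $O^\top\xi\sim\mathcal{N}(\boldsymbol{0},I_d)$. Therefore $A^\top\xi\sim\mathcal{N}(\boldsymbol{0},I_k)$ and $B^\top\xi\sim\mathcal{N}(\boldsymbol{0},I_{d-k})$ are independent, and both are independent of $X$. The law of $O^\top(X+h\xi)$ is thus the product measure $\bigl((A\circ P)\ast\psi_{k,h}\bigr)\otimes\mathcal{N}(\boldsymbol{0},h^2I_{d-k})$. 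The same holds for $Q$, with an identical second factor.

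The last step is the standard identity $\dtv(\mu\otimes\nu,\mu'\otimes\nu)=\dtv(\mu,\mu')$. I would prove it via densities with respect to $(\mu+\mu')\otimes\nu$. Writing $f,f'$ for the densities of $\mu,\mu'$ with respect to $\mu+\mu'$, the quantity $\tfrac12\int|f(u)-f'(u)|\,\mathsf{d}(\mu+\mu')(u)\,\mathsf{d}\nu(v)$ integrates out $v$. Alternatively, use the data-processing inequality twice: marginalizing gives $\le$ in one direction, and the Markov kernel that appends an independent $\nu$-draw gives the reverse. Combining the three steps yields $\dtv^{h,\psi_k}(A\circ P,A\circ Q)$.

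There is no real obstacle here. The one point needing care is the independence of $A^\top\xi$ and $B^\top\xi$, which is what makes the orthogonal component an identical, independent factor. This is exactly where Gaussianity of the kernel is used. The case $k=d$ is trivial, since then no $B$ is needed.
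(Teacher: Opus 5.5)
Your proposal is correct and follows the paper's proof essentially step for step. Both arguments complete $A$ to an orthogonal matrix $(A\;B)$, use $B^\top X=0$ together with the independence of $A^\top\xi$ and $B^\top\xi$ to identify the rotated blurred law as $\bigl((A\circ P)\ast(\psi_k)_h\bigr)\times(\psi_{d-k})_h$, and then cancel the common Gaussian factor. The one addition is that you justify the identity $\dtv(\mu\otimes\nu,\mu'\otimes\nu)=\dtv(\mu,\mu')$, which the paper uses without comment.
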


Since the empirical measures $\Ph_n$ and $\Qh_m$ are also supported on the same subspace, the same identity holds empirically, i.e., 
$
\dtv^{h,\psi_d}(\Ph_n,\Qh_m)\;=\;\dtv^{h,\psi_k}\bigl(A\circ \Ph_n,\,A \circ \Qh_m\bigr)$.
As a consequence, all confidence bounds developed in earlier sections depend on the intrinsic dimension $k$, rather than the ambient dimension $d$.

In practice, it is more likely that a distribution will have approximate, rather than exact, low-dimensional structure: that is, $P$ and $Q$ may be concentrated near some $k$-dimensional subspace (or manifold). In such settings, we would expect that $k$ still behaves as the effective dimension of the problem, allowing us to avoid the curse of dimensionality when we perform distribution-free inference on the blurred TV. We emphasize that the distribution-free guarantees of Section~\ref{sec:confidence_bounds} do not assume any knowledge of the effective dimension: rather, any DF-UCB on the blurred TV will be valid regardless, but can be meaningful (i.e., not close to $1$) only if the effective dimension is low.

\subsection{Simulation study: blurred TV in high dimension}\label{sec:simulation_dimension}
We illustrate how the problem of empirically estimating blurred TV is governed by the effective dimension rather than the ambient dimension, even if the distributions are not exactly supported on a low-dimensional subspace. Define distributions
\[P = \mathcal{N}\big( \beta\cdot \mathbf{e}_1,  (1-\tau)\mathbf{e}_1\mathbf{e}_1^\top + \tau I_d\big),\quad Q = \mathcal{N}\big( -\beta\cdot \mathbf{e}_1,  (1-\tau)\mathbf{e}_1\mathbf{e}_1^\top + \tau I_d\big),
\]
in dimension $d=20$, where $\mathbf{e}_1=(1,0,\dots,0)$.
The parameter $\beta\geq 0$ controls signal strength, with $\beta=0$ corresponding to $P=Q$, while larger $\beta$ increases separation.
The parameter $\tau\in(0,1]$ controls effective dimension, with $\tau\approx 0$ corresponding to effective dimension $\approx 1$.

Figure~\ref{fig:blurred_tv_in_highD} shows the empirical blurred TV $\dtv^h(\Ph_n,\Qh_m)$, computed via Monte Carlo approximation with $B=5000$ from $n=m=200$ samples of each distribution, for various $\beta$ and $\tau$. When the effective dimension is low ($\tau\approx 0$), curves corresponding to different $\beta$ separate clearly: blurred TV decays rapidly for small $\beta$ but remains positive over a wider range of $h$ for larger $\beta$. In contrast, at high effective dimension (large $\tau$), the curves collapse and decay at nearly the same bandwidth, indicating loss of empirical distinguishability between $P=Q$ and $P\neq Q$.

\begin{figure}[t]
    \centering
    \includegraphics[width=0.89\linewidth]{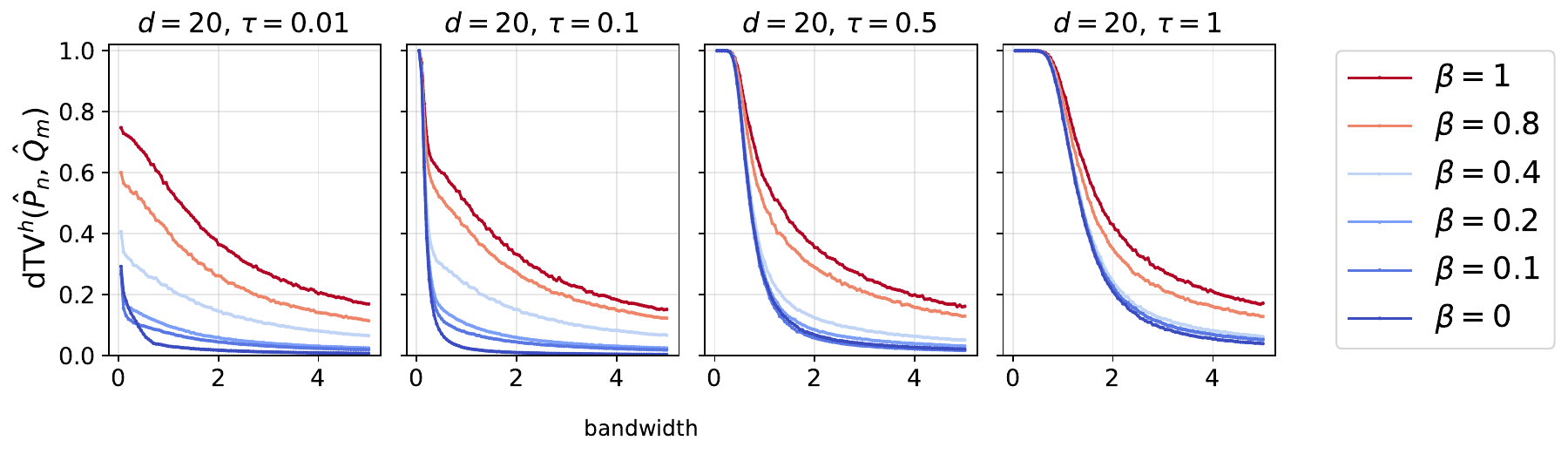}
    \caption{Effect of dimension on the empirical blurred TV $\dtv^h(\Ph_n,\Qh_m)$. See Section~\ref{sec:simulation_dimension} for simulation details.}
    \label{fig:blurred_tv_in_highD}
\end{figure}

\section{Discussion}\label{sec:discussion}
In this work, we have established distribution-free confidence bounds for the blurred TV distance, providing a useful proxy for the classical TV distance (which cannot be meaningfully bounded in an assumption-free regime). Our results are valid regardless of dimension and sample size, and our theoretical and empirical results show that the feasibility of meaningful inference on the blurred TV depends on intrinsic rather than ambient dimension: low effective dimension permits meaningful discrimination even in high-dimensional settings, while genuinely high-dimensional regimes mean that any assumption-free inference on blurred TV will be unable to distinguish whether distributions are similar or not. 

With these challenges in mind, a promising direction for future work would be to develop a meaningful blurred TV measure on low-dimensional projections or functions of high-dimensional data. Moreover, an important open question is whether assumption-free bounds on blurred TV (and any potential extensions for high-dimensional settings) may be useful in modern applications of two-sample testing problems, such as assessing the quality of generative models.

\subsection*{Acknowledgments}
R.F.B. was supported by the Office of Naval Research via grant N00014-24-1-2544 and by the National Science Foundation via grant DMS-202310.

\bibliographystyle{chicago}
\bibliography{reference}

\appendix

\section{Proofs of results from Section~\ref{sec:intro}}
\subsection{Proof of Theorem~\ref{thm:hardness}: impossibility of \dfucb on $\dtv$}
Fix $P,Q$ such that $\textnormal{atom}(P)\cap \textnormal{atom}(Q)=\varnothing$. Let $\hat{U}_\alpha$ be any DF-UCB on the total variation distance $\dtv(P,Q)$. Now, we consider $\tilde{X}_1,\dots,\tilde{X}_N\iidsim P$, i.e., $N$ samples from distribution $P$ on $\R^d$, and let $\Ph_N$ be the corresponding empirical distribution. If $\tilde{X}_i \not\in\textnormal{atom}(Q)$ for all $i\in [N]$, then $\dtv(\Ph_N,Q)=1$, and therefore by Definition~\ref{def:DF_confidence_bounds},
\begin{multline*}
\Ppst{(\Ph_N)^n\times Q^m\times\textnormal{Unif}[0,1]}{\hat{U}_\alpha((X_1,\dots,X_n),(Y_1,\dots,Y_m),\zeta) = 1}{\tilde{X}_1,\dots,\tilde{X}_N} \\\geq (1-\alpha)\cdot\One{\{\tilde{X}_1,\dots,\tilde{X}_N\}\cap \textnormal{atom}(Q) = \varnothing}.
\end{multline*}
That is, $\hat{U}_\alpha$ must provide a valid upper confidence bound when data is drawn from distributions $\Ph_N$ and $Q$.
We can equivalently write this as follows. Conditionally on $\tilde{X}_1,\dots,\tilde{X}_N$, let $i_1,\dots,i_n\iidsim \textnormal{Unif}([N])$ be indices sampled uniformly at random. Then, $\tilde{X}_{i_1},\dots,\tilde{X}_{i_n}$ are i.i.d.\ samples from $\Ph_N$, and so it holds that 
\begin{multline*}
\Ppst{(\textnormal{Unif}([N]))^n\times Q^m\times\textnormal{Unif}[0,1]}{\hat{U}_\alpha((\tilde{X}_{i_1},\dots,\tilde{X}_{i_n}),(Y_1,\dots,Y_m),\zeta) = 1}{\tilde{X}_1,\dots,\tilde{X}_N}\\\geq (1-\alpha)\cdot\One{\{\tilde{X}_1,\dots,\tilde{X}_N\}\cap \textnormal{atom}(Q) = \varnothing}.
\end{multline*}
 Since $\textnormal{atom}(P)\cap \textnormal{atom}(Q) = \varnothing$, we therefore have $\{\tilde{X}_1,\dots,\tilde{X}_N\}\cap \textnormal{atom}(Q) = \varnothing$ almost surely, and so
\[\Ppst{(\textnormal{Unif}([N]))^n\times Q^m\times\textnormal{Unif}[0,1]}{\hat{U}_\alpha((\tilde{X}_{i_1},\dots,\tilde{X}_{i_n}),(Y_1,\dots,Y_m),\zeta) = 1}{\tilde{X}_1,\dots,\tilde{X}_N}\geq1-\alpha\]
holds almost surely.
After marginalizing over the draw of the data points $\tilde{X}_1,\dots,\tilde{X}_N$, then, we have
\[\Pp{P^N\times (\textnormal{Unif}([N]))^n\times Q^m\times\textnormal{Unif}[0,1]}{\hat{U}_\alpha((\tilde{X}_{i_1},\dots,\tilde{X}_{i_n}),(Y_1,\dots,Y_m),\zeta) = 1}\geq 1-\alpha.\]
Lastly, note that the data set $(\tilde{X}_{i_1},\dots,\tilde{X}_{i_n})$ is generated by sampling $N$ i.i.d.\ draws from $P$, and then sampling $n$ times \emph{with replacement} from this data set; if $N\gg n$ then this is nearly equivalent to simply sampling $n$ i.i.d.\ draws from $P$, and in particular,
\[\dtv\big((\tilde{X}_{i_1},\dots,\tilde{X}_{i_n}),(X_1,\dots,X_n)\big) \leq \frac{n(n-1)}{2N}\]
(see, e.g., \citet[Lemma 4.16]{angelopoulos2024theoretical}). Therefore, it follows that 
\[\Pp{P^n\times Q^m\times \textnormal{Unif}[0,1]}{\hat{U}_\alpha((X_1,\dots,X_n),(Y_1,\dots,Y_m),\zeta) = 1}\geq 1-\alpha - \frac{n(n-1)}{2N}.\]
Finally, since $N$ can be taken to be arbitrarily large, this completes the proof. $\hfill\square$

\section{Proofs of results from Section~\ref{sec:prop_blurred_TV}}
\subsection{Proof of Proposition~\ref{prop:blurred_tv&tv_properties}: properties of blurred TV}

Throughout the proof, let $X,Y,\xi$ be independent random variables with
$X\sim P$, $Y\sim Q$, and $\xi\sim\psi$.
We will repeatedly use the variational representation of total variation,
\begin{equation}\label{eqn:TV_as_sup}
\dtv(P_0,P_1)=\sup_{f:\R^d\to[0,1]}\bigl|\Ep{W\sim P_0}{f(W)}-\Ep{W\sim P_1}{f(W)}\bigr|,
\end{equation}
as well as its restriction to continuous test functions,
\begin{equation}\label{eqn:TV_as_sup_continuous}
\dtv(P_0,P_1)=\sup_{\textnormal{continuous }f:\R^d\to[0,1]}\bigl|\Ep{W\sim P_0}{f(W)}-\Ep{W\sim P_1}{f(W)}\bigr|,
\end{equation}
which holds for all distributions $P_0,P_1$ on $\R^d$
\citep[e.g.][Lemma~2.3]{larsson2026complete}.

\paragraph{Proof of part (i).}
Fix any function $f:\R^d\to[0,1]$, and define
\[
g(x):=\EE{f(x+h\xi)}.
\]
Then, we have
\begin{align*}
\bigl|\EE{f(X+h\xi)}-\EE{f(Y+h\xi)}\bigr|&=\bigl|\EE{\EEst{f(X+h\xi)}{X}}-\EE{\EEst{f(Y+h\xi)}{Y}}\bigr|\\&=\bigl|\EE{g(X)}-\EE{g(Y)}\bigr|\\&\le \dtv(P,Q),
\end{align*}
where the final inequality follows from~\eqref{eqn:TV_as_sup}.
Since this bound holds for every $f:\R^d\to[0,1]$, applying~\eqref{eqn:TV_as_sup}
once more time yields
\begin{align*}
\dtv^h(P,Q)&=\dtv(P\ast\psi_h,Q\ast\psi_h)\\&=\sup_{f:\R^d\to[0,1]}\bigl|\EE{f(X+h\xi)}-\EE{f(Y+h\xi)}\bigr|\le \dtv(P,Q).
\end{align*}

\paragraph{Proof of part (ii).}
By part~(i), we already have $\dtv^h(P,Q)\le \dtv(P,Q)$.
It therefore suffices to show that for any fixed $\epsilon>0$,
there exists $h_*>0$ such that
\[
\dtv^h(P,Q)\ge \dtv(P,Q)-\epsilon\qquad\textnormal{for all } h\in(0,h_*).
\]
By~\eqref{eqn:TV_as_sup_continuous}, we may choose a continuous function
$f:\R^d\to[0,1]$ satisfying
\[
\dtv(P,Q)\le\bigl|\EE{f(X)}-\EE{f(Y)}\bigr|+\epsilon/2.
\]
By continuity and boundedness of $f$,
\begin{align*}
\lim_{h\to0}\bigl|\EE{f(X)}-\EE{f(X+h\xi)}\bigr|&\le\lim_{h\to0}\EE{\bigl|f(X)-f(X+h\xi)\bigr|}\\&=\EE{\lim_{h\to0}\bigl|f(X)-f(X+h\xi)\bigr|}=0,
\end{align*}
where the interchange of limit and expectation follows from the dominated
convergence theorem.
An identical argument yields
\[
\lim_{h\to0}\bigl|\EE{f(Y)}-\EE{f(Y+h\xi)}\bigr|=0.
\]
Consequently, there exists $h_*>0$ such that for all $h\in(0,h_*)$,
\[
\bigl|\EE{f(X)}-\EE{f(X+h\xi)}\bigr|+\bigl|\EE{f(Y)}-\EE{f(Y+h\xi)}\bigr|\le \epsilon/2.
\]
By the triangle inequality, we further have
\begin{multline*}
\bigl|\EE{f(X)}-\EE{f(Y)}\bigr|\le\bigl|\EE{f(X+h\xi)}-\EE{f(Y+h\xi)}\bigr|\\+
\bigl|\EE{f(X)}-\EE{f(X+h\xi)}\bigr|+\bigl|\EE{f(Y)}-\EE{f(Y+h\xi)}\bigr|.
\end{multline*}
Combining the previous displays, we obtain that for all $h\in(0,h_*)$,
\[
\dtv(P,Q)\le\bigl|\EE{f(X+h\xi)}-\EE{f(Y+h\xi)}\bigr|+\epsilon.
\]
Finally, by~\eqref{eqn:TV_as_sup_continuous},
\[
\bigl|\EE{f(X+h\xi)}-\EE{f(Y+h\xi)}\bigr|\le\dtv^h(P,Q),
\]
and hence,
\[
\dtv(P,Q)\le \dtv^h(P,Q)+\epsilon\quad\textnormal{for all } h\in(0,h_*),
\]
which completes the proof.

\paragraph{Proof of part (iii).}
We begin by observing that, by~\eqref{eqn:TV_as_sup}, 
\begin{align*}
\dtv^h(P,Q)&= \dtv(P\ast\psi_h,Q\ast\psi_h)\\&= \sup_{f:\R^d\to[0,1]}
\bigl|\EE{f(X+h\xi)}-\EE{f(Y+h\xi)}\bigr|\\&= \sup_{g:\R^d\to[0,1]}\bigl|\EE{g(X/h+\xi)}-\EE{g(Y/h+\xi)}\bigr|,
\end{align*}
where the last step holds by considering $g(x)=f(hx)$ for any function $f:\R^d\to[0,1]$.
Applying Jensen’s inequality and again using~\eqref{eqn:TV_as_sup}, we obtain
\begin{align*}
\dtv^h(P,Q)&\le
\EE{\sup_{g:\R^d\to[0,1]}\bigl|\EEst{g(X/h+\xi)}{X}-\EEst{g(Y/h+\xi)}{Y}\bigr|}\\
&=\EE{\dtv(\delta_{X/h}\ast\psi,\delta_{Y/h}\ast\psi)}\\&\le
\EE{\dtv(\delta_{X/h}\ast\psi,\psi)+\dtv(\delta_{Y/h}\ast\psi,\psi)},
\end{align*}
where the last step follows from the triangle inequality.

By Lemma~\ref{lem:dtv_pointmass} below, since $X/h\to 0$ almost surely as $h\to\infty$, it follows that
\[
\dtv(\delta_{X/h}\ast\psi,\psi)\to 0\quad\textnormal{almost surely},
\]
and the same conclusion holds with $Y$ in place of $X$.
Consequently,
\begin{align*}
\lim_{h\to\infty}\dtv^h(P,Q)\le\EE{\lim_{h\to\infty}\bigl(
\dtv(\delta_{X/h}\ast\psi,\psi)+\dtv(\delta_{Y/h}\ast\psi,\psi)\bigr)}=0,
\end{align*}
where the interchange of limit and expectation is justified by the dominated
convergence theorem, since the total variation distance is uniformly bounded by $1$.

\paragraph{Proof of part (iv).}
Let $(h_k)_{k\ge1}$ be any sequence in $[0,\infty)$ such that $h_k\to h\in[0,\infty)$. We aim to show that
\[
\lim_{k\to\infty}\dtv^{h_k}(P,Q)=\dtv^h(P,Q).
\]
If $h=0$, this follows immediately from part~(ii). Hence, from now on we assume $h>0$. Without loss of generality, we may further assume that $h_k>0$ for all $k$, since this holds for all sufficiently large $k$.

As in the proof of part~(iii), we can write
\[
\dtv^h(P,Q)=\sup_{g:\R^d\to[0,1]}\bigl|\EE{g(X/h+\xi)}-\EE{g(Y/h+\xi)}\bigr|,
\]
and similarly,
\[
\dtv^{h_k}(P,Q)=\sup_{g:\R^d\to[0,1]}\bigl|\EE{g(X/h_k+\xi)}-\EE{g(Y/h_k+\xi)}\bigr|.
\]
Therefore,
\begin{align*}
\left|\dtv^h(P,Q)-\dtv^{h_k}(P,Q)\right|&=\Biggl|\sup_{g:\R^d\to[0,1]}
\bigl|\EE{g(X/h+\xi)}-\EE{g(Y/h+\xi)}\bigr| \\
&\hspace{1.5cm}-\sup_{g:\R^d\to[0,1]}\bigl|\EE{g(X/h_k+\xi)}-\EE{g(Y/h_k+\xi)}\bigr|
\Biggr|\\&\le\sup_{g:\R^d\to[0,1]}\bigl|\EE{g(X/h+\xi)}-\EE{g(X/h_k+\xi)}\bigr|\\
&\hspace{1.5cm}+\sup_{g:\R^d\to[0,1]}\bigl|\EE{g(Y/h+\xi)}-\EE{g(Y/h_k+\xi)}\bigr|\\
&\le\EE{\dtv(\delta_{X/h}\ast\psi,\delta_{X/h_k}\ast\psi)}+\EE{\dtv(\delta_{Y/h}\ast\psi,\delta_{Y/h_k}\ast\psi)},
\end{align*}
where the last inequality follows by the same argument as in part~(iii).

Since $X/h_k\to X/h$ almost surely as $k\to\infty$, Lemma~\ref{lem:dtv_pointmass}, together with the dominated convergence theorem, yields
\[
\lim_{k\to\infty}\EE{\dtv(\delta_{X/h}\ast\psi,\delta_{X/h_k}\ast\psi)}=\EE{\lim_{k\to\infty}\dtv(\delta_{X/h}\ast\psi,\delta_{X/h_k}\ast\psi)}=0,
\]
and the same conclusion holds for the term involving $Y$.
Consequently,
\[
\lim_{k\to\infty}\left|\dtv^h(P,Q)-\dtv^{h_k}(P,Q)\right|=0,
\]
which completes the proof.
\hfill $\square$

\begin{lemma}\label{lem:dtv_pointmass}
Let $\psi\in\mathcal{K}_d$. Then
\[
\lim_{r\to0}\sup_{\substack{x,y\in\R^d\\\|x-y\|_2\le r}}\dtv(\delta_x\ast\psi,\delta_y\ast\psi)=0.
\]
\end{lemma}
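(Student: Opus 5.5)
The plan is to reduce the statement to continuity of translation in $L^1$. First, write the translated kernel as a density: $\delta_x\ast\psi$ has density $u\mapsto\psi(u-x)$. Then the total variation distance is half an $L^1$ norm:
\[
\dtv(\delta_x\ast\psi,\delta_y\ast\psi)=\frac12\int_{\R^d}\bigl|\psi(u-x)-\psi(u-y)\bigr|\,du .
\]
Substituting $w=u-y$ shows this equals $\frac12\|\tau_{v}\psi-\psi\|_{L^1}$, where $v=x-y$ and $\tau_v\psi(w)=\psi(w-v)$. The quantity therefore depends on $(x,y)$ only through the difference $v=x-y$. In the paper's notation it is exactly the shift modulus $\omega_\psi(x-y)$.

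Consequently the supremum over $\|x-y\|_2\le r$ equals $\sup_{\|v\|_2\le r}\frac12\|\tau_v\psi-\psi\|_{L^1}$. It suffices to show that $\|\tau_v\psi-\psi\|_{L^1}\to0$ as $v\to0$. This is continuity of translation in $L^1(\R^d)$, which holds for every $\psi\in L^1$ and in particular for every probability density.

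To keep the proof self-contained, I will prove this continuity by density. Fix $\epsilon>0$. Since continuous compactly supported functions are dense in $L^1(\R^d)$, choose $g\in C_c(\R^d)$ with $\|\psi-g\|_{L^1}<\epsilon/3$. Translation is an isometry of $L^1$, so the triangle inequality gives
\[
\|\tau_v\psi-\psi\|_{L^1}\le \|\tau_v\psi-\tau_v g\|_{L^1}+\|\tau_v g-g\|_{L^1}+\|g-\psi\|_{L^1}< \tfrac{2\epsilon}{3}+\|\tau_v g-g\|_{L^1}.
\]
Let $g$ be supported in a ball $K$. For $\|v\|_2\le1$, both $g$ and $\tau_v g$ are supported in the $1$-enlargement $K_1$ of $K$. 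Since $g$ is uniformly continuous, $\|\tau_v g-g\|_{L^1}\le \textnormal{Vol}(K_1)\sup_w|g(w-v)-g(w)|$, and this tends to $0$ as $v\to0$. Hence for $\|v\|_2$ small enough the total is below $\epsilon$.

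No step here is a serious obstacle. The only point needing care is that the bound must be uniform over pairs $(x,y)$, not just pointwise. Translation invariance handles this: after the substitution, the quantity depends only on $v=x-y$, so the supremum over pairs becomes a supremum over $\|v\|_2\le r$, which the density argument controls.
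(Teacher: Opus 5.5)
Your proof is correct and follows essentially the same route as the paper. Both arguments use translation invariance to reduce the claim to $\sup_{\|v\|_2\le r}\omega_\psi(v)\to 0$, approximate $\psi$ by a continuous function, and close with the triangle inequality. The only difference is in the last step. The paper approximates $\psi$ by a continuous \emph{density} $\psi_1$ and applies dominated convergence with the bound $\sup_{\|x\|_2\le r}(\psi_1(y)-\psi_1(y-x))_+\le\psi_1(y)$. You instead approximate by a $C_c$ function and use uniform continuity on a compact support, which is the standard textbook proof of $L^1$-continuity of translation.
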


\begin{proof}
Firstly, since total variation distance is invariant to translations, we have 
\[
\dtv(\delta_x\ast\psi,\delta_y\ast\psi)=\dtv(\psi,\delta_{y-x}\ast\psi).
\]
Thus, it suffices to show that
\[
\lim_{r\to 0}\sup_{\|x\|_2\le r}\dtv(\psi,\delta_x\ast\psi)=0.
\]

Fix an $\epsilon>0$. Since the class of continuous densities is dense in $\mathcal{K}_d$ with respect to total variation distance, we can choose a continuous density $\psi_1$ such that
\[
\dtv(\psi,\psi_1)\le\epsilon.
\]
Then, for all $x\in\R^d$,
\begin{align*}
\dtv(\psi,\delta_x\ast\psi)&\le
\dtv(\psi_1,\delta_x\ast\psi_1)+\dtv(\psi,\psi_1)+\dtv(\delta_x\ast\psi,\delta_x\ast\psi_1)\\
&=\dtv(\psi_1,\delta_x\ast\psi_1)+2\dtv(\psi,\psi_1)\\&\le \dtv(\psi_1,\delta_x\ast\psi_1)+2\epsilon,
\end{align*}
where the second step again uses translation invariance of total variation distance.

The distribution $\delta_x\ast\psi_1$ has density $y\mapsto\psi_1(y-x)$, and therefore
\[\dtv(\psi_1,\delta_x\ast\psi_1)
=\int_{\R^d}\bigl(\psi_1(y)-\psi_1(y-x)\bigr)_+\,\mathsf{d}y.
\]
Consequently,
\begin{align*}
\lim_{r\to0}\sup_{\|x\|_2\le r}\dtv(\psi_1,\delta_x\ast\psi_1)
&\le\lim_{r\to0}\int_{\R^d}\sup_{\|x\|_2\le r}
\bigl(\psi_1(y)-\psi_1(y-x)\bigr)_+\,\mathsf{d}y\\
&=\int_{\R^d}\lim_{r\to0}\sup_{\|x\|_2\le r}
\bigl(\psi_1(y)-\psi_1(y-x)\bigr)_+\,\mathsf{d}y\\&=0,
\end{align*}
where the last equality follows from continuity of $\psi_1$, and the interchange of limit and integral is justified by the dominated convergence theorem, using the bound
\[
0\le\sup_{\|x\|_2\le r}\bigl(\psi_1(y)-\psi_1(y-x)\bigr)_+\le\psi_1(y),
\]
together with the integrability of $\psi_1$. This completes the proof.
\end{proof}

\subsection{Proof of Proposition~\ref{prop:gaussian_monotonicity}: monotonicity of blurred TV with a Gaussian kernel}
We start with observing that for $0<h_1<h_2<\infty$,
    \[
    \mathcal{N}(0,h_2^2)=\mathcal{N}(0,h_1^2) \ast \mathcal{N}(0,h_2^2-h_1^2).
    \]
    Therefore, defining $h_3= \sqrt{h_2^2-h_1^2}$, for any distribution $P$ we have $P\ast \psi_{h_2}=(P\ast \psi_{h_1}) \ast \psi_{h_3}$. Hence, for any distributions $P$ and $Q$,
    \[
        \dtv^{h_2}(P,Q)=\dtv((P\ast \psi_{h_1}) \ast \psi_{h_3},(Q\ast \psi_{h_1}) \ast \psi_{h_3})\leq \dtv(P\ast \psi_{h_1},Q\ast \psi_{h_1})=\dtv^{h_1}(P,Q),
    \]
    where the inequality holds by part (i) of Proposition~\ref{prop:blurred_tv&tv_properties}.
    $\hfill\square$

\subsection{Proof of Theorem~\ref{thm:blurred_TV_convergence_asymptotic}: convergence of the empirical distribution in blurred TV}

We begin by defining a truncation map $g:\R^d\to\R^d$ that projects any point onto the Euclidean ball of radius $R$:
\[
g(x)=\begin{cases}
x, & \|x\|_2\le R,\\[0.3em]
R\,\dfrac{x}{\|x\|_2}, & \|x\|_2>R.
\end{cases}
\]
Let $Q$ denote the distribution of $g(X)$ when $X\sim P$, and let $\Qh_n$ be the empirical distribution of $g(X_1),\ldots,g(X_n)$. By construction, $\Qh_n$ is the empirical distribution of $n$ i.i.d.\ samples from $Q$.

Next, let $\dw$ denote the $1$-Wasserstein distance. By \citet[Theorem~3.1]{lei2020convergence}, we have
\[
\EE{\dw(\Qh_n,Q)}\le C\,R
\begin{cases}
n^{-1/2}, & d=1,\\
n^{-1/2}\log n, & d=2,\\
n^{-1/d}, & d\ge3,
\end{cases}
\]
where $C$ is a universal constant. In particular, this bound can be relaxed to
\[
\EE{\dw(\Qh_n,Q)}\le C\,R\,n^{-1/\max\{2,d\}}\log n.
\]
Now fix any $\epsilon>0$. Applying Lemma~\ref{lem:dtvh_via_coupling} below, with $P_0=\Qh_n$ and $P_1=Q$, we obtain
\[
\dtv^h(\Qh_n,Q)\le \epsilon + \frac{\dw(\Qh_n,Q)}{r(\epsilon,h,\psi)},
\]
where $r(\epsilon,h,\psi)>0$ depends only on $\epsilon,h,\psi$.
Taking expectations yields
\[
\EE{\dtv^h(\Qh_n,Q)}\le \epsilon + \frac{C\,R\,n^{-1/\max\{2,d\}}\log n}{r(\epsilon,h,\psi)}.
\]
We now relate $\dtv^h(\Ph_n,P)$ to the blurred TV between the truncated distributions. By the triangle inequality,
\begin{align*}
\EE{\dtv^h(\Ph_n,P)}
&\le \EE{\dtv^h(\Qh_n,Q)+\dtv^h(\Ph_n,\Qh_n)+\dtv^h(P,Q)}\\
&\le \EE{\dtv^h(\Qh_n,Q)+\dtv(\Ph_n,\Qh_n)+\dtv(P,Q)},
\end{align*}
where the second inequality follows from Proposition~\ref{prop:blurred_tv&tv_properties}. By construction of $Q$, we have
\[
\dtv(P,Q)=\Pp{P}{\|X\|_2>R}.
\]
Similarly, by construction of $\Qh_n$,
\[
\dtv(\Ph_n,\Qh_n)=\Pp{\Ph_n}{\|X\|_2>R},\qquad\EE{\dtv(\Ph_n,\Qh_n)}=\Pp{P}{\|X\|_2>R}.
\]
Combining these bounds, we obtain
\[
\EE{\dtv^h(\Ph_n,P)}\le
\epsilon+ \frac{C\,R\,n^{-1/\max\{2,d\}}\log n}{r(\epsilon,h,\psi)}+ 2\,\Pp{P}{\|X\|_2>R}.
\]
Letting $n\to\infty$ and noting that, for fixed $\epsilon>0$ and $R>0$,
\[
\lim_{n\to\infty}\frac{C\,R\,n^{-1/\max\{2,d\}}\log n}{r(\epsilon,h,\psi)}
=0,
\]
we conclude that
\[
\lim_{n\to\infty}\EE{\dtv^h(\Ph_n,P)}
\le \epsilon + 2\,\Pp{P}{\|X\|_2>R}.
\]
Since the above bound holds for arbitrary $\epsilon>0$ and $R>0$, it follows that
\[
\lim_{n\to\infty}\EE{\dtv^h(\Ph_n,P)}\le \inf_{\epsilon>0,\;R>0}
\left\{\epsilon + 2\,\Pp{P}{\|X\|_2>R}\right\}=0,
\]
which completes the proof.

\hfill $\square$
\begin{lemma}\label{lem:dtvh_via_coupling}
Fix any $\psi\in\mathcal{K}_d$ and any bandwidth $h>0$. Then for every $\epsilon>0$, there exists a constant
$r(\epsilon,h,\psi)>0$ such that
\[
\dtv^h(P_0,P_1)\le \epsilon + \frac{\dw(P_0,P_1)}{r(\epsilon,h,\psi)},
\]
for all distributions $P_0,P_1$ on $\R^d$.
\end{lemma}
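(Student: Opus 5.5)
We prove the bound by coupling. Fix $\epsilon>0$. By Lemma~\ref{lem:dtv_pointmass}, we choose $r>0$ such that
\[
\sup_{\|x-y\|_2\le r}\dtv(\delta_x\ast\psi,\delta_y\ast\psi)\le\epsilon .
\]
The bandwidth enters only through rescaling. Since $\dtv(\delta_x\ast\psi_h,\delta_y\ast\psi_h)=\dtv(\delta_{x/h}\ast\psi,\delta_{y/h}\ast\psi)$, we have $\dtv(\delta_x\ast\psi_h,\delta_y\ast\psi_h)\le\epsilon$ whenever $\|x-y\|_2\le rh$. We will set $r(\epsilon,h,\psi):=rh$.

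Next, let $(W_0,W_1)$ be any coupling of $P_0$ and $P_1$. Exactly as in the proof of part~(iii) of Proposition~\ref{prop:blurred_tv&tv_properties}, we write $\dtv^h(P_0,P_1)$ as a supremum over $f:\R^d\to[0,1]$ of $|\EE{f(W_0+h\xi)}-\EE{f(W_1+h\xi)}|$. We condition on $(W_0,W_1)$ and move the supremum inside the expectation, which gives
\[
\dtv^h(P_0,P_1)\le \EE{\dtv(\delta_{W_0}\ast\psi_h,\delta_{W_1}\ast\psi_h)}.
\]
We split this expectation according to whether $\|W_0-W_1\|_2\le rh$. On that event the integrand is at most $\epsilon$. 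On its complement we bound the integrand by $1$, and by Markov's inequality
\[
\PP{\|W_0-W_1\|_2>rh}\le \frac{\EE{\|W_0-W_1\|_2}}{rh}.
\]
Combining the two pieces yields $\dtv^h(P_0,P_1)\le\epsilon+\EE{\|W_0-W_1\|_2}/(rh)$.

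Finally, we take the infimum over couplings, which replaces $\EE{\|W_0-W_1\|_2}$ by $\dw(P_0,P_1)$ and gives the claim. If $\dw(P_0,P_1)=\infty$ the bound is trivial. If the infimum defining $\dw$ is not attained, we use near-optimal couplings and pass to the limit.

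No step is a serious obstacle. The only points needing care are measurability when interchanging the supremum and the conditional expectation, which the Jensen argument from part~(iii) already handles, and the fact that Lemma~\ref{lem:dtv_pointmass} supplies uniformity in location, so that $r$ depends only on $\epsilon$, $h$ and $\psi$ and not on $P_0$ or $P_1$.
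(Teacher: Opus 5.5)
Your proof is correct and follows the paper's argument: you pick $r$ from Lemma~\ref{lem:dtv_pointmass}, bound $\dtv^h(P_0,P_1)$ by $\EE{\dtv(\delta_{W_0}\ast\psi_h,\delta_{W_1}\ast\psi_h)}$ via a coupling and Jensen, then split on the distance event and apply Markov's inequality. The differences are cosmetic: you rescale from $\psi$ (giving $r h$) instead of applying the lemma directly to $\psi_h\in\mathcal{K}_d$, and you take an infimum over couplings (also covering non-attainment and $\dw=\infty$) rather than invoking an optimal coupling, which is slightly more careful.
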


\begin{proof}
First, we note that by Lemma~\ref{lem:dtv_pointmass}
(with $\psi_h$ in place of $\psi$), there exists a constant $r>0$ such that
\[
\dtv(\delta_x\ast\psi_h,\delta_y\ast\psi_h)\le \epsilon\qquad\text{for all } \|x-y\|_2\le r,
\]
where $r$ depends only on $\psi_h$ and $\epsilon$.
Now, we fix arbitrary distributions $P_0$ and $P_1$ on $\R^d$. By the coupling representation of the
$1$-Wasserstein distance, there exists a joint distribution $P$ on $\R^d\times\R^d$ such that
$(X,Y)\sim P$ has marginals $X\sim P_0$ and $Y\sim P_1$, and
\[
\Ep{P}{\|X-Y\|_2}=\dw(P_0,P_1).
\]
Let $\xi\sim\psi$ be independent of $(X,Y)$. Thus, by \eqref{eqn:TV_as_sup},
\[
\dtv^h(P_0,P_1)= \dtv(P_0\ast\psi_h,P_1\ast\psi_h)= \sup_{f:\R^d\to[0,1]}
\bigl|\EE{f(X+h\xi)}-\EE{f(Y+h\xi)}\bigr|
\]
Further, by Jensen's inequality and \eqref{eqn:TV_as_sup},
\begin{align*}
\dtv^h(P_0,P_1)&\le \EE{\sup_{f:\R^d\to[0,1]}
\bigl|\EEst{f(X+h\xi)}{X}-\EEst{f(Y+h\xi)}{Y}\bigr|}\\&= \EE{\dtv(\delta_X\ast\psi_h,\delta_Y\ast\psi_h)}.
\end{align*}
By the definition of $r$ above, we have
\[
\dtv(\delta_X\ast\psi_h,\delta_Y\ast\psi_h)\le \epsilon + \one{\{\|X-Y\|_2>r\}}.
\]
Taking expectations yields
\[
\EE{\dtv(\delta_X\ast\psi_h,\delta_Y\ast\psi_h)}\le \epsilon + \PP{\|X-Y\|_2>r}
\le \epsilon + \frac{\EE{\|X-Y\|_2}}{r},
\]
by Markov's inequality. Finally, since
$\EE{\|X-Y\|_2}=\dw(P_0,P_1)$ by construction of the coupling $P$, the desired bound follows.
\end{proof}

\subsection{Proof of Proposition~\ref{prop:E_inequality}: bounds on expected empirical blurred TV}
We begin by establishing the left inequality.  
Fix a bandwidth $h>0$ and $\epsilon>0$. By definition of $\dtv^h(P,Q)$, there exists a measurable set $A$ such that
\[
(P\ast \psi_h)(A)-(Q\ast \psi_h)(A)\ge \dtv^h(P,Q)-\epsilon.
\]
Next, note that $\Ph_n\ast\psi_h = \frac{1}{n}\sum_{i=1}^n \delta_{X_i}\ast \psi_h$. Therefore,
\begin{align*}
\EE{(\Ph_n\ast\psi_h)(A)}
&=\frac{1}{n}\sum_{i=1}^n \EE{(\delta_{X_i}\ast\psi_h)(A)}\\
&=\EE{(\delta_X\ast\psi_h)(A)}\textnormal{ where $X\sim P$}\\
&=\EE{\PPst{X + h \xi\in A}{X}} \textnormal{ where $\xi\sim \psi$ (with $\xi\independent X$)}\\
&=\PP{X+h\xi\in A}\\
&=(P\ast\psi_h)(A).\end{align*}
An identical argument shows that $\EE{(\Qh_m\ast\psi_h)(A)}=(Q\ast\psi_h)(A)$. Therefore, by Jensen’s inequality,
\begin{align*}
\EE{\dtv^h(\Ph_n,\Qh_m)}
&\ge \EE{\left|(\Ph_n\ast\psi_h)(A)-(\Qh_m\ast\psi_h)(A)\right|}\\
&\ge \left|\EE{(\Ph_n\ast\psi_h)(A)-(\Qh_m\ast\psi_h)(A)}\right|\\
&= (P\ast\psi_h)(A)-(Q\ast\psi_h)(A)
\ge \dtv^h(P,Q)-\epsilon.
\end{align*}
Since $\epsilon>0$ is arbitrary, the left inequality follows.

We now turn to the right inequality. Deterministically,
\[
\dtv^h(\Ph_n,\Qh_m)\le \dtv^h(\Ph_n,P)+\dtv^h(P,Q)+\dtv^h(\Qh_m,Q).
\]
Thus, it suffices to show that
\begin{align*}
\EE{\dtv^h(\Ph_n,P)}
&\le \EE{\dtv^h(\Ph^{(1)}_{n/2},\Ph^{(2)}_{n/2})},\\
\EE{\dtv^h(\Qh_m,Q)}
&\le \EE{\dtv^h(\Qh^{(1)}_{m/2},\Qh^{(2)}_{m/2})}.
\end{align*}
We prove the first inequality; the second follows by an analogous argument.

Let $X'_1,\ldots,X'_{\lfloor n/2\rfloor}\iidsim P$ be independent of $\Ph_n$, and let $\Ph^{\prime(1)}_{n/2}$ denote the corresponding empirical measure. View $X_{\lfloor n/2\rfloor+1},\ldots,X_n$ as sampled from $\Ph_n$ without replacement, and denote the associated empirical measure by $\Ph^{(2)}_{n/2}$. Then, for any measurable set $A\subseteq\R^d$,
\[
\EEst{(\Ph^{\prime(1)}_{n/2}\ast\psi_h)(A)}{\Ph_n}=(P\ast\psi_h)(A),
\qquad
\EEst{(\Ph^{(2)}_{n/2}\ast\psi_h)(A)}{\Ph_n}=(\Ph_n\ast\psi_h)(A).
\]
Repeating the argument used in the first part, we obtain
\[
\dtv^h(\Ph_n,P)
\le \EEst{\dtv^h(\Ph^{\prime(1)}_{n/2},\Ph^{(2)}_{n/2})}{\Ph_n}.
\]
Taking expectations over $X_1,\ldots,X_n\iidsim P$ yields
\[
\EE{\dtv^h(\Ph_n,P)}
\le \EE{\dtv^h(\Ph^{\prime(1)}_{n/2},\Ph^{(2)}_{n/2})}
= \EE{\dtv^h(\Ph^{(1)}_{n/2},\Ph^{(2)}_{n/2})},
\]
which completes the proof.
\hfill$\square$

\section{Proofs of results in Section~\ref{sec:confidence_bounds}}

Before presenting the proofs, we introduce notation that will be used throughout this section. Let $\bZ = (X_1,\dots,X_n,Y_1,\dots,Y_m)$, where
\[
(X_1,\ldots,X_n)\iidsim P, \qquad (Y_1,\ldots,Y_m)\iidsim Q.
\]
Independently of $\bZ$, let
\[
(X'_1,\ldots,X'_n)\iidsim P, \qquad (Y'_1,\ldots,Y'_m)\iidsim Q,
\]
and write $\bZ'=(X'_1,\ldots,X'_n,Y'_1,\ldots,Y'_m)$. For $i\in[n]$, define the measure
\[
\Ph^{(i)}_n=\frac{1}{n}\Big(\sum_{k\ne i}\delta_{X_k}+\delta_{X'_i}\Big),
\]
which perturbs $\Ph_n$ by replacing a single data point $X_i$ with $X'_i$, and similarly for $j\in[n+m]\setminus[n]$ define $\Qh^{(j)}_m$ by replacing $Y_{j-n}$ with $Y'_{j-n}$.
Similarly, for $i=1,\ldots,\lfloor n/2\rfloor$ and $j=\lfloor n/2\rfloor+1,\ldots,n$, define
\[
\Ph^{(1,i)}_{n/2}
=\frac{1}{\lfloor n/2\rfloor}\Big(\sum_{k\ne i}\delta_{X_k}+\delta_{X'_i}\Big),
\qquad
\Ph^{(2,j)}_{n/2}
=\frac{1}{\lceil n/2\rceil}\Big(\sum_{k\ne j}\delta_{X_k}+\delta_{X'_j}\Big).
\]
The measures $\Qh^{(1,i)}_{m/2}$ and $\Qh^{(2,j)}_{m/2}$ are defined analogously. Finally, for $i\in[n+m]$, let $\bZ^{(i)}$ denote the data vector obtained from $\bZ$ by replacing $Z_i$ with $Z'_i$.

\subsection{Proof of Theorem~\ref{thm:naive_confidence_bounds}: distribution-free confidence bounds at fixed bandwidth $h$}\label{app:proof_of_dfcb_fixed_h}

We first prove the \dfucb. Define
\[
F(\bZ):=\dtv^h(\Ph_n,\Qh_m).
\]
We show that $F(\bZ)$ has bounded sensitivity to the replacement of a single observation. For $i\in[n]$, replacing $Z_i=X_i$ by $Z'_i=X'_i$ yields the empirical measure $\Ph_n^{(i)}$ in place of $\Ph_n$, so that $F(\bZ^{(i)})=\dtv^h(\Ph_n^{(i)},\Qh_m)$. By the triangle inequality,
\[
|F(\bZ)-F(\bZ^{(i)})|
=\bigl|\dtv^h(\Ph_n,\Qh_m)-\dtv^h(\Ph_n^{(i)},\Qh_m)\bigr|
\le \dtv^h(\Ph_n,\Ph_n^{(i)})\le \frac{1}{n}.
\]
Similarly, for any $i\in[n+m]\setminus[n]$, we have
\[
|F(\bZ)-F(\bZ^{(i)})|\le \frac{1}{m}.
\]
Applying McDiarmid's inequality, we obtain that for any $\epsilon>0$,
\[
\PP{F(\bZ)\leq \EE{F(\bZ)}- \epsilon}
\le \exp\!\left(-\frac{2\epsilon^2}{1/n+1/m}\right),
\]
and therefore,
\[
\PP{\dtv^h(\Ph_n,\Qh_m)\geq \EE{\dtv^h(\Ph_n,\Qh_m)}-
 \epsilon_{n,m,\alpha}}\ge 1-\alpha.
\]
By Proposition~\ref{prop:E_inequality}, $\EE{\dtv^h(\Ph_n,\Qh_m)}$ provides a deterministic upper bound on $\dtv^h(P,Q)$. This establishes the \dfucb.

We now turn to the \dflcb. We define
\[
G(\bZ)
:=\dtv^h(\Ph_n,\Qh_m)
-\dtv^h(\Ph^{(1)}_{n/2},\Ph^{(2)}_{n/2})
-\dtv^h(\Qh^{(1)}_{m/2},\Qh^{(2)}_{m/2}).
\]
We again verify bounded sensitivity. For any $i=1,\ldots,\lfloor n/2\rfloor$,
\begin{align*}
|G(\bZ)-G(\bZ^{(i)})|
&\le \bigl|\dtv^h(\Ph_n,\Qh_m)-\dtv^h(\Ph_n^{(i)},\Qh_m)\bigr|
+ \bigl|\dtv^h(\Ph^{(1)}_{n/2},\Ph^{(2)}_{n/2})
-\dtv^h(\Ph^{(1,i)}_{n/2},\Ph^{(2)}_{n/2})\bigr| \\
&\le \dtv^h(\Ph_n,\Ph_n^{(i)})
+ \dtv^h(\Ph^{(1)}_{n/2},\Ph^{(1,i)}_{n/2}) \\
&\le \frac{1}{n} + \frac{1}{\lfloor n/2\rfloor}=:c_i
\end{align*}
An analogous bound holds for $i=\lfloor n/2\rfloor+1,\ldots,n$, with 
\[c_i = \frac{1}{n} + \frac{1}{\lceil n/2\rceil}.\]
And, analogous bounds hold for the samples from $Q$: for $i\in[n+m]\backslash[n]$, we have
\[|G(\bZ)-G(\bZ^{(i)})|\leq c_i \textnormal{ where }c_i = \begin{cases}\frac{1}{m} + \frac{1}{\lfloor m/2\rfloor}, & i=n+1,\dots,n+\lfloor m/2\rfloor,\\
\frac{1}{m} + \frac{1}{\lceil m/2\rceil}, & i=n+\lfloor m/2\rfloor +1 ,\dots, n+m.\end{cases}
\]
A straightforward calculation verifies that\[
\sum_{i=1}^{n+m} c_i^2\le \frac{9}{n-1} + \frac{9}{m-1}.
\]
Applying McDiarmid's inequality, we obtain that for any $\epsilon>0$,
\[
\PP{G(\bZ)\geq\EE{G(\bZ)}+ \epsilon}
\le \exp\!\left(-\frac{2\epsilon^2}{9/(n-1)+9/(m-1)}\right),
\]
and therefore,
\[
\PP{G(\bZ)\leq\EE{G(\bZ)}+
 3\,\epsilon_{n-1,m-1,\alpha}}
\ge 1-\alpha .
\]
By Proposition~\ref{prop:E_inequality}, $\EE{G(\bZ)}$ yields a deterministic lower bound on $\dtv^h(P,Q)$. This completes the proof of the \dflcb.

\subsection{Proof of Theorem~\ref{thm:monte_carlo_ucb}: distribution-free validity of Monte Carlo confidence bounds}\label{app:proof_of_monte_carlo_confidence_bounds}
The following result extends Theorem~\ref{thm:monte_carlo_ucb} to provide both a \dfucb and a \dflcb for the Monte Carlo setting.

\begin{reptheorem}{thm:monte_carlo_ucb}
Fix any $d\geq 1$, $n,m\geq 1$, $\psi\in\mathcal{K}_d$, $h>0$, and $\alpha\in(0,1)$. Define
\[\hat{U}_\alpha(\mathcal{D}_{n,m},B)
:= \widehat{\dtv^h}(\Ph_n,\Qh_m;B)
+ \epsilon_{n,m,\alpha/2}
+ \epsilon_{B,\alpha/2}\]
and
\begin{multline*}
\hat{L}_\alpha(\mathcal{D}_{n,m},B)
:= \max\Bigl\{\widehat{\dtv^h}(\Ph_n,\Qh_m;B)
- \widehat{\dtv^h}(\Ph_{n/2}^{(1)},\Ph_{n/2}^{(2)};B)\\
- \widehat{\dtv^h}(\Qh_{m/2}^{(1)},\Qh_{m/2}^{(2)};B)
- 3\,\epsilon_{n-1,m-1,\alpha/2}
- \sqrt{3}\epsilon_{B,\alpha/2},\,0\Bigr\},
\end{multline*}
where $\epsilon_{B,\alpha}:=\left(\frac{\log(1/\alpha)}{2B}\right)^{1/2}$. Then $\hat{U}_\alpha(\mathcal{D}_{n,m},B)$ is a \dfucb for $\dtv^h(\cdot,\cdot)$ at confidence level $1-\alpha$, and $\hat{L}_\alpha(\mathcal{D}_{n,m},B)$ is a \dflcb for $\dtv^h(\cdot,\cdot)$ at confidence level $1-\alpha$.
\end{reptheorem}
Here $\widehat{\dtv^h}(\Ph_n,\Qh_m;B)$ is defined as in Section~\ref{sec:monte_carlo_approximation}, and for the \dflcb, we also define
\[
\widehat{\dtv^h}(\Ph_{n/2}^{(1)},\Ph_{n/2}^{(2)};B)
:=\frac{1}{2B}\sum_{k=1}^B\left|
\frac{\mathsf{d}(\Ph_{n/2}^{(1)} \ast \psi_h) - \mathsf{d}(\Ph_{n/2}^{(2)} \ast \psi_h)}
{\mathsf{d}(\frac{1}{2}\Ph_{n/2}^{(1)}\ast \psi_h+\frac{1}{2}\Ph_{n/2}^{(2)}\ast \psi_h)}\bigl(W'_k + h\xi'_k\bigr)\right|,
\]
where $W'_1,\ldots,W'_B \iidsim \frac{1}{2}\Ph_{n/2}^{(1)}+\frac{1}{2}\Ph_{n/2}^{(2)}$ and $\xi'_1,\ldots,\xi'_B \iidsim \psi$, and
\[
\widehat{\dtv^h}(\Qh_{m/2}^{(1)},\Qh_{m/2}^{(2)};B)
:=\frac{1}{2B}\sum_{k=1}^B\left|
\frac{\mathsf{d}(\Qh_{m/2}^{(1)} \ast \psi_h) - \mathsf{d}(\Qh_{m/2}^{(2)} \ast \psi_h)}
{\mathsf{d}(\frac{1}{2}\Qh_{m/2}^{(1)} \ast \psi_h+\frac{1}{2}\Qh_{m/2}^{(2)} \ast \psi_h)}\bigl(W''_k + h\xi''_k\bigr)\right|,
\]
where $W''_1,\ldots,W''_B \iidsim \Qh_m$ and $\xi''_1,\ldots,\xi''_B \iidsim \psi$, where we draw each sample $\{(W_k,\xi_k)\}$, $\{(W'_k,\xi'_k)\}$, and $\{(W''_k,\xi''_k)\}$ independently.
\begin{proof}
We begin by defining $\bW=(W_1,\ldots,W_B)$ and $\bxi=(\xi_1,\ldots,\xi_B)$, and by writing
\begin{align*}
F(\bW,\bxi)
&:= \widehat{\dtv^{h}}(\Ph_n,\Qh_m;B)\\
&= \frac{1}{2B}
\sum_{k=1}^B\left|
\frac{\mathsf{d}(\Ph_n \ast \psi_{h}) - \mathsf{d}(\Qh_m \ast \psi_{h})}
{\mathsf{d}\bigl(\frac{1}{2}\Ph_n \ast \psi_h + \frac{1}{2}\Qh_m \ast \psi_h\bigr)}
\bigl(W_k + h\xi_k\bigr)
\right|.
\end{align*}
For any $u\in\R^d$, we observe that
\[
\frac{\mathsf{d}(\Ph_n \ast \psi_{h}) - \mathsf{d}(\Qh_m \ast \psi_{h})}
{\mathsf{d}\bigl(\frac{1}{2}\Ph_n \ast \psi_h + \frac{1}{2}\Qh_m \ast \psi_h\bigr)}(u)
\le 2.
\]
It follows that $F(\bW,\bxi)$ satisfies a bounded difference property. Indeed, replacing any single coordinate $(W_i,\xi_i)$ by an independent draw $(W'_i,\xi'_i)\sim (\frac{1}{2}\Ph_n+\frac{1}{2}\Qh_m)\times \psi$ changes the value of $F(\bW,\bxi)$ by at most $1/B$. Applying McDiarmid's inequality conditional on $\mathcal{D}_{n,m}$ therefore yields
\[
\PPst{\widehat{\dtv^{h}}(\Ph_n,\Qh_m;B)
\ge\EEst{\widehat{\dtv^{h}}(\Ph_n,\Qh_m;B)}{\mathcal{D}_{n,m}}
 -\left(\frac{\log(2/\alpha)}{2B}\right)^{1/2}}{\mathcal{D}_{n,m}}
\ge 1-\alpha/2.
\]
Consequently, since $\EEst{\widehat{\dtv^{h}}(\Ph_n,\Qh_m;B)}{\mathcal{D}_{n,m}}=\dtv^h(\Ph_n,\Qh_m)$ by construction, with probability at least $1-\alpha/2$,
\[
\dtv^{h}(\Ph_n,\Qh_m)
\le \widehat{\dtv^{h}}(\Ph_n,\Qh_m;B)
+ \left(\frac{\log(2/\alpha)}{2B}\right)^{1/2}.
\]
Combining this bound with Theorem~\ref{thm:naive_confidence_bounds}, applied at confidence level $1-\alpha/2$, and using a union bound, we obtain
\[
\PP{\dtv^{h}(P,Q)\le \hat{U}_\alpha(\mathcal{D}_{n,m},B)}\ge 1-\alpha.
\]
This establishes the validity of the upper confidence bound.

We now turn to the lower bound. We write
The corresponding Monte Carlo approximations satisfy an analogous bounded difference property (where now we have $3B$ independent samples, $\{(W_k,\xi_k)\}$ and $\{(W'_k,\xi'_k)\}$ and $\{(W''_k,\xi''_k)\}$, each corresponding to a bounded difference of at most $1/B$). Thus, by McDiarmid's inequality, we obtain that with probability at least $1-\alpha/2$,
\begin{multline*}
\dtv^h(\Ph_n,\Qh_m)
- \dtv^h(\Ph_{n/2}^{(1)},\Ph_{n/2}^{(2)})
- \dtv^h(\Qh_{m/2}^{(1)},\Qh_{m/2}^{(2)})\\
\ge \widehat{\dtv^h}(\Ph_{n/2}^{(1)},\Ph_{n/2}^{(2)};B,h)
- \widehat{\dtv^h}(\Ph_n,\Qh_m;B,h)
- \widehat{\dtv^h}(\Qh_{m/2}^{(1)},\Qh_{m/2}^{(2)};B,h)
- \left(\frac{\log(2/\alpha)}{\frac{2}{3}B}\right)^{1/2}.
\end{multline*}
Combining this bound with Theorem~\ref{thm:naive_confidence_bounds}, again applied at confidence level $1-\alpha/2$, applying a union bound yields
\[
\PP{\dtv^{h}(P,Q)\ge \hat{L}_\alpha(\mathcal{D}_{n,m},B)}\ge 1-\alpha.
\]
This completes the proof.
\end{proof}

\subsection{Proof of Theorem~\ref{thm:ucb_uniform}: distribution-free validity of confidence bounds uniformly over all bandwidths}\label{app:proof_of_uniform_confidence_bounds}
Here we state and prove a \dfucb and \dflcb on blurred TV that are valid uniformly over a range of bandwidths.
\begin{reptheorem}{thm:ucb_uniform}
Fix any $d\geq 1$, $n,m\geq 1$, $\psi\in\mathcal{K}_d$, and $\alpha\in(0,1)$. For all $h>0$, define 
\[\hat{U}^{h,\mathrm{up}}_\alpha(\mathcal{D}_{n,m})
:= \dtv^{h,\mathrm{up}}(\Ph_n,\Qh_m)
+ \epsilon_{n,m,\alpha/(n\wedge m)}+\frac{1}{n\wedge m}\]
and, for any fixed $h^*>0$, for all $h\geq h^*$ define
\begin{multline*}
\hat{L}^{h,\mathrm{lo}}_\alpha(\mathcal{D}_{n,m})
:= \max\Bigl\{\,\inf_{h'\in[h^*,h]}\bigl\{\dtv^{h'}(\Ph_n,\Qh_m)
- \dtv^{h'}(\Ph^{(1)}_{n/2},\Ph^{(2)}_{n/2})
- \dtv^{h'}(\Qh^{(1)}_{m/2},\Qh^{(2)}_{m/2})\bigr\} \\
- 3\,\epsilon_{n-1,m-1,\alpha/(n\wedge m)} - \frac{1}{n\wedge m},\,0\Bigr\}.
\end{multline*}
Then $\hat{U}^{h,\mathrm{up}}_\alpha(\mathcal{D}_{n,m})$ is a \dfucb for $\dtv^h(\cdot,\cdot)$, uniformly for all $h\in(0,\infty)$, at confidence level $1-\alpha$, and $\hat{L}^{h,\mathrm{lo}}_\alpha(\mathcal{D}_{n,m})$ is a \dflcb for $\dtv^h(\cdot,\cdot)$, uniformly for all $h\in[h^*,\infty)$, at confidence level $1-\alpha$.
\end{reptheorem}
\begin{proof}
First we consider the \dfucb. Fix any distributions $P,Q\in\mathcal{P}_d$. We can assume $\dtv(P,Q)>\frac{1}{n\wedge m}$ since otherwise the result holds trivially. Let $k\in\{1,\dots,(n\wedge m)-1\}$ be the largest integer such that $\dtv(P,Q) > \frac{k}{n\wedge m}$. For each $i=1,\dots,k$, define
\[h_i = \sup\left\{h\in[0,\infty) : \dtv^h(P,Q) \geq \frac{i}{n\wedge m}\right\}.\]
Recalling Proposition~\ref{prop:blurred_tv&tv_properties}, we can see that $h_i$ is finite and positive, and $\dtv^{h_i}(P,Q)=i/(n\wedge m)$, for each $i$.
By Theorem~\ref{thm:naive_confidence_bounds},
\[\dtv^{h_i}(P,Q)\leq \dtv^{h_i}(\Ph_n,\Qh_m)
+ \epsilon_{n,m,\alpha/(n\wedge m)}\]
holds with probability $\geq 1-\alpha/(n\wedge m)$, for each $i$. By a union bound, therefore,
\[\PP{\dtv^{h_i}(P,Q)\leq \dtv^{h_i}(\Ph_n,\Qh_m)
+ \epsilon_{n,m,\alpha/(n\wedge m)}\textnormal{ for all $i=1,\dots,k$}}\geq 1-\alpha.\]
Now suppose this event holds. We will now show that
\[\dtv^h(P,Q) \leq \hat{U}^{h,\mathrm{up}}_\alpha(\mathcal{D}_{n,m})\textnormal{ for all $h\in(0,\infty)$}.\]
Fix any $h$. If $\dtv^h(P,Q)<1/(n\wedge m)$ then this is trivial. Otherwise, let $i = \min\{k,\lfloor (n\wedge m) \cdot \dtv^h(P,Q)\rfloor\}$, so that 
\[\frac{i}{n\wedge m}\leq \dtv^h(P,Q) \leq \frac{i+1}{n\wedge m}.\]
By definition of $h_i$ we must have $h\leq h_i$, and therefore
\[\dtv^{h,\mathrm{up}}(\Ph_n,\Qh_m)\geq \dtv^{h_i}(\Ph_n,\Qh_m) \geq \dtv^{h_i}(P,Q) - \epsilon_{n,m,\alpha/(n\wedge m)},\]
by the event assumed above. But
\[\dtv^{h_i}(P,Q) = \frac{i}{n\wedge m} \geq \dtv^h(P,Q) - \frac{1}{n\wedge m},\]
by definition of $i$, which completes the proof for the \dfucb.

Next we prove the \dflcb with a similar argument.
Let $k\in\{1,\dots,(n\wedge m)-1\}$ be the largest integer such that $\dtv^{h^*}(P,Q) > \frac{k}{n\wedge m}$, or $k=0$ if $\leq \frac{1}{n\wedge m}$. For each $i=0,1,\dots,k$, we now define
\[h_i = \inf\left\{h\in[h^*,\infty) : \dtv^h(P,Q) \leq \dtv^{h^*}(P,Q) - \frac{i}{n\wedge m}\right\}.\]
Note that $h_0=h^*$. By Proposition~\ref{prop:blurred_tv&tv_properties}, we can see that $h_i$ is finite and positive, and $\dtv^{h_i}(P,Q)=\dtv^{h^*}(P,Q)-i/(n\wedge m)$, for each $i$.
By Theorem~\ref{thm:naive_confidence_bounds},
\[\dtv^{h_i}(P,Q)\geq \bigl\{\dtv^{h_i}(\Ph_n,\Qh_m)-\dtv^{h_i}(\Ph_{n/2}^{(1)},\Ph_{n/2}^{(2)})-\dtv^{h_i}(\Qh_{m/2}^{(1)},\Qh_{m/2}^{(2)})\bigr\}
-3\epsilon_{n-1,m-1,\alpha/(n\wedge m)}\]
holds with probability $\geq 1-\alpha/(n\wedge m)$, for each $i=0,1,\dots,k$. By a union bound, therefore,
\begin{multline*}\mathbb{P}\bigg\{\dtv^{h_i}(P,Q)\geq \bigl\{\dtv^{h_i}(\Ph_n,\Qh_m)-\dtv^{h_i}(\Ph_{n/2}^{(1)},\Ph_{n/2}^{(2)})-\dtv^{h_i}(\Qh_{m/2}^{(1)},\Qh_{m/2}^{(2)})\bigr\}\\ - 
3\epsilon_{n-1,m-1,\alpha/(n\wedge m)}\textnormal{ for all $i=0,\dots,k$}\bigg\}\geq 1-\alpha.\end{multline*}
Now suppose this event holds. We will now show that
\[\dtv^h(P,Q) \geq \hat{L}^{h,\mathrm{lo}}_\alpha(\mathcal{D}_{n,m})\textnormal{ for all $h\in[h^*,\infty)$}.\]
Fix any $h\geq h^*$. Then for some $i\in\{0,\dots,k\}$
\[\dtv^{h^*}(P,Q)-\frac{i+1}{n\wedge m}\leq \dtv^h(P,Q) \leq \dtv^{h^*}(P,Q)-\frac{i}{n\wedge m}.\]
By definition of $h_i$ we must have $h\geq h_i\geq h^*$, and therefore
\begin{align*}
    &\hspace{-1.5em}\inf_{h'\in[h^*,h]}\bigl\{\dtv^{h'}(\Ph_n,\Qh_m)
- \dtv^{h'}(\Ph^{(1)}_{n/2},\Ph^{(2)}_{n/2})
- \dtv^{h'}(\Qh^{(1)}_{m/2},\Qh^{(2)}_{m/2})\bigr\}\\
&\leq \dtv^{h_i}(\Ph_n,\Qh_m)
- \dtv^{h_i}(\Ph^{(1)}_{n/2},\Ph^{(2)}_{n/2})
- \dtv^{h_i}(\Qh^{(1)}_{m/2},\Qh^{(2)}_{m/2})\bigr\}\\
&\leq \dtv^{h_i}(P,Q) + 3\epsilon_{n-1,m-1,\alpha/(n\wedge m)},
\end{align*}
by the event assumed above.
But
\[\dtv^{h_i}(P,Q) = \dtv^{h^*}(P,Q)-\frac{i}{n\wedge m} \leq \dtv^h(P,Q) + \frac{1}{n\wedge m},\]
by definition of $i$, which completes the proof for the \dflcb.
\end{proof}

\subsection{Proof of Theorem~\ref{thm:ucb_bandwidth_adaptive}: bandwidth-adaptive distribution-free confidence bounds}\label{app:proof_of_bandwidth_adaptive_cb}
In this section, we state and prove the full version of Theorem~\ref{thm:ucb_bandwidth_adaptive}, to construct a bandwidth-adaptive \dfucb and \dflcb.
\begin{reptheorem}{thm:ucb_bandwidth_adaptive}
Fix any $d\geq 1$, $n,m\geq 1$, $\psi\in\mathcal{K}_d$, $h>0$, and $\alpha\in(0,1)$. Define 
\[
\hat{U}_\alpha(\mathcal{D}_{n,m},\hat\Sigma_{n,m}^{h})=\dtv^h(\Ph_n,\Qh_m)
+ \bigl(\hat\Sigma_{n,m}^{h}\bigr)^{1/2} r_{1,\alpha}
+ \frac{1}{n\wedge m}
\left(\frac{r_{1,\alpha}^2}{3}
+ 2\sqrt{5}\,r_{1,\alpha} r_{2,\alpha}\right)
\]and
\begin{multline*}
  \hat{L}_\alpha(\mathcal{D}_{n,m},\hat\Sigma_{n,m}^{h})=\dtv^h(\Ph_n,\Qh_m)
- \dtv^h(\Ph^{(1)}_{n/2},\Ph^{(2)}_{n/2})
- \dtv^h(\Qh^{(1)}_{m/2},\Qh^{(2)}_{m/2})\\
- 6\,\bigl(\hat\Sigma_{n,m}^{h}\bigr)^{1/2} r_{1,\alpha}
- \frac{1}{n\wedge m}
\left(2r_{1,\alpha}^2 + 12\sqrt{5}r_{1,\alpha}r_{2,\alpha}\right),
\end{multline*}
where
$r_{1,\alpha}=(2\log(2/\alpha))^{1/2}$ and
$r_{2,\alpha}=(\log(16/\alpha))^{1/2}$.
Then $\hat{U}_\alpha(\mathcal{D}_{n,m},\hat\Sigma_{n,m}^{h})$ is a \dfucb for $\dtv^h(\cdot,\cdot)$ at confidence level $1-\alpha$, and $\hat{L}_\alpha(\mathcal{D}_{n,m},\hat\Sigma_{n,m}^{h})$ is a \dflcb for $\dtv^h(\cdot,\cdot)$ at confidence level $1-\alpha$.
\end{reptheorem}
\begin{proof}
In order to prove the result, we introduce the variance proxy
\[\tilde\Sigma:=\frac{1}{n}\Ep{X,X'}{\omega_\psi\!\left(\frac{X-X'}{h}\right)^2}
+\frac{1}{m}\Ep{Y,Y'}{\omega_\psi\!\left(\frac{Y-Y'}{h}\right)^2},
\]
where $(X,X')\iidsim P$ and $(Y,Y')\iidsim Q$.
Applying Proposition~\ref{prop:E_inequality},
\[
    \dtv^h(P,Q)
    \leq \EE{\dtv^h(\Ph_n,\Qh_m)}
    \leq \dtv^h(\Ph_n,\Qh_m) + \tilde\Sigma^{1/2}r_{1,\alpha} + \frac{r_{1,\alpha}^2}{3(n\wedge m)},
\] where the last step holds 
with probability $\geq 1-\alpha/2$ by Lemma~\ref{lem:Bernstein} below, and moreover by Lemma~\ref{lem:Sigma_concentration} below,
\[\tilde\Sigma^{1/2}\leq \bigl(\hat\Sigma_{n,m}^{h}\bigr)^{1/2}
+\frac{2\sqrt{5}}{n\wedge m}\,r_{2,\alpha}\]
also holds with probability $\geq 1-\alpha/2$. Combining these calculations verifies the \dfucb; a similar argument holds for the \dflcb.
\end{proof}

\begin{lemma}\label{lem:Bernstein}
    In the setting of Theorem~\ref{thm:ucb_bandwidth_adaptive} (extended version), 
    \[\PP{\dtv^h(\Ph_n,\Qh_m)  \geq \EE{\dtv^h(\Ph_n,\Qh_m)}
    -\left( \tilde\Sigma^{1/2}r_{1,\alpha} + \frac{r_{1,\alpha}^2}{3(n\wedge m)}\right)}\geq1-\alpha/2\]
    and
    \begin{multline*}\mathbb{P}\Bigg\{ \dtv^h(\Ph_n,\Qh_m) - \dtv^h(\Ph^{(1)}_{n/2},\Ph^{(2)}_{n/2})
- \dtv^h(\Qh^{(1)}_{m/2},\Qh^{(2)}_{m/2}) \\\leq
\EE{\dtv^h(\Ph_n,\Qh_m)- \dtv^h(\Ph^{(1)}_{n/2},\Ph^{(2)}_{n/2})
- \dtv^h(\Qh^{(1)}_{m/2},\Qh^{(2)}_{m/2})} \\+ \left( 6\tilde\Sigma^{1/2}r_{1,\alpha} + \frac{2r_{1,\alpha}^2}{n\wedge m}\right)\Bigg\}\geq1-\alpha/2.\end{multline*}
\end{lemma}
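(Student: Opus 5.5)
Both inequalities come from a Bernstein-type bounded-difference inequality, i.e.\ McDiarmid with variance, applied to $F(\bZ)=\dtv^h(\Ph_n,\Qh_m)$ and to $G(\bZ)$ from the proof of Theorem~\ref{thm:naive_confidence_bounds}. The specific tool is the concentration inequality for functions of independent variables controlled by the Efron--Stein-type quantity
\[
V_+ = \EEst{\sum_{i=1}^{n+m}\bigl(F(\bZ)-F(\bZ^{(i)})\bigr)_+^2}{\bZ}
\]
(Boucheron--Lugosi--Massart), or the simpler bounded-differences Bernstein inequality of McDiarmid. That inequality gives, for a function whose replacement differences are bounded by $c_{\max}$ and whose summed conditional variance is at most $v$, a deviation of order $\sqrt{2v\log(1/\delta)}+\tfrac{c_{\max}}{3}\cdot 2\log(1/\delta)$. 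This matches the form $\tilde\Sigma^{1/2}r_{1,\alpha}+r_{1,\alpha}^2/(3(n\wedge m))$ with $\delta=\alpha/2$.

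\textbf{Step 1: bound the differences.} For $i\le n$, the triangle inequality gives
\[
|F(\bZ)-F(\bZ^{(i)})|\le \dtv^h(\Ph_n,\Ph_n^{(i)}).
\]
Since the two empirical measures differ only in one atom, with mass $1/n$,
\[
\dtv^h(\Ph_n,\Ph_n^{(i)}) = \tfrac1n\dtv(\delta_{X_i}\ast\psi_h,\delta_{X'_i}\ast\psi_h)=\tfrac1n\,\omega_\psi\!\bigl(\tfrac{X_i-X'_i}{h}\bigr),
\]
using translation invariance and the scaling $\dtv(\delta_x\ast\psi_h,\delta_y\ast\psi_h)=\dtv(\delta_{(x-y)/h}\ast\psi,\psi)$. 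So each difference is at most $1/n$, and hence at most $1/(n\wedge m)$. Moreover,
\[
\sum_{i\le n}\EE{(F(\bZ)-F(\bZ^{(i)}))^2}\le \frac{1}{n}\Ep{X,X'}{\omega_\psi((X-X')/h)^2}.
\]
The analogous bound holds for $Y$. The total is therefore exactly $\tilde\Sigma$, a deterministic quantity.

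\textbf{Step 2: the one-sided tail.} I will use the version of Bernstein/McDiarmid in which the variance proxy is the sum of the \emph{conditional} variances of the martingale differences $D_i=\EEst{F}{Z_{1:i}}-\EEst{F}{Z_{1:i-1}}$. This is the main technical obstacle, because the $D_i$ are not directly the replacement differences. Jensen's inequality resolves it: $D_i=\EEst{F(\bZ)-F(\bZ^{(i)})}{Z_{1:i}}$, so $|D_i|\le 1/(n\wedge m)$. For the conditional variance, $\EEst{D_i^2}{Z_{1:i-1}}\le \EEst{(F-F^{(i)})^2}{Z_{1:i-1}}$ is bounded by
\[
\tfrac1{n^2}\Ep{X,X'}{\omega_\psi((X-X')/h)^2},
\]
because $X_i,X'_i$ are fresh i.i.d.\ draws independent of $Z_{1:i-1}$. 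The predictable quadratic variation is thus bounded deterministically by $\tilde\Sigma$. Freedman's (Bernstein's) martingale inequality then gives
\[
\PP{F-\EE{F}\le -t}\le \exp\!\bigl(-t^2/(2\tilde\Sigma+2t/(3(n\wedge m)))\bigr).
\]
Solving for $t$ at level $\alpha/2$ gives $t\le \sqrt{2\tilde\Sigma\log(2/\alpha)}+\tfrac{2\log(2/\alpha)}{3(n\wedge m)}$. This equals $\tilde\Sigma^{1/2}r_{1,\alpha}+r_{1,\alpha}^2/(3(n\wedge m))$, which proves the first claim.

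\textbf{Step 3: the second inequality.} Apply the same argument to the upper tail of $G$. For $i\le\lfloor n/2\rfloor$, the difference is at most
\[
\bigl(\tfrac1n+\tfrac1{\lfloor n/2\rfloor}\bigr)\omega_\psi((X_i-X'_i)/h)\le \tfrac{3}{\lfloor n/2\rfloor}\,\omega_\psi(\cdot).
\]
For $n\ge2$ this is crudely at most $\tfrac{6}{n}\omega_\psi(\cdot)$; the second half and the $Y$ sample are handled the same way. Each squared difference then carries a factor of at most $36/n^2$, so the variance proxy is at most $36\tilde\Sigma$ and the differences are bounded by $6/(n\wedge m)$. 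Freedman's inequality gives the deviation
\[
6\tilde\Sigma^{1/2}r_{1,\alpha}+\tfrac{6\cdot 2\log(2/\alpha)}{3(n\wedge m)}=6\tilde\Sigma^{1/2}r_{1,\alpha}+\tfrac{2r_{1,\alpha}^2}{n\wedge m}.
\]
This matches the stated bound. The only care needed is the constant bookkeeping for odd $n$ and for $n=1$, where the statement is interpreted trivially.
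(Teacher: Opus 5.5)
Your proof is correct and has the same skeleton as the paper's. You bound each replacement difference by $\frac1n\omega_\psi\bigl(\frac{X_i-X_i'}{h}\bigr)$ (resp.\ $\frac1m\omega_\psi(\cdot)$), which gives range $1/(n\wedge m)$ and variance proxy $\tilde\Sigma$, and then invert a Bernstein tail at level $\alpha/2$.

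The difference is the concentration tool. The paper cites Theorem~1 of \citet{maurer2019bernstein}, whose variance proxy $\tilde\Sigma^*=\sum_i\sup_{\bZ_{-i}}\Ep{Z_i,Z_i'}{\bigl(F(\bZ)-F(\bZ^{(i)})\bigr)^2}$ is a worst case over the other coordinates, and bounds it by $\tilde\Sigma$. You instead apply Freedman's inequality to the Doob martingale. You use the identity $D_i=\EEst{F(\bZ)-F(\bZ^{(i)})}{Z_{1:i}}$ together with conditional Jensen and the independence of $(X_i,X_i')$ from $Z_{1:i-1}$. This bounds the predictable quadratic variation deterministically by $\tilde\Sigma$.

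Your route is self-contained from a classical martingale inequality, and it makes explicit why only an average over the fresh pair $(X_i,X_i')$ enters. The paper's route is a one-line citation. Because the replacement bound depends only on $(X_i,X_i')$, both arguments land on the same proxy $\tilde\Sigma$ and give identical constants.

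There is one slip in your Step~3. The inequality $\frac{3}{\lfloor n/2\rfloor}\le\frac6n$ is false for odd $n$ (e.g.\ $n=3$). Use instead $\frac1n+\frac1{\lfloor n/2\rfloor}\le\frac{3}{2\lfloor n/2\rfloor}\le\frac{3}{n-1}\le\frac6n$ for $n\ge 2$, which is the chain the paper uses.
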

\begin{proof}
To prove the first claim, let
\[
F(\bZ):=\dtv^h(\Ph_n,\Qh_m).
\]
    For $i\in[n]$, replacing $Z_i=X_i$ by an independent copy $X'_i$ yields the empirical measure $\Ph_n^{(i)}$, so that \[F(\bZ^{(i)})=\dtv^h(\Ph_n^{(i)},\Qh_m).\] By the triangle inequality,
\begin{align}\label{eqn:bounding_difference_in_F}
|F(\bZ)-F(\bZ^{(i)})|
&= \bigl|\dtv^h(\Ph_n,\Qh_m)-\dtv^h(\Ph_n^{(i)},\Qh_m)\bigr|\notag\\
&\le \dtv^h(\Ph_n,\Ph_n^{(i)})
= \frac{1}{n}\dtv(\delta_{(X_i-X'_i)/h}\ast\psi,\psi)
= \frac{1}{n}\,\omega_\psi\!\left(\frac{X_i-X'_i}{h}\right).
\end{align}
Since $\omega_\psi(\cdot)\le1$, it follows that for all $i\in[n]$,
\[
|F(\bZ)-F(\bZ^{(i)})|\le \frac{1}{n}\le\frac{1}{n\wedge m}.
\]
An identical argument gives analogous inequalities for $i\in[n+m]\setminus[n]$. Next define
\[
\tilde\Sigma^*
:=\sum_{i=1}^{n+m}\sup_{\bZ_{-i}}\,
\Ep{Z_i,Z'_i}{\bigl(F(\bZ)-F(\bZ^{(i)})\bigr)^2},
\]
where for each $i$, the supremum is taken over all possible values in the remaining entries $j\in[n+m]\backslash\{i\}$ of the random vector $\bZ$, while the expected value is taken with respect to the distribution (i.e., $Z_i,Z'_i\iidsim P$ if $i\in[n]$, and $Z_i,Z'_i\iidsim Q$ if $i\in[n+m]\backslash[n]$).
Observe that, by~\eqref{eqn:bounding_difference_in_F}, we have $\tilde\Sigma^*\leq \tilde\Sigma$, as follows:
\begin{align*}
\tilde\Sigma^*
&=\sum_{i=1}^{n+m}\sup_{\bZ}\Ep{Z_i,Z'_i}{\bigl(F(\bZ)-F(\bZ^{(i)})\bigr)^2}\\
&\le
\sum_{i=1}^n \sup_{\bZ}\Ep{X_i,X'_i}{\left(\frac{1}{n}\omega_\psi\!\left(\frac{X_i-X'_i}{h}\right)\right)^2}
+\sum_{j=1}^m \sup_{\bZ}\Ep{Y_j,Y'_j}{\left(\frac{1}{m}\omega_\psi\!\left(\frac{Y_j-Y'_j}{h}\right)\right)^2}\\
&=
\frac{1}{n}\Ep{X,X'\iidsim P}{\omega_\psi\!\left(\frac{X-X'}{h}\right)^2}
+\frac{1}{m}\Ep{Y,Y'\iidsim Q}{\omega_\psi\!\left(\frac{Y-Y'}{h}\right)^2}=\tilde\Sigma.
\end{align*}

Applying \citep[Theorem~1]{maurer2019bernstein} to the function $F(\bZ)$, we conclude that for any $\epsilon\ge0$,
\[
\PP{F(\bZ)\leq \EE{F(\bZ)}-\epsilon}
\le \exp\!\left(
-\frac{\epsilon^2}{2\bigl(\tilde\Sigma+\frac{\epsilon}{3(n\wedge m)}\bigr)}
\right).
\]
Therefore, by the definition of $r_{1,\alpha}$,
\begin{equation}\label{eqn:maurer_concentration_ineq}
\PP{
F(\bZ)\leq\EE{F(\bZ)} -
\left( \tilde\Sigma^{1/2}\,r_{1,\alpha}
+\frac{r_{1,\alpha}^2}{3(n\wedge m)}\right)}
\le \frac{\alpha}{2}.
\end{equation}

Next we turn to the second claim. Let
\[G(\bZ):=\dtv^h(\Ph_n,\Qh_m) - \dtv^h(\Ph^{(1)}_{n/2},\Ph^{(2)}_{n/2})
- \dtv^h(\Qh^{(1)}_{m/2},\Qh^{(2)}_{m/2}).\]
We may assume $n\wedge m\geq 2$ (since otherwise this part of the claim is vacuous). A similar calculation as for $F$ shows that
\[
|G(\bZ)-G(\bZ^{(i)})|
\le \frac{3}{n-1}\,\omega_\psi\!\left(\frac{X_i-X'_i}{h}\right)\leq \frac{6}{n}\,\omega_\psi\!\left(\frac{X_i-X'_i}{h}\right)\leq \frac{6}{n\wedge m}\]
for all $i\in[n]$, and similar for $i\in[n+m]\backslash[n]$.
Again applying \citep[Theorem~1]{maurer2019bernstein} we then obtain
\[\PP{
G(\bZ)\geq\EE{G(\bZ)} +
\left( 6\tilde\Sigma^{1/2}\,r_{1,\alpha}
+\frac{2r_{1,\alpha}^2}{n\wedge m}\right)}
\le \frac{\alpha}{2}.\]
\end{proof}

\begin{lemma}\label{lem:Sigma_concentration}
    In the setting of Theorem~\ref{thm:ucb_bandwidth_adaptive} (extended version), 
    \[\PP{\tilde\Sigma^{1/2} \leq \bigl(\hat\Sigma_{n,m}^{h}\bigr)^{1/2}
+\frac{2\sqrt{5}}{n\wedge m}\,r_{2,\alpha}}\geq 1-\alpha/2.\]
\end{lemma}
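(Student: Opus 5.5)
Our plan is to treat $\hat\Sigma_{n,m}^h$ as a sum of two U-statistics of order two and apply a self-bounding (or Bernstein-type) lower-tail concentration inequality to each one. Write $\hat\Sigma_{n,m}^h=\frac1n U_X+\frac1m U_Y$, where
\[U_X=\binom{n}{2}^{-1}\sum_{i<j}\omega_\psi\!\left(\tfrac{X_i-X_j}{h}\right)^2 .\]
This is a U-statistic with kernel $k(x,x')=\omega_\psi((x-x')/h)^2\in[0,1]$ and mean $\mu_X=\Ep{X,X'}{\omega_\psi((X-X')/h)^2}$. The analogous statements hold for $U_Y$ with mean $\mu_Y$. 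Then $\tilde\Sigma=\frac1n\mu_X+\frac1m\mu_Y$. It therefore suffices to show that, with probability at least $1-\alpha/4$,
\[\sqrt{\mu_X}\le\sqrt{U_X}+c\sqrt{\log(16/\alpha)/n},\]
and similarly for $Y$. After multiplying by $1/\sqrt n$ this gives $\sqrt{\mu_X/n}\le\sqrt{U_X/n}+c\,r_{2,\alpha}/n$. We then combine the two samples using $\sqrt{a+b}\le\sqrt{a'+b'}+\sqrt{(\sqrt a-\sqrt{a'})_+^2+(\sqrt b-\sqrt{b'})_+^2}$, so the errors add to at most $2c\,r_{2,\alpha}/(n\wedge m)$. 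A union bound over the two samples gives total failure probability $\alpha/2$.

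For the one-sample step we use Hoeffding's decomposition of $U_X$ as an average of $\lfloor n/2\rfloor$ i.i.d.\ blocks. Concretely, $U_X$ is the average over permutations $\sigma$ of
\[V_\sigma=\lfloor n/2\rfloor^{-1}\sum_{\ell}k(X_{\sigma(2\ell-1)},X_{\sigma(2\ell)}),\]
and each $V_\sigma$ is a mean of $\lfloor n/2\rfloor$ i.i.d.\ $[0,1]$-valued variables with mean $\mu_X$ and variance at most $\mu_X$, since $k^2\le k$. By convexity of the exponential in the Chernoff argument, the lower-tail moment generating function bound for $V_\sigma$ transfers to $U_X$. Bernstein's lower tail for nonnegative variables bounded by one then gives, with probability at least $1-\delta$,
\[\mu_X-U_X\le\sqrt{2\mu_X\log(1/\delta)/\lfloor n/2\rfloor}.\]
We solve this quadratic inequality in $\sqrt{\mu_X}$ to obtain $\sqrt{\mu_X}\le\sqrt{U_X}+\sqrt{2\log(1/\delta)/\lfloor n/2\rfloor}$. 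Finally, $\lfloor n/2\rfloor\ge n/3$ for $n\ge2$, which yields a constant of the form $\sqrt6$ times a power. We take $\delta=\alpha/16$ or a similar value; we are not sure of the exact bookkeeping that produces $2\sqrt5$ and $\log(16/\alpha)$.

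The main obstacle is getting exactly the paper's constants $2\sqrt5$ and $r_{2,\alpha}$, which depend on how the $\lfloor n/2\rfloor$ blocking, the union bound split, and the $\sqrt{a+b}$ combination are arranged. The degenerate case $n=1$ or $m=1$ needs separate care, since $\hat\Sigma$ is undefined there. In that case we would note that $\tilde\Sigma\le 1/(n\wedge m)$, so the right-hand side, with $\frac{2\sqrt5}{n\wedge m}r_{2,\alpha}\ge1$, trivially dominates. If the permutation-averaging route gives worse constants, the alternative is to apply a self-bounding-function inequality directly to $\binom n2 U_X$, viewed as a function of $X_1,\dots,X_n$ with per-coordinate changes at most $n-1$.
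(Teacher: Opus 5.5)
Your argument is correct, and it takes a genuinely different route from the paper's.

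\textbf{The paper's route.} The paper applies a ready-made empirical-Bernstein inequality for U-statistics (Peel et al., 2010, Theorem~3). It controls the order-four empirical variance estimator by $\sigma_X^2\le 2U_X$, which is why it restricts to $n,m\ge 4$. This gives the two-sided bound $|U_X-\mu_X|\le (8U_X r_{2,\alpha}^2/n)^{1/2}+10r_{2,\alpha}^2/n$ with probability $1-\alpha/4$ per sample. The $\log(16/\alpha)$ you could not trace is simply that theorem's $\log(4/\delta)$ at $\delta=\alpha/4$. The paper then bounds $\tilde\Sigma\le\hat\Sigma_{n,m}^h+\frac{4}{n\wedge m}(\hat\Sigma_{n,m}^h)^{1/2}r_{2,\alpha}+\frac{20}{(n\wedge m)^2}r_{2,\alpha}^2$ and completes the square.

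\textbf{Your route.} You use only the lower tail. Hoeffding's block representation transfers the Chernoff bound from the i.i.d.\ block averages to $U_X$. Your observation $\mathbb{E}[k^2]\le\mathbb{E}[k]$ for a $[0,1]$-valued kernel plays the role of the paper's $\sigma_X^2\le 2U_X$. It gives a sub-Gaussian lower tail with variance proxy $\mu_X/\lfloor n/2\rfloor$, and solving the quadratic yields $\sqrt{\mu_X}\le\sqrt{U_X}+(2\log(1/\delta)/\lfloor n/2\rfloor)^{1/2}$. This is self-contained, needs only $n\ge 2$, requires no variance estimator, and gives a sharper constant.

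\textbf{The constants do close.} Take $\delta=\alpha/4$ per sample and use $\lfloor n/2\rfloor\ge n/3$. Each per-sample error is then at most $\sqrt{6\log(4/\alpha)}/n$ (resp.\ $/m$). Your Minkowski-type combination yields $\tilde\Sigma^{1/2}\le(\hat\Sigma_{n,m}^h)^{1/2}+\sqrt{12\log(4/\alpha)}/(n\wedge m)$. This implies the stated bound, since $12\log(4/\alpha)\le 20\log(16/\alpha)$.

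\textbf{Two small points.} First, use the factor $\sqrt2$ from your Minkowski inequality, not the cruder ``errors add to at most $2c\,r_{2,\alpha}/(n\wedge m)$.'' With $c=\sqrt6$ the cruder version gives $2\sqrt6\,r_{2,\alpha}$, which exceeds the stated $2\sqrt5\,r_{2,\alpha}$. Second, in the degenerate case the correct bound is $\tilde\Sigma\le 1/n+1/m\le 2/(n\wedge m)$, not $1/(n\wedge m)$. This does not affect the conclusion, since $2\sqrt5\,r_{2,\alpha}\ge 2\sqrt{5\log 16}>\sqrt2$.
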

\begin{proof}
We can assume $n,m\geq 4$, since otherwise the result is vacuous.
We write
\[
\hat{\Sigma}_{n,m}^{h}:=\frac{1}{n}U_X+\frac{1}{m}U_Y,
\]
where
\[
U_X:=\frac{1}{\binom n2}\sum_{1\le i<j\le n}
\omega_\psi\!\left(\frac{X_i-X_j}{h}\right)^2,
\qquad
U_Y:=\frac{1}{\binom m2}\sum_{1\le i<j\le m}
\omega_\psi\!\left(\frac{Y_i-Y_j}{h}\right)^2.
\]
Clearly, by construction,
\[
\EE{U_X}=\Ep{X,X'}{\omega_\psi\!\left(\frac{X-X'}{h}\right)^2},
\qquad
\EE{U_Y}=\Ep{Y,Y'}{\omega_\psi\!\left(\frac{Y-Y'}{h}\right)^2},
\]
and so $\EE{\hat\Sigma^h_{n,m}} = \tilde\Sigma$.

To control the deviation of $U_X$ and $U_Y$ from their expectations, we introduce the empirical variance estimators
\begin{align*}
\sigma_X^2&:=\frac{1}{n(n-1)(n-2)(n-3)}
\sum_{\substack{1\le i,j,k,l\le n\\ i\neq j\neq k\neq l}}
\left[\omega_\psi\!\left(\frac{X_i-X_j}{h}\right)^2
-\omega_\psi\!\left(\frac{X_k-X_l}{h}\right)^2\right]^2,\\
\sigma_Y^2&:=\frac{1}{m(m-1)(m-2)(m-3)}
\sum_{\substack{1\le i,j,k,l\le m\\ i\neq j\neq k\neq l}}
\left[\omega_\psi\!\left(\frac{Y_i-Y_j}{h}\right)^2
-\omega_\psi\!\left(\frac{Y_k-Y_l}{h}\right)^2\right]^2.
\end{align*}
Since $\omega_\psi(\cdot)\in[0,1]$ and $(a-b)^2\le a+b$ for $a,b\in[0,1]$, we obtain
\begin{align*}
\sigma_X^2
&\le\frac{1}{n(n-1)(n-2)(n-3)}
\sum_{\substack{1\le i,j,k,l\le n\\ i\neq j\neq k\neq l}}
\left[\omega_\psi\!\left(\frac{X_i-X_j}{h}\right)^2
+\omega_\psi\!\left(\frac{X_k-X_l}{h}\right)^2\right]\\
&=\frac{1}{n(n-1)}\sum_{1\le i\ne j\le n}\omega_\psi\!\left(\frac{X_i-X_j}{h}\right)^2
+\frac{1}{n(n-1)}\sum_{1\le k\ne l\le n}\omega_\psi\!\left(\frac{X_k-X_l}{h}\right)^2
\;\le\;2U_X,
\end{align*}
and analogously $\sigma_Y^2\le2U_Y$.

We now apply \citep[Theorem~3]{peel2010empirical} to the order-$2$ U-statistics $U_X$ and $U_Y$. By the definition of $r_{2,\alpha}$, with probability at least $1-\alpha/4$,
\[
\bigl|U_X-\EE{U_X}\bigr|
\le\left(\frac{8U_X}{n}\,r_{2,\alpha}^2\right)^{1/2}
+\frac{10}{n}\,r_{2,\alpha}^2,
\]
and similarly, with probability at least $1-\alpha/4$,
\[
\bigl|U_Y-\EE{U_Y}\bigr|
\le
\left(\frac{8U_Y}{m}\,r_{2,\alpha}^2\right)^{1/2}
+\frac{10}{m}\,r_{2,\alpha}^2.
\]
By a union bound, it follows that with probability at least $1-\alpha/2$,
\begin{align*}
\tilde\Sigma
&\le \frac{1}{n}\EE{U_X}+\frac{1}{m}\EE{U_Y}\\
&\le \hat\Sigma_{n,m}^{h}
+\left(\frac{8U_X}{n^3}\right)^{1/2} r_{2,\alpha}
+\left(\frac{8U_Y}{m^3}\right)^{1/2} r_{2,\alpha}
+\frac{20}{(n\wedge m)^2}\,r_{2,\alpha}^2.
\end{align*}
Using $n,m\ge n\wedge m$, we further obtain
\[
\tilde\Sigma
\le
\hat\Sigma_{n,m}^{h}
+\frac{4}{n\wedge m}\,\bigl(\hat\Sigma_{n,m}^{h}\bigr)^{1/2}\,r_{2,\alpha}
+\frac{20}{(n\wedge m)^2}\,r_{2,\alpha}^2.
\]
Consequently, with probability at least $1-\alpha/2$,
\[
\tilde\Sigma^{1/2}
= \sqrt{\hat\Sigma_{n,m}^{h}
+\frac{4}{n\wedge m}\,\bigl(\hat\Sigma_{n,m}^{h}\bigr)^{1/2}\,r_{2,\alpha}
+\frac{20}{(n\wedge m)^2}\,r_{2,\alpha}^2}\leq
\bigl(\hat\Sigma_{n,m}^{h}\bigr)^{1/2}
+\frac{2\sqrt{5}}{n\wedge m}\,r_{2,\alpha}.
\]

\end{proof}

\section{Proofs of results from Section~\ref{sec:highD_blurred_TV}}

\subsection{Proof of Theorem~\ref{thm:hardness_lowdim_highdim}: the curse of dimensionality}

Without loss of generality, assume $n\leq m$. The key tool for the proof, across both regimes for dimension $d$, is the following lemma:
\begin{lemma}\label{lem:apply_sample_resample}
    Fix any $d\geq 1$. Let $\psi$ be the Gaussian kernel~\eqref{kernel:gaussian}, and let $\hat{U}_\alpha$ be any DF-UCB on the blurred TV distance $\dtv^h$, for some $h>0$. Fix any distributions $P,Q$ on $\R^d$, and suppose that for some sample size $N\geq 1$ and some $\epsilon,\delta'\in(0,1)$ that
    \[\PP{\dtv^h(\Ph_N,Q) \geq 1-\epsilon} \geq 1-\delta',\]
    where $\Ph_N$ is the empirical distribution of $N$ i.i.d.\ draws from $P$. Then, for any $n,m\geq 1$,
    \[\PP{\hat{U}_\alpha(\mathcal{D}_{n,m},\zeta) \geq 1-\epsilon} \geq 1-\alpha - \delta' - \frac{n(n-1)}{2N},\]
    where the probability is calculated with respect to a data set $\mathcal{D}_{n,m}$ comprised of $X_1,\dots,X_n\iidsim P$ and $Y_1,\dots,Y_m\iidsim Q$.
\end{lemma}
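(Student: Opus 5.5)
The plan is to mimic the sample--resample argument from the proof of Theorem~\ref{thm:hardness}, with blurred TV replacing TV and the event ``$\dtv^h(\Ph_N,Q)\geq 1-\epsilon$'' replacing the almost-sure event of disjoint atoms.

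\textbf{Step 1 (conditioning on the auxiliary sample).} Draw an auxiliary sample $\tilde X_1,\dots,\tilde X_N\iidsim P$ with empirical distribution $\Ph_N$. Conditionally on this sample, $\Ph_N$ is a fixed distribution on $\R^d$. Validity of $\hat U_\alpha$ in Definition~\ref{def:DF_confidence_bounds}, applied to the pair $(\Ph_N,Q)$, gives
\[\Ppst{(\Ph_N)^n\times Q^m\times\textnormal{Unif}[0,1]}{\hat U_\alpha \geq \dtv^h(\Ph_N,Q)}{\tilde X_1,\dots,\tilde X_N}\geq 1-\alpha.\]
On the event $E=\{\dtv^h(\Ph_N,Q)\geq 1-\epsilon\}$ this implies that $\hat U_\alpha\geq 1-\epsilon$ with conditional probability at least $1-\alpha$. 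Equivalently, the same conditional probability is at least $(1-\alpha)\One{E}$.

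\textbf{Step 2 (resampling representation).} Realize i.i.d.\ draws from $\Ph_N$ as $\tilde X_{i_1},\dots,\tilde X_{i_n}$ with $i_1,\dots,i_n\iidsim\textnormal{Unif}([N])$. Marginalizing over the auxiliary sample and using $\PP{E}\geq 1-\delta'$ gives
\[\PP{\hat U_\alpha\bigl((\tilde X_{i_1},\dots,\tilde X_{i_n}),(Y_1,\dots,Y_m),\zeta\bigr)\geq 1-\epsilon}\geq (1-\alpha)(1-\delta')\geq 1-\alpha-\delta'.\]

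\textbf{Step 3 (transfer to genuine i.i.d.\ data).} The vector $(\tilde X_{i_1},\dots,\tilde X_{i_n})$ is obtained by sampling with replacement from $N$ i.i.d.\ draws of $P$. It therefore lies within total variation distance $\frac{n(n-1)}{2N}$ of $P^n$, by the same cited lemma used in Theorem~\ref{thm:hardness}. A short justification: conditional on all indices $i_1,\dots,i_n$ being distinct, the resampled vector is exactly distributed as $P^n$, and the probability of any collision is at most $\binom n2/N$. The $Y$'s and $\zeta$ are independent of the $X$-part and have the same distribution in both experiments. So the joint laws of the inputs to $\hat U_\alpha$ differ by at most $\frac{n(n-1)}{2N}$ in TV, and so does the probability of the event $\{\hat U_\alpha\geq 1-\epsilon\}$. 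Subtracting this gives the claim.

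\textbf{Main obstacle.} The only delicate point is measurability and conditioning in Step 1: one must apply the DF-UCB guarantee to a data-dependent (random) distribution $\Ph_N$. This is handled exactly as in Theorem~\ref{thm:hardness}, because the guarantee holds for every fixed distribution, hence pointwise in the realization of the auxiliary sample. Nothing specific to the Gaussian kernel is needed for this lemma; the Gaussian structure enters only when verifying the hypothesis $\PP{\dtv^h(\Ph_N,Q)\geq1-\epsilon}\geq1-\delta'$ in the two dimension regimes.
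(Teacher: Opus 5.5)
Your proposal is correct and follows essentially the same route as the paper. You apply the DF-UCB guarantee to the fixed pair $(\Ph_N,Q)$ conditionally on the auxiliary sample, marginalize using the hypothesis on $\dtv^h(\Ph_N,Q)$, and then pay $\frac{n(n-1)}{2N}$ in total variation to replace the resampled data with genuine i.i.d.\ draws from $P$. The only differences are cosmetic: you lower-bound the conditional probability by $(1-\alpha)\One{E}$, giving the slightly tighter $(1-\alpha)(1-\delta')$, where the paper uses $1-\alpha-\One{E^c}$; and you justify the $\frac{n(n-1)}{2N}$ bound directly via the collision argument, where the paper cites an existing sampling-with-replacement lemma.
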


We first complete the proof using this lemma, and then prove the lemma in Appendix~\ref{app:lemma_proof_curse_dimensionality}.
First, let $\delta' = \delta/2$ and define
\[N = \left\lceil \frac{n(n-1)}{\delta}\right\rceil.\]
Applying Lemma~\ref{lem:apply_sample_resample}, for any distributions $P,Q$ we see that
\[\textnormal{If }\PP{\dtv^h(\Ph_N,Q) \geq 1-\epsilon} \geq 1-\delta/2, \textnormal{\ then \ } \PP{\hat{U}_\alpha(\mathcal{D}_{n,m},\zeta) \geq 1-\epsilon} \geq 1-\alpha - \delta.\]
Consequently, it suffices to verify that, if $h$ satisfies the assumed upper bound~\eqref{eqn:h_upper_bound_prop:hardness_lowdim_highdim}, then
\begin{multline}\label{eqn:show_N_prop:hardness_lowdim_highdim}
    \textnormal{For any $P,Q\in\mathcal{P}_d(C)$, and for $N = \left\lceil \frac{n(n-1)}{\delta}\right\rceil$,}\\ \textnormal{it holds that $\PP{\dtv^h(\Ph_N,Q) \geq 1-\epsilon} \geq 1-\delta/2$}.
\end{multline}
We will verify this separately for the two cases of~\eqref{eqn:h_upper_bound_prop:hardness_lowdim_highdim} (and will define the constants $c_0,c_1,c_2$ along the way).

\subsubsection{Verifying~\eqref{eqn:show_N_prop:hardness_lowdim_highdim} for Case 1}
Let $\Ph_N=\frac{1}{N}\sum_{i=1}^N\delta_{X_i}$ be the empirical distribution of data points $X_1,\dots,X_N$, and define
\[t_{d,\epsilon} = F^{-1}_{\chi_d}(1-\epsilon/2),\]
where $F_{\chi_d}$ is the CDF of the $\chi_d$ distribution. We  consider the set
\[A = \cup_{i\in[n]} \mathbb{B}_d(X_i,h\cdot t_{d,\epsilon}),\]
where $\mathbb{B}_d(x,r)$ is the ball with center $x$ and radius $r$. Then, writing $\xi\sim \psi$, by definition of $\Ph_N\ast\psi_h$, we have
\begin{align*}
    (\Ph_N\ast\psi_h)(A)
    &=\frac{1}{N}\sum_{i=1}^N \PP{X_i + h\xi \in A}\\
    &\geq \frac{1}{N}\sum_{i=1}^N \PP{X_i + h\xi \in \mathbb{B}_d(X_i,h\cdot t_{d,\epsilon})}\\
    &= \frac{1}{N}\sum_{i=1}^N \PP{\|h\xi\|_2\leq h\cdot t_{d,\epsilon}}=1-\epsilon/2,
\end{align*}
where the last step holds since $\|\xi\|_2\sim\chi_d$. On the other hand, by definition of $Q\ast\psi_h$, for $Y\sim Q$ and $\xi\sim\psi$ with $Y\independent\xi$, we have
\begin{align*}
    (Q\ast\psi_h)(A)
    &=\PP{Y + h\xi \in A}\\
    &\leq \sum_{i=1}^N \PP{Y + h\xi \in \mathbb{B}_d(X_i,h\cdot t_{d,\epsilon})}\\
    &\leq \sum_{i=1}^N \textnormal{Vol}(\mathbb{B}_d(X_i,h\cdot t_{d,\epsilon})) \cdot C(\textnormal{Vol}(\mathbb{B}_d))^{-1}=CN(ht_{d,\epsilon})^d,
\end{align*}
where the next-to-last step holds since $Q$ has a density satisfying $\sup_y |q(y)|\leq C(\textnormal{Vol}(\mathbb{B}_d))^{-1}$, and convolution cannot increase the maximum value of a density. Therefore, it follows that
\[\dtv^h(\Ph_N,Q)\geq (\Ph_N\ast\psi_h)(A)-(Q\ast\psi_h)(A)\geq 1-\epsilon/2 - CN(ht_{d,\epsilon})^d,\]
for any data points $X_1,\dots,X_N\in\mathbb{B}_d$.
Consequently, if
\[h\leq  \frac{1}{t_{d,\epsilon}}\left(\frac{\epsilon}{2CN}\right)^{1/d}\]
then we have proved that \[\dtv^h(\Ph_N,Q)\geq 1-\epsilon,\]
almost surely, which verifies~\eqref{eqn:show_N_prop:hardness_lowdim_highdim}. 

Now by definition of $N$, we have $N\leq n^2/\delta$.
Since $d\geq 1$, we also have $\left(\frac{\epsilon\delta}{2C}\right)^{1/d}\geq \frac{\epsilon\delta}{2C}$. Moreover, the $\chi_d$ distribution is $1$-subgaussian and has mean $\leq\sqrt{d}$. Hence, it holds that $t_{d,\epsilon}\leq c_\epsilon\sqrt{d}$ where $c_\epsilon$ depends only on $\epsilon$. Together, it suffices to have
\[h\leq \frac{\epsilon\delta}{2Cc_\epsilon\sqrt{d}}\cdot n^{-2/d}.\]
Thus as long as we choose $c_0 \leq \frac{\epsilon\delta}{2Cc_\epsilon}$, we have verified the result if the first case of~\eqref{eqn:h_upper_bound_prop:hardness_lowdim_highdim} holds. Note that this holds for any $d\geq 1$---that is, regardless of the choice of $c_1$.

\subsubsection{Verifying~\eqref{eqn:show_N_prop:hardness_lowdim_highdim} for Case 2}
First let $X_1,\dots,X_N\in\R^d$ be a \emph{fixed} set of data points and let $\Ph_N=\frac{1}{N}\sum_{i=1}^N\delta_{X_i}$ denote the empirical distribution. Define a set
    \[A = \left\{x\in\R^d : \max_{i\in[N]} X_i^\top x \geq 0.5\right\}.\]
    First, we calculate
    \begin{align*}
        (\Ph_N\ast\psi_h)(A)
        &=\frac{1}{N}\sum_{i=1}^N \PP{X_i + h\xi \in A}\textnormal{ where $\xi\sim \psi$}\\
        &= \frac{1}{N}\sum_{i=1}^N \PP{\max_{j\in[N]}X_j^\top(X_i + h\xi)\geq 0.5}\\
        &\geq \frac{1}{N}\sum_{i=1}^N \PP{X_i^\top(X_i + h\xi)\geq 0.5}\\
        &\geq \frac{1}{N}\sum_{i=1}^N \One{\|X_i\|^2_2 \geq 0.75} - \frac{1}{N}\sum_{i=1}^N \PP{h\cdot X_i^\top \xi \leq -0.25}\\
         &\geq \frac{1}{N}\sum_{i=1}^N \One{\|X_i\|^2_2 \geq 0.75} - e^{-(0.25/h)^2/2},
    \end{align*}
    where for the last step, we use the fact that $h\cdot X_i^\top \xi \sim\mathcal{N}(0,h^2\|X_i\|^2_2)$, and $\|X_i\|_2\leq 1$, so $\PP{h\cdot X_i^\top \xi \leq -0.25} \leq \Phi(-0.25/h)\leq e^{-(0.25/h)^2/2}$, where $\Phi$ is the CDF of the standard normal distribution. Choosing $h \leq \frac{1}{\sqrt{32\log(4/\epsilon)}}$, then,
    \[(\Ph_N\ast\psi_h)(A) \geq \frac{1}{N}\sum_{i=1}^N \One{\|X_i\|^2_2 \geq 0.75} - \epsilon/4.\]

    On the other hand,
    \begin{align*}
        (Q\ast\psi_h)(A)
        &= \PP{Y + h\xi \in A}\textnormal{ for $\xi\sim\psi$, $Y\sim Q$ with $Y\independent \xi$}\\
        &= \PP{\max_{j\in[N]}X_j^\top(Y + h\xi)\geq 0.5}\\
        &\leq \PP{\max_{j\in[N]}X_j^\top Y \geq 0.25} + \PP{\max_{j\in[N]}h\cdot X_j^\top \xi \geq 0.25}\\
        &\leq \PP{\max_{j\in[N]}X_j^\top Y \geq 0.25} + N e^{-(0.25/h)^2/2},
    \end{align*}
    since $h\cdot X_j^\top \xi\sim \mathcal{N}(0,h^2\|X_j\|^2_2)$ and $\|X_j\|_2\leq 1$.
    Next, since $Q$ has density $q$ with $\sup_y |q(y)|\leq C(\textnormal{Vol}(\mathbb{B}_d))^{-1}$,
    \begin{align*}
        \PP{\max_{j\in[N]}X_j^\top Y \geq 0.25} 
        &=\int_{y\in\mathbb{B}_d} \One{\max_{j\in[N]}X_j^\top y\geq 0.25}\cdot q(y)\;\mathsf{d}y\\
        &\leq C \cdot \int_{y\in\R^d} \One{\max_{j\in[N]}X_j^\top y\geq 0.25}\cdot \frac{\One{\|y\|_2\leq 1}}{\textnormal{Vol}(\mathbb{B}_d)}\;\mathsf{d}y\\
        &= C \cdot \PP{\max_{j\in[N]}X_j^\top V\geq 0.25}\textnormal{ for $V\sim\textnormal{Unif}(\mathbb{B}_d)$}\\
        &\leq C \cdot \PP{\max_{j\in[N]}X_j^\top U\geq 0.25}\textnormal{ for $U\sim\textnormal{Unif}(\mathbb{S}_{d-1})$}\\
         &\leq C \cdot \PP{\max_{j\in[N]}\left(\frac{X_j}{\|X_j\|_2}\right)^\top U\geq 0.25}\textnormal{ since $\|X_j\|_2\leq 1$ for all $j$}\\
        &\leq CN \cdot 2e^{-(0.25)^2d/2},
    \end{align*}
    where the last step holds since, for any unit vector $v$, the random variable $v^\top U$ is $\frac{1}{d}$-subgaussian \citep[Theorem 3.4.5]{vershynin2018high}.
    Combining everything,
    \[(Q\ast\psi_h)(A) \leq CN \cdot 2e^{-(0.25)^2d/2} + N e^{-(0.25/h)^2/2}\leq\epsilon/2,\]
    where for the last step we assume $d\geq 32\log(8CN/\epsilon)$ and $h\leq \frac{1}{\sqrt{32\log(4N/\epsilon)}}$, so that each term is $\leq\epsilon/4$.
    Therefore,
    \[\dtv^h(\Ph_N,Q) \geq (\Ph_N\ast\psi_h)(A)-(Q\ast\psi_h)(A)\geq  \frac{1}{N}\sum_{i=1}^N \One{\|X_i\|^2_2 \geq 0.75} - \epsilon/4 - \epsilon/2.\]
    Consequently, 
    \[\frac{1}{N}\sum_{i=1}^N\One{\|X_i\|^2_2\geq 0.75} \geq 1-\epsilon/4 \ \Longrightarrow \ \dtv^h(\Ph_N,Q) \geq 1-\epsilon.\]

Finally, by the bounded density assumption on $P$, we have
\begin{multline*}\Pp{P}{\|X\|^2_2 < 0.75} = \int_{x\in\mathbb{B}_d} \One{\|x\|_2< \sqrt{0.75}}\cdot p(x)\;\mathsf{d}x\\\leq 
\int_{x\in\mathbb{B}_d} \One{\|x\|_2< \sqrt{0.75}}\cdot \frac{C}{{\textnormal{Vol}(\mathbb{B}_d)}}\;\mathsf{d}x = C\cdot 0.75^{d/2}.\end{multline*}
Therefore, 
\[\sum_{i=1}^N\One{\|X_i\|^2_2<0.75} \sim\textnormal{Binomial}(N,\rho)\textnormal{ for some $\rho\leq C\cdot 0.75^{d/2}\leq\epsilon/8$},\]
where the last inequality holds if we assume $d\geq \frac{\log(8C/\epsilon)}{0.5\log(4/3)}$.
Consequently
by Hoeffding's inequality, 
\begin{multline*}\PP{\frac{1}{N}\sum_{i=1}^N\One{\|X_i\|^2_2\geq 0.75} \geq 1- \epsilon/4} \\= 1-\PP{\frac{1}{N}\sum_{i=1}^N\One{\|X_i\|^2_2<0.75} > \epsilon/4} \geq 1- e^{-2N(\epsilon/8)^2}.\end{multline*}
 Assuming also that $N\geq \frac{32\log(2/\delta)}{\epsilon^2}$, then,
\[\PP{\frac{1}{N}\sum_{i=1}^N\One{\|X_i\|_2\geq 0.75} \geq 1- \epsilon/4} \geq 1-\delta/2,\]
which completes the proof. To gather our assumptions, we have required that:
\begin{align*}d&\geq \max\left\{32\log(8CN/\epsilon),\frac{\log(8C/\epsilon)}{0.5\log(4/3)}\right\},\\ h&\leq \min\left\{\frac{1}{\sqrt{32\log(4N/\epsilon)}},\frac{1}{\sqrt{32\log(4/\epsilon)}}\right\},\\ N&\geq \frac{32\log(2/\delta)}{\epsilon^2}.\end{align*}
By definition, we have $N\leq n^2/\delta$, and so the conditions on $d$ and $h$ can be relaxed to
\[d\geq \max\left\{32\log(8Cn^2/\epsilon\delta) , \frac{\log(8C/\epsilon)}{0.5\log(4/3)}\right\},\quad h\leq \frac{1}{\sqrt{32\log(4n^2/\epsilon\delta)}}.\]
Assuming that $n \geq \frac{8C}{\epsilon\delta}$ this can be further relaxed to
\[d\geq \max\left\{32\log(n^3),\frac{\log n}{0.5\log(4/3)}\right\},\quad h\leq \frac{1}{\sqrt{32\log(n^3)}}.\]
As long as $c_1 \geq \max\{96,\frac{1}{0.5\log(4/3)}\}$ and $c_0\leq \frac{1}{\sqrt{96}}$, then, these assumptions hold when we assume $d\geq c_1\log n$ and $h\leq c_0/\sqrt{\log n}$. Finally, we have been assuming that 
\[n \geq \frac{8C}{\epsilon\delta}, \quad N\geq \frac{32\log(2/\delta)}{\epsilon^2}.\]
By definition of $N = \left\lceil\frac{n(n-1)}{\delta}\right\rceil$ this can be relaxed to
\[n\geq \max\left\{\frac{8C}{\epsilon\delta} , \sqrt{\frac{64\delta\log(2/\delta)}{\epsilon^2}}\right\},\]
which yields our choice of $c_2$.
\subsubsection{Proof of lemma}\label{app:lemma_proof_curse_dimensionality}

\begin{proof}\textbf{of Lemma~\ref{lem:apply_sample_resample}.}
    First let $X^{(1)},\dots,X^{(N)}\in\R^d$ be a \emph{fixed} set of data points and let $\Ph_N=\frac{1}{N}\sum_{i=1}^N\delta_{X^{(i)}}$ denote the empirical distribution. Then, since $\hat{U}_\alpha$ is a DF-UCB, it must hold that
    \[\Pp{\Ph_N,Q}{\hat{U}_\alpha(\mathcal{D}_{n,m},\zeta) \geq \dtv^h(\Ph_N,Q)} \geq 1-\alpha,\]
    where the notation $\Pp{\Ph_N,Q}{\hdots}$ means that probability is calculated with respect to a dataset $\mathcal{D}_{n,m}$ comprised of $X_1,\dots,X_n\iidsim\Ph_N$ and $Y_1,\dots,Y_m\iidsim Q$. Therefore,
    \[\Pp{\Ph_N,Q}{\hat{U}_\alpha(\mathcal{D}_{n,m},\zeta) \geq 1-\epsilon} \geq 1-\alpha - \One{\dtv^h(\Ph_N,Q)<1-\epsilon}.\]

    Next let $X^{(1)},\dots,X^{(N)}\iidsim P$. Then, after conditioning on these data points (so that their empirical distribution $\Ph_N$ is now fixed),
    \[\Ppst{\Ph_N,Q}{\hat{U}_\alpha(\mathcal{D}_{n,m},\zeta) \geq 1-\epsilon}{X^{(1)},\dots,X^{(N)}} \geq 1-\alpha - \One{\dtv^h(\Ph_N,Q)<1-\epsilon},\]
    and consequently after marginalizing over the draw of $X^{(1)},\ldots, X^{(N)}$,
    \begin{multline*}\EE{\Ppst{\Ph_N,Q}{\hat{U}_\alpha(\mathcal{D}_{n,m},\zeta) \geq 1-\epsilon}{X^{(1)},\dots,X^{(N)}}}\\ \geq 1-\alpha - \PP{\dtv^h(\Ph_N,Q)<1-\epsilon}\geq 1-\alpha-\delta'.\end{multline*}
    Finally, by \citet[Lemma 4.16]{angelopoulos2024theoretical}, the total variation distance between sampling $X_1,\dots,X_n\iidsim P$, versus sampling $\Ph_N$ and then sampling $X_1,\dots,X_n\iidsim \Ph_N$, is at most $\frac{n(n-1)}{2N}$. Therefore,
    \begin{multline*}\Pp{P,Q}{\hat{U}_\alpha(\mathcal{D}_{n,m},\zeta) \geq 1-\epsilon} 
   \\ \geq \EE{\Ppst{\Ph_N,Q}{\hat{U}_\alpha(\mathcal{D}_{n,m},\zeta) \geq 1-\epsilon}{X^{(1)},\dots,X^{(N)}}} - \frac{n(n-1)}{2N},\end{multline*}
    which completes the proof.
\end{proof}

\subsection{Proof of Theorem~\ref{thm:blurred_tv_depends_on_eff_dimension}: distributions supported on a low-dimensional subspace}
Fix $h>0$ and an orthonormal matrix $A\in\R^{d\times k}$. Let $B\in\R^{d\times (d-k)}$ be an orthonormal matrix spanning the orthogonal complement of the span of $A$.
Let $\xi\sim\psi_d=\mathcal{N}(0,I_d)$, and note that $A^\top\xi\sim\psi_k=\mathcal{N}(0,I_d)$. Then, for $X\sim P$ drawn independently of $\xi$,
\[A^\top (X+h\xi) = A^\top X + h \cdot A^\top \xi \sim (A\circ P)\ast (\psi_k)_h.\]
And, 
\[B^\top (X+h\xi) = B^\top X + h\cdot B^\top \xi = h\cdot B^\top \xi\sim (\psi_{d-k})_h,\]
by our assumption on the support of $P$. Moreover, by properties of the Gaussian distribution, $A^\top\xi\independent B^\top \xi$, and thus $A^\top (X+h\xi)\independent B^\top (X+h\xi)$.
Therefore, 
\[\big(A^\top (X+h\xi),B^\top (X+h\xi)\big)\sim \big((A\circ P)\ast (\psi_k)_h \big) \times (\psi_{d-k})_h.\]
A similar calculation holds with $Y$ in place of $X$. Therefore,
\begin{align*}
    \dtv^{h,\psi_d}(P,Q)
    &=\dtv(X+h\xi, Y+h\xi)\\
    &=\dtv\left( (A \; B)^\top (X+h\xi) , (A \; B)^\top (Y+h\xi)\right)\textnormal{ since $(A\; B)\in\R^{d\times d}$ is invertible}\\
    &=\dtv\left( \big((A\circ P)\ast (\psi_k)_h \big) \times (\psi_{d-k})_h, \big((A\circ Q)\ast (\psi_k)_h \big) \times (\psi_{d-k})_h\right)\\
    &=\dtv((A\circ P)\ast (\psi_k)_h,(A\circ Q)\ast (\psi_k)_h)\\
    &=\dtv^h(A\circ P, A\circ Q).
\end{align*}
\hfill$\square$

\section{Additional results and extensions}\label{app:additional_results}

\subsection{Convergence of the empirical distribution in blurred TV}
In Theorem~\ref{thm:blurred_TV_convergence_asymptotic}, we showed that $\dtv^h(\Ph_n,P)$ vanishes as $n\to0$. In this section, we give a more quantitative version of this result.

\begin{theorem}\label{thm:blurred_TV_convergence_with_rate}
Let $P\in\mathcal{P}_d$ and $\psi\in\mathcal{K}_d$. Assume $\psi$ is a bounded density, with $\sup_{x\in\R^d}\psi(x)\leq B$. Then for any bandwidth $h>0$,
\[\EE{\dtv^h(\Ph_n,P)}\leq \inf_{A\subseteq\R^d}\left\{\frac{B\cdot \textnormal{Vol}(A)}{h^d}\cdot n^{-1/2} + (P\ast\psi_h)(A^c)\right\},\]
where the infimum is taken over all measurable subsets $A\subseteq\R^d$.
\end{theorem}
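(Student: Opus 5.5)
Fix any measurable $A\subseteq\R^d$; the bound then follows by taking the infimum over $A$. I will write the blurred TV as an $L^1$ distance between densities. Both $\Ph_n\ast\psi_h$ and $P\ast\psi_h$ have densities
\[
\hat p_h(y)=\frac1n\sum_{i=1}^n \psi_h(y-X_i),\qquad p_h(y)=\Ep{X\sim P}{\psi_h(y-X)},
\]
so that $\dtv^h(\Ph_n,P)=\frac12\int_{\R^d}|\hat p_h(y)-p_h(y)|\,\mathsf{d}y$. I will split this integral over $A$ and $A^c$.

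On $A^c$, the triangle inequality gives $\frac12\int_{A^c}|\hat p_h-p_h|\le \frac12\int_{A^c}\hat p_h+\frac12\int_{A^c}p_h$. As in the proof of Proposition~\ref{prop:E_inequality}, $\EE{\hat p_h(y)}=p_h(y)$ pointwise. By Tonelli, the expectation of the $A^c$ part is therefore at most $(P\ast\psi_h)(A^c)$.

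On $A$, Tonelli and Jensen (Cauchy--Schwarz) give
\[
\EE{\int_A|\hat p_h-p_h|}=\int_A\EE{|\hat p_h(y)-p_h(y)|}\,\mathsf{d}y\le\int_A\sqrt{\mathrm{Var}(\hat p_h(y))}\,\mathsf{d}y .
\]
Since $\hat p_h(y)$ is an average of $n$ i.i.d.\ terms $\psi_h(y-X_i)$ with $0\le\psi_h\le B h^{-d}$, each term has variance at most $\EE{\psi_h(y-X)^2}\le Bh^{-d}\,\EE{\psi_h(y-X)}=Bh^{-d}p_h(y)\le B^2h^{-2d}$. Hence $\sqrt{\mathrm{Var}(\hat p_h(y))}\le B h^{-d} n^{-1/2}$, and the $A$ part contributes at most $\frac12\cdot Bh^{-d}\textnormal{Vol}(A)\,n^{-1/2}$.

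Adding the two pieces gives
\[
\EE{\dtv^h(\Ph_n,P)}\le \frac{B\,\textnormal{Vol}(A)}{2h^d}n^{-1/2}+(P\ast\psi_h)(A^c),
\]
which is at most the stated bound for this $A$, and taking the infimum over $A$ finishes the proof. The only delicate points are the measurability and Tonelli interchanges, which hold because all integrands are nonnegative and jointly measurable, and the case $\textnormal{Vol}(A)=\infty$, where the bound is trivial. A sharper intermediate bound, $\sqrt{Bh^{-d}p_h(y)/n}$, is also available, but the crude uniform bound already suffices for the statement, so I do not expect any step to be a real obstacle.
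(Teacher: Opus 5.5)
Your proof is correct and follows essentially the same route as the paper. You split the density-difference integral over $A$ and $A^c$, control the $A^c$ part by the mass $(P\ast\psi_h)(A^c)$, and on $A$ use Tonelli and Jensen to bound the pointwise deviation by $\sqrt{\mathrm{Var}(\hat p_h(y))}\le Bh^{-d}n^{-1/2}$. The only difference is that the paper writes $\dtv$ as the one-sided integral $\int (p_h-\hat p_h)_+$ and bounds the $A^c$ part almost surely by $\int_{A^c}p_h$, whereas your symmetric form $\frac12\int|\hat p_h-p_h|$ handles $A^c$ in expectation via unbiasedness and gains a harmless factor $\frac12$ on the first term.
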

In particular, if $P\ast\psi_h$ satisfies a moment bound, $\EE{\|X+h\xi\|_2^q}^{1/q}\leq M_q$ for some $q>0$, then choosing $A=\mathbb{B}_d(0,r)$, we have $(P\ast\psi_h)(A^c) = \PP{\|X+h\xi\|_2>r}\leq \frac{M_q^q}{r^q}$ by Markov's inequality. Therefore
\[\EE{\dtv^h(\Ph_n,P)}\leq \frac{B\cdot \textnormal{Vol}(\mathbb{B}_d) \cdot r^d}{h^d}\cdot n^{-1/2} + \frac{M_q^q}{r^q},\]
where $\mathbb{B}_d$ is the unit ball. Choosing
$r = \left(\frac{M_q^qh^dn^{1/2}}{B\cdot\textnormal{Vol}(\mathbb{B}_d)}\right)^{1/(d+q)}$,
then,
\[\EE{\dtv^h(\Ph_n,P)}\leq n^{-\frac{q}{2(d+q)}} \cdot 2M_q^{\frac{dq}{d+q}}\left(\frac{B\cdot \textnormal{Vol}(\mathbb{B}_d)}{h^d}\right)^{\frac{q}{d+q}}.\]
\begin{proof}
    We will use the following facts: the density of $P\ast\psi_h$ is given by $z\mapsto \Ep{X\sim P}{\psi_h(z-X)}$, and, for any distributions $P,Q$ with densities $f,g$ we can calculate $\dtv(P,Q) = \int (f(z)-g(z))_+\;\mathsf{d}z$.

    We then have
    \begin{multline*}
\Ep{X_1,\ldots,X_n\iidsim P}{\dtv^h(\Ph_n,P)}
=\Ep{X_1,\ldots,X_n\iidsim P}{\dtv(\Ph_n\ast\psi_h,P\ast\psi_h)}\\
=\Ep{X_1,\ldots,X_n\iidsim P}{\int
\bigl(\Ep{X\sim P}{\psi_h(z-X)}-\Ep{X\sim \Ph_n}{\psi_h(z-X)}\bigr)_+\,\mathsf{d}z}.
\end{multline*}
Now fix a subset $A\subseteq\R^d$. For any $z\in \R^d$, we have
\[\bigl(\Ep{X\sim P}{\psi_h(z-X)}-\Ep{X\sim \Ph_n}{\psi_h(z-X)}\bigr)_+ \leq \Ep{X\sim P}{\psi_h(z-X)},\]
and therefore
\begin{multline*}\int_{A^c}
\bigl(\Ep{X\sim P}{\psi_h(z-X)}-\Ep{X\sim \Ph_n}{\psi_h(z-X)}\bigr)_+\,\mathsf{d}z\\\leq \int_{A^c}\Ep{X\sim P}{\psi_h(z-X)}\;\mathsf{d}z = (P\ast\psi_h)(A^c),\end{multline*}
almost surely.
This yields
\begin{align*}
    &\Ep{X_1,\ldots,X_n\iidsim P}{\dtv^h(\Ph_n,P)}\\
    &\leq (P\ast\psi_h)(A^c) + \Ep{X_1,\ldots,X_n\iidsim P}{\int_A
\bigl(\Ep{X\sim P}{\psi_h(z-X)}-\Ep{X\sim \Ph_n}{\psi_h(z-X)}\bigr)_+\,\mathsf{d}z}\\
&= (P\ast\psi_h)(A^c) + \int_A\Ep{X_1,\ldots,X_n\iidsim P}{
\bigl(\Ep{X\sim P}{\psi_h(z-X)}-\Ep{X\sim \Ph_n}{\psi_h(z-X)}\bigr)_+}\,\mathsf{d}z,
\end{align*}
where the last step holds by the Fubini--Tonelli theorem.
Next, for any $z\in\R^d$, we can calculate
\begin{align*}
&\hspace{-1.5em}\Ep{X_1,\ldots,X_n\iidsim P}{
\bigl(\Ep{X\sim P}{\psi_h(z-X)}-\Ep{X\sim \Ph_n}{\psi_h(z-X)}\bigr)_+}\\
&\leq\Ep{X_1,\ldots,X_n\iidsim P}{
\bigl(\Ep{X\sim\Ph_n}{\psi_h(z-X)}-\Ep{X\sim P}{\psi_h(z-X)}\bigr)^2}^{1/2}\\
&=\mathrm{Var}_{X_1,\ldots,X_n\iidsim P}
\left(\Ep{X\sim\Ph_n}{\psi_h(z-X)}\right)^{1/2}\\
&=\left[\frac1n\mathrm{Var}_{X_1\sim P}\bigl(\psi_h(z-X_1)\bigr)\right]^{1/2} \leq \frac{B}{h^d}\cdot \frac{1}{n^{1/2}},
\end{align*}
where the last step holds since $\sup_{z\in\R^d}\psi_h(z) = \sup_{z\in\R^d}h^{-d}\psi(z/h)\leq B/h^d$. Therefore,
\begin{multline*}\int_A\Ep{X_1,\ldots,X_n\iidsim P}{
\bigl(\Ep{X\sim P}{\psi_h(z-X)}-\Ep{X\sim \Ph_n}{\psi_h(z-X)}\bigr)_+}\,\mathsf{d}z\\
\leq \int_A \frac{B}{h^d}\cdot \frac{1}{n^{1/2}}\;\mathsf{d}z = \frac{B}{h^d}\cdot \frac{1}{n^{1/2}}\cdot \textnormal{Vol}(A),\end{multline*}
which completes the proof.
\end{proof}

\subsection{Distribution-free \dfucb that is bandwidth-adaptive and uniform over bandwidths}
\label{app:confidence_bounds_all_properties}

In this section, we combine the strategies developed earlier to construct a \dfucb for blurred TV that is simultaneously
(a) computationally efficient,
(b) uniformly valid over all bandwidths $h>0$, and
(c) adaptive to the choice of $h$. (An analogous construction is also possible to derive a \dflcb, but we do not state the details here.)

We begin by defining a Monte Carlo estimator of the monotonized blurred TV:
\[
\widehat{\dtv^{h,\mathrm{up}}}(\Ph_n,\Qh_m;B)
:=
\sup_{h_1 \in [h,\infty)}\widehat{\dtv^{h_1}}(\Ph_n,\Qh_m;B),
\]
where $\widehat{\dtv^{h_1}}(\Ph_n,\Qh_m;B)$ is the  Monte Carlo estimator defined in Section~\ref{sec:monte_carlo_approximation}.
Define also
\[\hat\Sigma_{n,m}^{h,\mathrm{up}} = \sup_{h_1\in[h,\infty)}\hat\Sigma_{n,m}^{h_1}\]
where $\hat\Sigma_{n,m}^h$ is defined as in Section~\ref{sec:confidence_bounds_bandwidth_adaptive}.

\begin{theorem}\label{thm:UCB_and_LCB_monotonized_TV}
Fix any $d\geq 1$, $n,m\geq 1$, $\psi\in\mathcal{K}_d$, $h>0$, and $\alpha\in(0,1)$. Define 
\begin{multline*}
\hat{U}^{h,\mathrm{up}}_\alpha(\mathcal{D}_{n,m},\hat{\Sigma}^{h,\mathrm{up}}_{n,m},B) = \widehat{\dtv^{h,\mathrm{up}}}(\Ph_n,\Qh_m;B)
+ \bigl(\hat\Sigma_{n,m}^{h,\mathrm{up}}\bigr)^{1/2}\,r_{1,\alpha/2(n\wedge m)}\\
+ \frac{1}{n\wedge m}
\left(\frac{r_{1,\alpha/2(n\wedge m)}^2}{3}
+ 2\sqrt{5}\,r_{1,\alpha/2(n\wedge m)} r_{2,\alpha/2(n\wedge m)}\right)+ \epsilon_{B,\alpha/2(n\wedge m)} + \frac{1}{n\wedge m},
\end{multline*}
Then $\hat{U}^{h,\mathrm{up}}_\alpha(\mathcal{D}_{n,m},\hat{\Sigma}^{h,\mathrm{up}}_{n,m},B)$ is a \dfucb for $\dtv^h(\cdot,\cdot)$, uniformly over all $h\in(0,\infty)$, at confidence level $1-\alpha$.
\end{theorem}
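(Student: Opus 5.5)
The plan is to combine the level-set discretization from the proof of Theorem~\ref{thm:ucb_uniform} with the pointwise bandwidth-adaptive bound of Theorem~\ref{thm:ucb_bandwidth_adaptive} (extended version) and the Monte Carlo bound of Theorem~\ref{thm:monte_carlo_ucb}. Fix $P,Q$. If $\dtv(P,Q)\le 1/(n\wedge m)$ there is nothing to prove, because the bound contains the additive term $1/(n\wedge m)$. Otherwise, let $k\le (n\wedge m)-1$ be the largest integer with $\dtv(P,Q)>k/(n\wedge m)$. For $i=1,\dots,k$, set $h_i=\sup\{h\ge0:\dtv^h(P,Q)\ge i/(n\wedge m)\}$. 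By Proposition~\ref{prop:blurred_tv&tv_properties} (continuity in $h$, the limit $\dtv$ as $h\to0$, and the limit $0$ as $h\to\infty$), each $h_i$ is finite and positive and satisfies $\dtv^{h_i}(P,Q)=i/(n\wedge m)$. These deterministic bandwidths depend only on $(P,Q)$, so the fixed-$h$ results can be applied at each of them.

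At each $h_i$ I would build a good event from two pieces, each of probability at least $1-\alpha/(2(n\wedge m))$. The first is the bandwidth-adaptive event: Theorem~\ref{thm:ucb_bandwidth_adaptive} (extended version) at level $\alpha/(2(n\wedge m))$ gives
\[
\dtv^{h_i}(P,Q)\le \dtv^{h_i}(\Ph_n,\Qh_m)+(\hat\Sigma^{h_i}_{n,m})^{1/2}r_1+\tfrac{1}{n\wedge m}\bigl(\tfrac{r_1^2}{3}+2\sqrt5 r_1r_2\bigr),
\]
where $r_j=r_{j,\alpha/2(n\wedge m)}$. The second is the Monte Carlo event: the conditional McDiarmid step from the proof of Theorem~\ref{thm:monte_carlo_ucb} gives $\dtv^{h_i}(\Ph_n,\Qh_m)\le \widehat{\dtv^{h_i}}(\Ph_n,\Qh_m;B)+\epsilon_{B,\alpha/2(n\wedge m)}$. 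Here I take the Monte Carlo estimator at every bandwidth to use the same common draws $(W_k,\xi_k)$, so that the supremum in $\widehat{\dtv^{h,\mathrm{up}}}$ is well defined. The conditional argument still holds at each fixed $h_i$. A union bound over $2k\le 2(n\wedge m)$ events then gives total failure probability at most $\alpha$.

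On the intersection of these events, fix any $h>0$ with $\dtv^h(P,Q)\ge 1/(n\wedge m)$, and let $i=\min\{k,\lfloor (n\wedge m)\dtv^h(P,Q)\rfloor\}$. As in the proof of Theorem~\ref{thm:ucb_uniform}, this gives $h\le h_i$ and $\dtv^h(P,Q)\le \dtv^{h_i}(P,Q)+1/(n\wedge m)$. Because $h_i\in[h,\infty)$, we have $\widehat{\dtv^{h_i}}\le \widehat{\dtv^{h,\mathrm{up}}}$ and $\hat\Sigma^{h_i}_{n,m}\le \hat\Sigma^{h,\mathrm{up}}_{n,m}$. Chaining the inequalities then yields $\dtv^h(P,Q)\le \hat U^{h,\mathrm{up}}_\alpha$. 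This monotonization step is exactly why the suprema over $[h,\infty)$ appear in the definition, and it is why $\hat\Sigma$ needs its own upper envelope: $\hat\Sigma^{h}$ need not be monotone in $h$.

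The main obstacle is a measurability and independence point: the Monte Carlo and variance events must hold at the data-independent points $h_i$ rather than at a data-chosen $h$. The construction handles this because the $h_i$ are functions of $(P,Q)$ alone, but two things need care. First, the conditional-on-data McDiarmid bound must be applied before any supremum over $h$ is taken. Second, the sup-based estimator must be measurable, which I would justify by continuity in $h$ of the Monte Carlo and $\hat\Sigma$ terms for fixed draws, so that the suprema can be restricted to rational $h$.
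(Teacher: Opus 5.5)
Your proposal is correct and follows essentially the same route as the paper. The paper first merges the bandwidth-adaptive bound and the conditional Monte Carlo bound into one fixed-$h$ bound at level $\alpha$ (split as $\alpha/2+\alpha/2$), reruns the level-set argument from the proof of Theorem~\ref{thm:ucb_uniform} at level $\alpha/(n\wedge m)$, and finishes with the same comparison $\hat\Sigma^{h_i}_{n,m}\le\hat\Sigma^{h,\mathrm{up}}_{n,m}$; this is exactly your union bound over $2k$ events of size $\alpha/(2(n\wedge m))$. One small remark: the measurability concern is not needed, since the target event contains the intersection of the fixed-$h_i$ events, which is all the argument uses. Also, your continuity-in-$h$ justification for the Monte Carlo term would require $\psi$ to be continuous, which the statement does not assume.
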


\begin{proof}
We build the confidence bound in stages.
We start from the bandwidth-adaptive \dfucb in
Theorem~\ref{thm:ucb_bandwidth_adaptive}, and then extend it to hold with a Monte Carlo approximation as in Theorem~\ref{thm:monte_carlo_ucb}. We then convert to a uniform bound (following the approach of Appendix~\ref{app:proof_of_uniform_confidence_bounds} for the proof of Theorem~\ref{thm:ucb_uniform} as a final step.

\paragraph{Constructing a Monte Carlo, bandwidth-adaptive \dfucb on $\dtv^h(P,Q)$.}
Fix any $h>0$. First define
\[
\hat{U}_{\alpha/2}(\mathcal{D}_{n,m},\hat\Sigma_{n,m}^{h})=\dtv^h(\Ph_n,\Qh_m)
+ \bigl(\hat\Sigma_{n,m}^{h}\bigr)^{1/2} r_{1,\alpha/2}
+ \frac{1}{n\wedge m}
\left(\frac{r_{1,\alpha/2}^2}{3}
+ 2\sqrt{5}\,r_{1,\alpha/2} r_{2,\alpha/2}\right)
\]
exactly as in Theorem~\ref{thm:ucb_bandwidth_adaptive} (but with $\alpha/2$ in place of $\alpha$). 
The result of that theorem tells us that
\[\PP{\dtv^h(P,Q)\leq \hat{U}_{\alpha/2}(\mathcal{D}_{n,m},\hat\Sigma_{n,m}^{h})} \geq 1-\alpha/2.\]
Next, as in the proof of Theorem~\ref{thm:monte_carlo_ucb},
\[
\PP{\dtv^{h}(\Ph_n,\Qh_m)
\le \widehat{\dtv^{h}}(\Ph_n,\Qh_m;B)
+ \epsilon_{B,\alpha/2}}\geq 1-\alpha/2.
\]
Therefore, defining
\begin{multline*}\hat{U}^h_\alpha(\mathcal{D}_{n,m},\hat\Sigma^h_{n,m},B) =  \widehat{\dtv^{h}}(\Ph_n,\Qh_m;B)\\
+ \bigl(\hat\Sigma_{n,m}^{h}\bigr)^{1/2} r_{1,\alpha/2}
+ \frac{1}{n\wedge m}
\left(\frac{r_{1,\alpha/2}^2}{3}
+ 2\sqrt{5}\,r_{1,\alpha/2} r_{2,\alpha/2}\right)+ \epsilon_{B,\alpha/2},\end{multline*}
we obtain
\begin{equation}\label{eqn:build_up_combined_UCB}\PP{\dtv^h(P,Q)\leq \hat{U}^h_\alpha(\mathcal{D}_{n,m},\hat\Sigma_{n,m}^{h},B)} \geq 1-\alpha,\end{equation}
by a union bound. Hence, $\hat{U}^h_\alpha(\mathcal{D}_{n,m},\hat\Sigma_{n,m}^{h},B)$ is a \dfucb (for any fixed bandwidth $h>0$), that is bandwidth-adaptive and is constructed via a Monte Carlo approximation.

\paragraph{Converting to a uniform bound.}
Following an identical argument as in the proof of Theorem~\ref{thm:ucb_uniform}, we can leverage the fact that~\eqref{eqn:build_up_combined_UCB} holds for any fixed $h>0$ and any $\alpha\in(0,1)$ to obtain
\[\PP{\dtv^h(P,Q)\leq \sup_{h_1\in[h,\infty)}\hat{U}^{h_1}_{\alpha/(n\wedge m)}(\mathcal{D}_{n,m},\hat\Sigma_{n,m}^{h_1},B)+\frac{1}{n\wedge m}\ \forall \ h\in(0,\infty)}\geq 1-\alpha.\]
Noting that $\hat\Sigma^{h,\mathrm{up}}_{n,m} \geq \hat\Sigma_{n,m}^{h_1}$ for all $h_1\in[h,\infty)$, by construction, this implies
\[\sup_{h_1\in[h,\infty)}\hat{U}^{h_1}_{\alpha/(n\wedge m)}(\mathcal{D}_{n,m},\hat\Sigma_{n,m}^{h_1},B)
\leq \sup_{h_1\in[h,\infty)}\hat{U}^{h_1}_{\alpha/(n\wedge m)}(\mathcal{D}_{n,m},\hat\Sigma_{n,m}^{h,\mathrm{up}},B).\]
Combining all  these calculations completes the proof.
\end{proof}

\end{document}

%% file: highdim_illustration_tikz.tex
\scalebox{0.9}{
\begin{tikzpicture}

\tikzmath{\y0 = 0; \y1 = -2.5; \ymid = -1.25;}
    \tikzmath{\xstart = 0; \xend=12;}
    \tikzmath{\xlo0 = 2; \xlo1 = 7; \xhi=9;}
    \tikzmath{\ytick = 0.15;}
    \tikzmath{\xbrace = 0.1; \ybrace=0.1;}
    \tikzmath{\xlab1 = 2.25; \xlab2 = 10.5;}
    \tikzmath{\xmida1 = 1; \xmidb1 = 3.5; \xmid2 = 10.5;}
    \tikzmath{\xgap = 0.6; \ygap = 0.5;}
    
    \draw[thick, ->] (\xstart, \y0) -- (\xend, \y0);
    \draw[thick] (\xstart, \y0-\ytick) -- (\xstart, \y0+\ytick) node[anchor=south,yshift=3] {$h=0$};
    \draw[thick] (\xlo0, \y0-\ytick) -- (\xlo0, \y0+\ytick) node[anchor=south] {$h\asymp \frac{1}{n^{2/d}}$};
    \draw[thick] (\xhi, \y0-\ytick) -- (\xhi, \y0+\ytick) node[anchor=south,yshift=3] {$h\asymp 1$};

    \draw[thick, ->] (\xstart, \y1) -- (\xend, \y1);
    \draw[thick] (\xstart, \y1+\ytick) -- (\xstart, \y1-\ytick) node[anchor=north,yshift=-3] {$h=0$};
    \draw[thick] (\xlo1, \y1+\ytick) -- (\xlo1, \y1-\ytick) node[anchor=north] {$h\asymp \frac{1}{\sqrt{\log n}}$};
    \draw[thick] (\xhi, \y1+\ytick) -- (\xhi, \y1-\ytick) node[anchor=north,yshift=-3] {$h\asymp 1$};

    \node[align = center] at (\xlab1,\ymid) {\begin{tabular}{c}Impossible to construct\\a meaningful UCB\end{tabular}};
    \node[align = center] at (\xlab2,\ymid) {\begin{tabular}{c}Blurred TV\\is trivial\end{tabular}};

    \draw[thick, ->] (\xlab1-\xgap,\ymid+\ygap) -- (\xmida1,\y0-\ygap);
    \draw[thick, ->] (\xlab1+\xgap,\ymid-\ygap) -- (\xmidb1,\y1+\ygap);
    \draw[thick, ->] (\xlab2,\ymid+\ygap) -- (\xmid2,\y0-\ygap);
    \draw[thick, ->] (\xlab2,\ymid-\ygap) -- (\xmid2,\y1+\ygap);

    \draw [decorate,decoration={brace,amplitude=10pt}, thick, rotate=180](-\xlo0+\xbrace,-\y0+\ybrace) -- (-\xstart-\xbrace,-\y0+\ybrace);
    \draw [decorate,decoration={brace,amplitude=10pt}, thick, rotate=180](-\xend+\xbrace,-\y0+\ybrace) -- (-\xhi-\xbrace,-\y0+\ybrace);
    \draw [decorate,decoration={brace,amplitude=10pt}, thick](\xstart+\xbrace,\y1+\ybrace) -- (\xlo1-\xbrace,\y1+\ybrace);
    \draw [decorate,decoration={brace,amplitude=10pt}, thick](\xhi+\xbrace,\y1+\ybrace) -- (\xend-\xbrace,\y1+\ybrace);

    \node[align = center] at (\xstart-2.5, \y0) {\begin{tabular}{c}Low-dim.\\ ($d=\mathcal{O}(1)$)\end{tabular}};

    \node[align = center] at (\xstart-2.5, \y1) {\begin{tabular}{c}High-dim.\\ ($d\gg \log n$)\end{tabular}};

\end{tikzpicture}}